\documentclass{article}

\usepackage[preprint]{main}

\usepackage[utf8]{inputenc} % allow utf-8 input
\usepackage[T1]{fontenc}    % use 8-bit T1 fonts
\usepackage{hyperref}       % hyperlinks
\usepackage{url}            % simple URL typesetting
\usepackage{booktabs}       % professional-quality tables
\usepackage{amsfonts}       % blackboard math symbols
\usepackage{nicefrac}       % compact symbols for 1/2, etc.
\usepackage{microtype}      % microtypography
\usepackage{xcolor}         % colors

\usepackage{graphicx}
\usepackage{subfigure}
\usepackage{wrapfig}

\usepackage{amsmath} 
\usepackage{algorithm}
\usepackage{algpseudocode}
\usepackage{hyperref}
\usepackage[nameinlink,capitalize]{cleveref}

\usepackage{bbm}
\usepackage{amsmath}
\usepackage{amssymb}
\usepackage{mathtools}
\usepackage{amsthm}
\usepackage{mathrsfs}
\usepackage{microtype}

\theoremstyle{plain}
\newtheorem{theorem}{Theorem}[section]
\newtheorem{proposition}[theorem]{Proposition}
\newtheorem{lemma}[theorem]{Lemma}
\newtheorem{corollary}[theorem]{Corollary}
\theoremstyle{definition}
\newtheorem{definition}[theorem]{Definition}
\newtheorem{assumption}[theorem]{Assumption}
\theoremstyle{remark}
\newtheorem{remark}[theorem]{Remark}

\usepackage{multirow}

\title{Matrix-Space Reinforcement Learning for Reusing Local Transition Geometry}

\author{%
    Zuyuan Zhang\\
    The George Washington University\\
    \texttt{zuyuan.zhang@gwu.edu}\\
    \And
    Carlee Joe-Wong\\
    Carnegie Mellon University\\
    \texttt{cjoewong@andrew.cmu.edu}\\
    \And
    Tian Lan\\
    The George Washington University\\
    \texttt{tlan@gwu.edu}
}

\begin{document}

\maketitle

\begin{abstract}
Compositional generalization in sequential decision-making requires identifying which parts of prior rollouts remain useful for new tasks. Existing methods reuse skills or predictive models, but often overlook rich local transition geometry and dynamics. We propose Matrix-Space Reinforcement Learning (MSRL), a geometric abstraction that represents trajectory segments through positive semidefinite matrix descriptors aggregating first- and second-order statistics of lifted one-step transitions. These descriptors expose shared hidden structure, support algebraic composition in an abstract matrix space, and reveal opportunities for transfer. We prove that the descriptor is well defined up to coordinate gauge, complete for the induced low-order additive signal class, additive under valid segment composition, and minimally sufficient among admissible additive descriptors. We further show that conditioning value functions on the trajectory-segment matrix yields a first-order smooth approximation of action values, enabling source-learned matrix-to-value mappings to bootstrap learning in new tasks. MSRL is plug-in compatible with standard model-free and model-based methods, while obstruction filtering rejects implausible compositions. Empirically, MSRL achieves the best average finite-budget target AUC of 0.73, outperforming MSRL from scratch (0.65), TD-MPC-PT+FT (0.63), and TD-MPC (0.57).

\end{abstract}

\section{Introduction}

Sequential decision making is formalized through states, actions, rewards, transition laws, and Bellman recursions \citep{bellman1966dynamic,puterman2014markov,sutton1998reinforcement}. Modern reinforcement learning released the potential of this formalism through highly-capable value and policy models that can often learn highly complex policies \citep{watkins1992q,mnih2015human,schulman2017proximal,haarnoja2018soft,zhang2026cochain}. Reusing components/knowledge across related tasks, however, remains difficult: even when two tasks share similar local geometry or dynamics, it is not obvious how to identify what underlying structures could enable generalization, how to leverage trajectories/demonstrations of different appearances, or how to harness them to improve value/policy learning without incurring infeasible behaviors in decision-making \citep{taylor2009transfer,lazaric2012transfer,rusu2016progressive,zhang2026operator,zhang2026structuring}. Existing methods reuse temporally extended skills, predictive occupancies, goal-conditioned values, or behaviorally compressed states \citep{sutton1999between,ravindran2003smdp,barreto2019option,dayan1993improving,barreto2017successor,barreto2018transfer,machado2017eigenoption,schaul2015universal,andrychowicz2017hindsight,ferns2004metrics,givan2003equivalence,li2006towards,abel2016near,castro2020scalable,zhang2020learning,zhang2026geometry,zhang2025learning}, but they usually do not leverage rich local geometry and dynamics (such as transitions, structures, and replay segments) to identify opportunities for compositional generalization---the ability of an AI/ML agent to
generate novel, complex combinations of known components/knowledge and translate them to new settings.

This paper proposes a novel geometric abstraction to quantify local transition dynamics/geometry and to enable efficient compositional generalization in sequential decision-making. We view a trajectory segment as a finite consecutive block of transitions from a rollout, where segments may share common prefixes, suffixes, full rollouts, or partial overlaps. This allows us to describe trajectories/demonstrations of different appearances (e.g., horizon length, task realizations, and action/space representations) through a common set of geometric descriptors within one unified space. These descriptors make shared hidden structures explicit, distinguish algebraic composition in an abstract representation space, and reveal opportunities for compositional generalization. We detail our \textbf{four contributions} below.

\textbf{\textit{(1) We introduce the trajectory-segment matrix \(M(\tau)\) as a geometric descriptor}} (Section~\ref{sec:traj-matrix}) to capture sufficient local geometry and dynamics information to enable generalizability. Given a trajectory segment
$
\tau=((o_t,a_t,r_t,o_{t+1}))_{t=0}^{T-1},
$
where \(o_t\) and \(o_{t+1}\) are consecutive observations, \(a_t\) is the action, \(r_t\) is the reward, and \(T\) is the segment length, we define a fixed step lift \(\psi_t(\tau)\) that records the embedded transition midpoint, directed displacement, action-conditioned midpoints and displacements, reward, and a constant channel. We then define
$
M(\tau)=\sum_{t=0}^{T-1}\psi_t(\tau)\psi_t(\tau)^\top,
$
a covariance-like matrix aggregating second-order statistics of lifted one-step transitions in a positive semidefinite cone \citep{tuzel2006region,pennec2006riemannian,muandet2017kernel,johnson1970positive}.

The goal of this abstraction is to capture critical local geometry and dynamics that induce compositional generalization structures into value learning, while capturing only minimal information needed for sequential decision-making. More precisely, the abstraction is chosen to retain the low-order transition information used by the value learning: where transitions occur, which directions and actions induce movement, how rewards align with them, and how first-order totals accumulate \citep{kondor2002diffusion,gartner2003graph,mahadevan2007proto}. Because \(M(\tau)\) is a commutative sum, it does not encode the full chronological sequence \citep{graves2012long,vaswani2017attention} which often has different appearances across tasks or environments. The benefit of our proposed abstraction is a transparent algebra: harnessing compatible segments in compositional generalization becomes matrix addition (while preserving structure of the abstraction), and overlapping local restrictions can be assembled by inclusion--exclusion operators.

\textbf{\textit{(2) We prove a set of key foundational properties of the proposed and geometric descriptor}} (Section~\ref{sec:traj-matrix-theory}). We prove that \(M(\tau)\) is well defined up to coordinate gauge, complete for the induced low-order additive signal class, additive under valid segment composition, recoverable from overlapping restrictions, and separable from environment-level realizability through an obstruction score. We also prove that \(M(\tau)\) is minimally sufficient among admissible additive descriptors. These results establish an algebra on the \(M(\tau)\) construction and justify the use of the proposed abstraction for transferring and learning value functions in sequential decision-making.

\textbf{\textit{(3) We develop a matrix-space reinforcement learning (MSRL) theory}} (Section~\ref{sec:matrix-rl}) by first proving a matrix-conditioned value law and local smoothness for learning the value function in Section~\ref{sec:matrix-value}. We show that conditioning on the underlying matrix abstraction provides a first order, smooth approximation of the action-value function: $
\bar Q^\star(o,M_0+M_1,a)
\approx
\bar Q^\star(o,M_0,a)+\bar Q^\star(o,M_1,a)-\bar Q^\star(o,0,a).
$ for compositional matrix perturbations $M_0$ and $M_1$. This establishes the foundation for generalization using value functions learned on existing geometry and dynamics. MSRL pretrains an \(M\!\to\!V\) branch on sources, initializes the target critic, and learns residual task-specific value from target data. Our method is compatible with standard model-free and model-based baselines, where a matrix-to-value mapping learned on source environments bootstraps value-function learning in new tasks 
%initializes target-side training 
\citep{fujimoto2018addressing,janner2019trust,hafner2025mastering,hansen2022temporal,taylor2009transfer,lazaric2012transfer,rusu2016progressive}.

\textbf{\textit{(4) We validate our MSRL method}} using source-side structural diagnostics and source-initialized transfer from simple environments to complex target tasks. MSRL consistently shows higher few-shot AUC than both vanilla RL and transfer learning baselines under matched target interaction budgets, demonstrating its enabling of compositional generalization.

We finally conclude with a discussion of limitations in Section~\ref{sec:conclusion}.

\section{Matrix Proxy for Sampled Compositional Structure}
\label{sec:traj-matrix}
This section defines the descriptor used throughout the paper. The goal is to turn observed rollout data into a fixed-dimensional object that can be stored, compared, composed with other segments, and supplied to a matrix-conditioned value model. A trajectory segment is any finite consecutive block of transitions from a rollout; different segments may have different lengths and may overlap when extracted from the same trajectory. Once a common step lift is fixed, every such segment is mapped to one matrix in a shared positive semidefinite cone.

\paragraph{Fixed step lift.}
For each environment \(e\), raw observations and actions may first be mapped by environment-specific adapters into common spaces \(\mathcal O\) and \(\mathcal A\). We call a family of environments \emph{aligned} when these adapters are fixed and all matrices that are compared, composed, or transferred use the same embedded observation map, action map, dimensions, and block order. This is an assumption on the transfer setting, not a claim that arbitrary environments are automatically aligned. Fix
$
\phi:\mathcal O\to\mathbb R^d,
\qquad
\xi:\mathcal A\to\mathbb R^q,
$
for the aligned family under study.

For a finite observable trajectory segment
$
\tau=((o_t,a_t,r_t,o_{t+1}))_{t=0}^{T-1},
$
the local index \(t\) ranges over the \(T\) transitions inside this segment; \(o_t\) and \(o_{t+1}\) are consecutive observations, \(a_t\) is the action applied at the transition, and \(r_t\) is the observed reward. Define
$
x_t:=\phi(o_t),\quad
x_t^+:=\phi(o_{t+1}),\quad
c_t:=\frac{x_t+x_t^+}{2},\quad
\delta_t:=x_t^+-x_t.
$
Here \(c_t\) is the midpoint of the embedded one-step transition, and \(\delta_t\) is its directed displacement. To place local position, directed displacement, action dependence, reward, and additive first-order statistics in one common space, define the segment-dependent lifted vector
$
\psi_t(\tau)
:=
\begin{bmatrix}
c_t^\top &
\delta_t^\top &
\bigl(\xi(a_t)\otimes c_t\bigr)^\top &
\bigl(\xi(a_t)\otimes \delta_t\bigr)^\top &
r_t &
1
\end{bmatrix}^{\top}
\in \mathbb{R}^{m},
$
where
$m:=2d+2qd+2.
$
All lifted quantities are assumed finite throughout the definition.

\begin{definition}[Matrix proxy induced by a trajectory segment]
\label{def:traj-matrix-final}
The matrix proxy of \(\tau\) is
$
M(\tau):=\sum_{t=0}^{T-1}\psi_t(\tau)\psi_t(\tau)^\top \in \mathbb S_+^m.
$
\end{definition}

The outer-product aggregation can be viewed as a finite-dimensional PSD descriptor of lifted local transitions; PSD matrix geometry, covariance descriptors, kernel embeddings, and graph kernels provide useful background for this representation choice \citep{johnson1970positive,tuzel2006region,muandet2017kernel,vishwanathan2010graph,shervashidze2011weisfeiler}. The role here is operational: segments, prefixes, and full rollouts all have the same matrix dimension, so a replay library can store them as reusable increments and a single value model can evaluate accumulated matrix states.

Because \(M(\tau)\) is a commutative sum of one-step outer products, it records low-order additive statistics of the lifted transitions rather than the full chronological sequence. It may distinguish a segment from its time reversal when the lifted transition vectors change, but it does not distinguish two segments that contain the same multiset of lifted one-step vectors in a different order. Symbolic transition-pattern examples are deferred to Appendix~\ref{app:symbolic-subgraph-patterns}; blockwise interpretation is deferred to Appendix~\ref{app:matrix-blocks}.

\section{Structural Guarantees of the Matrix Proxy}
\label{sec:traj-matrix-theory}

The six properties below are the requirements needed later for matrix-space control. Gauge covariance makes comparisons meaningful under aligned coordinate changes. Completeness specifies exactly which low-order additive signals are retained. Composition and gluing justify reusing stored local segments in a common matrix algebra. Minimal sufficiency states that, within the admissible additive class, no additional history representation is needed to recover those signals. Obstruction separates formal matrix compositions from candidates that are realizable in the environment.

\subsection{Gauge covariance and well-definedness}

\begin{theorem}[Well-definedness up to gauge]
\label{thm:traj-well-defined}
For every finite trajectory segment satisfying Definition~\ref{def:traj-matrix-final}, \(M(\tau)\) is well defined, symmetric, and positive semidefinite. Moreover, if
$
\widetilde \phi(o)=S\phi(o),\qquad \widetilde \xi(a)=U\xi(a),
$
with \(S\in GL(d)\) and \(U\in GL(q)\), then
$
\widetilde M(\tau)=H M(\tau)H^\top,
\qquad
H:=\operatorname{BlkDiag}(S,S,U\otimes S,U\otimes S,1,1).
$
Hence the intrinsic descriptor is the congruence class \([M(\tau)]\), rather than any particular coordinate realization.
\end{theorem}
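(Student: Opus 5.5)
The plan has two parts: the intrinsic properties of \(M(\tau)\), then the gauge law. The second part reduces to showing that the lift itself transforms linearly, \(\widetilde\psi_t(\tau)=H\psi_t(\tau)\), for every transition index \(t\).

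\emph{Intrinsic properties.} The segment has \(T<\infty\) transitions, and all lifted quantities are assumed finite. Hence each \(\psi_t(\tau)\in\mathbb R^m\) is a well-defined vector, and \(M(\tau)\) is a finite sum of well-defined matrices. Each summand is symmetric, since \((\psi_t\psi_t^\top)^\top=\psi_t\psi_t^\top\). For any \(v\in\mathbb R^m\),
\[
v^\top M(\tau)v=\sum_{t}(\psi_t^\top v)^2\ge 0 .
\]
So \(M(\tau)\in\mathbb S_+^m\). One point deserves a remark: \(M(\tau)\) depends on \(\tau\) only through the multiset of lifted vectors. It therefore does not depend on any choice of indexing beyond the one that Definition~\ref{def:traj-matrix-final} already fixes.

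\emph{Gauge law.} I will compute \(\widetilde\psi_t\) block by block, with \(\widetilde x_t=Sx_t\) and \(\widetilde x_t^+=Sx_t^+\).

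\begin{enumerate}
\item By linearity of \(S\), \(\widetilde c_t=Sc_t\) and \(\widetilde\delta_t=S\delta_t\).
\item For the action blocks, the mixed-product property of the Kronecker product gives
\[
\widetilde\xi(a_t)\otimes\widetilde c_t=(U\xi(a_t))\otimes(Sc_t)=(U\otimes S)\bigl(\xi(a_t)\otimes c_t\bigr),
\]
and the same identity holds with \(\delta_t\) in place of \(c_t\).
\item The reward entry \(r_t\) and the constant entry \(1\) do not involve \(\phi\) or \(\xi\), so they are unchanged.
\end{enumerate}

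Stacking these blocks in the order fixed by the definition gives \(\widetilde\psi_t=H\psi_t\), with \(H\) the stated block-diagonal matrix. Then
\[
\widetilde M(\tau)=\sum_t H\psi_t\psi_t^\top H^\top=HM(\tau)H^\top.
\]
The only step needing care is the Kronecker identity. Its ordering convention (\(\xi\otimes c\), with \(U\) acting on the first factor) must match the block \(U\otimes S\) in \(H\). I will check this against the definition of \(\psi_t\) rather than take it for granted.

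\emph{Congruence class.} The matrix \(H\) is invertible. Indeed, \(S\) and \(U\) are invertible, \(U\otimes S\) is invertible with inverse \(U^{-1}\otimes S^{-1}\), and a block-diagonal matrix with invertible blocks is invertible. So \(H\) lies in the subgroup \(G\subset GL(m)\) consisting of such block-diagonal matrices. The map \((S,U)\mapsto H\) is a group homomorphism, because \((U_1\otimes S_1)(U_2\otimes S_2)=U_1U_2\otimes S_1S_2\). Hence the relation \(M\sim HMH^\top\) for \(H\in G\) is an equivalence relation, as a group action is. Consequently, every coordinate realization of \(M(\tau)\) under admissible changes of \(\phi\) and \(\xi\) lies in a single orbit \([M(\tau)]\). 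This justifies treating the orbit, rather than any one matrix, as the intrinsic descriptor. Congruence by an invertible \(H\) also preserves the PSD property, symmetry, and rank, so these properties are well defined on the class.

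I expect no real obstacle in this argument. The main risk is bookkeeping: keeping the block order and the Kronecker convention consistent with \(H\), and stating precisely that "intrinsic" means invariance under the group generated by gauge changes, not under arbitrary congruences.
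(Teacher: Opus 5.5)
Your proposal is correct and follows the paper's proof essentially step for step. Both arguments use finiteness of the lifted vectors, symmetry of each summand, $v^\top M(\tau)v=\sum_t(\psi_t^\top v)^2\ge 0$, $\widetilde\psi_t=H\psi_t$ via the mixed-product identity $(U\xi)\otimes(Sc)=(U\otimes S)(\xi\otimes c)$ (so your ordering convention does check out), and invertibility of $H$ with inverse $\operatorname{BlkDiag}(S^{-1},S^{-1},U^{-1}\otimes S^{-1},U^{-1}\otimes S^{-1},1,1)$. Your observation that $(S,U)\mapsto H$ is a homomorphism, so that $[M(\tau)]$ is a genuine orbit of the gauge group, makes precise what the paper only asserts; the one case the paper mentions explicitly and you leave implicit is the empty segment $T=0$, where $M(\tau)=0$ is trivially symmetric and PSD.
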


This gauge-covariant reading is consistent with geometric representation-learning principles in which the coordinate realization is less important than the transformation law of the descriptor \citep{bronstein2021geometric}.

The next definition fixes the information claimed to be preserved. The class is intentionally low-order: it contains the additive first-order and quadratic signals exposed by the lift, which are the signals used by the matrix-conditioned value interface below.

\subsection{Completeness for the chosen low-order class}

\begin{definition}[Low-order additive compositional class]
\label{def:qagc}
Write
$
\bar\psi_t
:=
\begin{bmatrix}
c_t^\top &
\delta_t^\top &
\bigl(\xi(a_t)\otimes c_t\bigr)^\top &
\bigl(\xi(a_t)\otimes \delta_t\bigr)^\top &
r_t
\end{bmatrix}^{\top}
\in \mathbb{R}^{m-1},
\qquad
\psi_t
=
\begin{bmatrix}
\bar\psi_t^\top & 1
\end{bmatrix}^{\top}.
$
Let \(\mathfrak F_{\mathrm{aq}}\) be the class of trajectory-segment functionals \(F\) for which there exist \(a\in\mathbb R\), \(b\in\mathbb R^{m-1}\), and \(C\in\mathbb S^{m-1}\) such that
$
F(\tau)
=
\sum_{t=0}^{T-1}
\Bigl(
a+b^\top \bar\psi_t+\bar\psi_t^\top C\bar\psi_t
\Bigr).
$
Thus \(\mathfrak F_{\mathrm{aq}}\) contains exactly the additive first-order and quadratic statistics induced by the fixed lift. These are the low-order signals of sampled compositional structure preserved by the descriptor.
\end{definition}

\begin{theorem}[Completeness]
\label{thm:traj-complete}
For every \(F\in\mathfrak F_{\mathrm{aq}}\), there exists \(A\in\mathbb S^m\) such that
$
F(\tau)=\langle A,M(\tau)\rangle
\qquad
\text{for every finite trajectory segment }\tau.
$
Conversely, every \(A\in\mathbb S^m\) defines a functional in \(\mathfrak F_{\mathrm{aq}}\) through the same formula. Consequently,
$
M(\tau)=M(\tau')
\quad\Longleftrightarrow\quad
F(\tau)=F(\tau')
\ \ \text{for all }F\in\mathfrak F_{\mathrm{aq}}.
$
Equivalently, \(M(\tau)\) captures exactly the chosen low-order additive information induced by the fixed lift.
\end{theorem}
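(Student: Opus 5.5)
The proof rests on one identity: every quadratic form in the augmented vector \(\psi_t=[\bar\psi_t^\top\ 1]^\top\) is exactly an inhomogeneous quadratic in \(\bar\psi_t\). Combined with linearity of the trace inner product over the sum defining \(M(\tau)\), this gives both directions.

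For the forward direction, fix \(F\in\mathfrak F_{\mathrm{aq}}\) with data \((a,b,C)\). We take the block matrix
\[
A:=\begin{bmatrix} C & \tfrac12 b\\ \tfrac12 b^\top & a\end{bmatrix}\in\mathbb S^m .
\]
A one-line expansion gives \(\psi_t^\top A\psi_t=\bar\psi_t^\top C\bar\psi_t+b^\top\bar\psi_t+a\). Then
\[
\langle A,M(\tau)\rangle=\operatorname{tr}\Bigl(A\sum_t\psi_t\psi_t^\top\Bigr)=\sum_t\psi_t^\top A\psi_t=F(\tau).
\]
This holds for every finite segment, because \(A\) does not depend on \(\tau\) or on \(T\).

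For the converse, take any \(A\in\mathbb S^m\) and partition it conformally as \(\begin{bmatrix} C & v\\ v^\top & a\end{bmatrix}\), with \(C\in\mathbb S^{m-1}\) because \(A\) is symmetric. We set \(b:=2v\). The same expansion read backwards shows that \(\tau\mapsto\langle A,M(\tau)\rangle\) has the form required in Definition~\ref{def:qagc}. Together, the two directions show that the map \(A\mapsto\langle A,M(\cdot)\rangle\) is onto \(\mathfrak F_{\mathrm{aq}}\).

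For the equivalence, suppose first that \(M(\tau)=M(\tau')\). Then every \(F=\langle A,M(\cdot)\rangle\) agrees on \(\tau\) and \(\tau'\). Conversely, suppose \(F(\tau)=F(\tau')\) for all \(F\in\mathfrak F_{\mathrm{aq}}\). By the converse direction, this means \(\langle A,M(\tau)-M(\tau')\rangle=0\) for all \(A\in\mathbb S^m\). Since \(M(\tau)-M(\tau')\) is itself symmetric, we may choose \(A=M(\tau)-M(\tau')\), which gives \(\|M(\tau)-M(\tau')\|_F^2=0\) and hence \(M(\tau)=M(\tau')\).

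The only delicate step is the factor-of-two bookkeeping on the off-diagonal block, together with checking that symmetry of \(A\) is enough to recover the symmetric \(C\) required in Definition~\ref{def:qagc}. Because of the constant channel, segments of different lengths need no special treatment: the length \(T\) sits in the \((m,m)\) entry of \(M(\tau)\), and the constant \(a\) is absorbed there.
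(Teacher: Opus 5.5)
Your proposal is correct and follows essentially the same route as the paper's proof: the same block matrix with \(\tfrac12 b\) off-diagonal blocks for the forward direction, the conformal partition with \(b=2u\) for the converse, and the choice \(A=M(\tau)-M(\tau')\) to obtain a vanishing Frobenius norm for the equivalence.
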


The next two results give the representation-level algebra. A valid concatenation \(\tau=\tau^{(1)}\star\tau^{(2)}\) means that \(\tau^{(1)}\) and \(\tau^{(2)}\) are consecutive segment blocks from the same rollout, or from compatible rollouts whose terminal and initial observations match under the aligned representation. Such concatenation becomes matrix addition, while overlapping local restrictions combine through inclusion--exclusion in the same ambient space.

\subsection{Composition and gluing}

\begin{theorem}[Composition]
\label{thm:traj-composition}
If
$
\tau=\tau^{(1)}\star \tau^{(2)}
$
is a valid concatenation, then
$
M(\tau)=M(\tau^{(1)})+M(\tau^{(2)}).
$
Therefore concatenated segments induce composed descriptors in the same matrix space.
\end{theorem}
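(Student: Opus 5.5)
The argument rests on one observation: the lifted vector \(\psi_t(\tau)\) depends only on the single transition \((o_t,a_t,r_t,o_{t+1})\), and not on its position in the segment or on neighbouring transitions. Concretely, \(c_t\) and \(\delta_t\) are built from \(\phi(o_t)\) and \(\phi(o_{t+1})\); the two Kronecker blocks use \(\xi(a_t)\); and the last two entries are \(r_t\) and the constant \(1\). Once this is in place, the composition law reduces to splitting a finite sum over the index set.

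\textbf{Setup.} Write \(\tau^{(1)}=((o_t,a_t,r_t,o_{t+1}))_{t=0}^{T_1-1}\) and \(\tau^{(2)}=((o'_s,a'_s,r'_s,o'_{s+1}))_{s=0}^{T_2-1}\). A valid concatenation \(\tau=\tau^{(1)}\star\tau^{(2)}\) has length \(T=T_1+T_2\). Its transitions are those of \(\tau^{(1)}\) for \(t<T_1\), and the reindexed transitions of \(\tau^{(2)}\), shifted by \(T_1\), for \(t\ge T_1\). The validity condition says the terminal observation of \(\tau^{(1)}\) and the initial observation of \(\tau^{(2)}\) agree under the aligned representation, i.e.\ \(\phi(o_{T_1})=\phi(o'_0)\). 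I would remark that this condition guarantees \(\tau\) is itself a well-formed segment. It is \emph{not} needed for the algebraic identity, since no lifted vector straddles the junction.

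\textbf{Step 1 (locality of the lift).} For \(t<T_1\), the \(t\)-th transition of \(\tau\) coincides with the \(t\)-th transition of \(\tau^{(1)}\). Because the same fixed maps \(\phi,\xi\) are used for the whole aligned family, this gives \(\psi_t(\tau)=\psi_t(\tau^{(1)})\). For \(t=T_1+s\) with \(0\le s<T_2\), the same reasoning gives \(\psi_t(\tau)=\psi_s(\tau^{(2)})\).

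\textbf{Step 2 (splitting the sum).} Using Definition~\ref{def:traj-matrix-final},
\[
M(\tau)=\sum_{t=0}^{T_1-1}\psi_t(\tau)\psi_t(\tau)^\top+\sum_{s=0}^{T_2-1}\psi_{T_1+s}(\tau)\psi_{T_1+s}(\tau)^\top .
\]
By Step 1, the first sum equals \(M(\tau^{(1)})\) and the second equals \(M(\tau^{(2)})\). Finiteness of all lifted quantities, as used in Theorem~\ref{thm:traj-well-defined}, ensures every term is well defined. Since the two summands lie in \(\mathbb S_+^m\), their sum stays in the same PSD cone, which gives the final sentence of the statement.

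\textbf{Main obstacle.} The only delicate point is Step 1. One must check that the definition of \(\psi_t(\tau)\), which the paper calls ``segment-dependent'', really depends only on the \(t\)-th tuple, and not on segment-level quantities such as \(T\) or any normalization. This holds by inspection of the displayed lift. For compatible but distinct rollouts, it also requires that both pieces are embedded with the same adapters, which is exactly the alignment assumption.
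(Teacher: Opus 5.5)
Your proposal is correct and follows essentially the same route as the paper's proof. Both identify the lifted vectors of \(\tau\) with those of \(\tau^{(1)}\) and \(\tau^{(2)}\), note that no outer product straddles the junction, split the finite sum at \(T_1\), and reindex the second part. Your side remarks are accurate and consistent with the paper: the boundary-matching condition is not needed for the algebraic identity itself, and the sum stays in \(\mathbb S_+^m\).
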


Each local segment contributes one matrix increment interpreted as sampled local structure. The theorem says that compositional assembly is linear in matrix space even though the raw observations arrive in sequential form.

\begin{theorem}[Local-to-global gluing]
\label{thm:traj-gluing}
Let \(\{I_k\}_{k=1}^K\) be an interval cover of \(\{0,\dots,T-1\}\). For each nonempty \(S\subseteq\{1,\dots,K\}\), write
$
I_S:=\bigcap_{k\in S} I_k,
\qquad
\tau_S:=\tau|_{I_S}
\quad\text{when }I_S\neq\varnothing.
$
Here \(\tau|_{I_S}\) denotes the subsegment containing exactly the transitions whose local indices lie in \(I_S\). Let
$
\mathcal N:=\{S\subseteq\{1,\dots,K\}: S\neq\varnothing,\ I_S\neq\varnothing\}.
$
Then
$
M(\tau)
=
\sum_{S\in\mathcal N}
(-1)^{|S|+1} M(\tau_S).
$
Hence the global matrix is recovered exactly from overlapping local restrictions by inclusion--exclusion: matrices from individual intervals include all local transitions, and matrices from nonempty intersections subtract the duplicate transition contributions introduced by overlaps.

More generally, any family of matrices \(\{M_S\}_{S\in\mathcal N}\) indexed by the same nonempty overlaps defines a unique candidate glued matrix
$
\widehat M
:=
\sum_{S\in\mathcal N}
(-1)^{|S|+1} M_S
\in\mathbb S^m.
$
This is a representation-level local-to-global construction in the same ambient dimension. Whether \(\widehat M\) is realizable by an actual trajectory in a given environment is addressed separately by Theorem~\ref{thm:traj-obstruction}.
\end{theorem}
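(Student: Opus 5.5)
The plan is to reduce the statement to a pointwise counting identity on the transition indices. Since \(M(\tau_S)\) depends only on the transitions inside \(I_S\), write \(M(\tau_S)=\sum_{t\in I_S}\psi_t(\tau)\psi_t(\tau)^\top\), with \(\psi_t(\tau)\) the lifted vector of the original segment. We use the fact that the lift of a transition depends only on that transition's own tuple \((o_t,a_t,r_t,o_{t+1})\). Hence restricting to a subsegment does not change the lifted vectors of the retained transitions, and no boundary effect appears. This is the same observation behind Theorem~\ref{thm:traj-composition}, and applying it to the sub-block \(I_S\) gives the displayed formula. Empty intersections contribute the zero matrix, so summing over \(\mathcal N\) is the same as summing over all nonempty \(S\subseteq\{1,\dots,K\}\) with \(M(\tau_\varnothing):=0\).

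Next I exchange the order of summation:
\[
\sum_{S\in\mathcal N}(-1)^{|S|+1}M(\tau_S)=\sum_{t=0}^{T-1}\Bigl(\sum_{\varnothing\neq S\subseteq K_t}(-1)^{|S|+1}\Bigr)\psi_t\psi_t^\top,
\]
where \(K_t:=\{k: t\in I_k\}\). This works because \(t\in I_S\) exactly when \(S\subseteq K_t\). Because \(\{I_k\}\) covers \(\{0,\dots,T-1\}\), each \(K_t\) is nonempty. The inner sum therefore equals \(1-\sum_{j=0}^{|K_t|}\binom{|K_t|}{j}(-1)^j=1-(1-1)^{|K_t|}=1\), so the right-hand side collapses to \(\sum_t\psi_t\psi_t^\top=M(\tau)\).

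For the general statement, \(\widehat M\) is a finite signed sum of symmetric \(m\times m\) matrices. It is therefore a uniquely determined element of \(\mathbb S^m\), and there is nothing more to prove beyond noting that it need not be PSD or realizable. That question is deferred to Theorem~\ref{thm:traj-obstruction}.

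The only step needing care is the first one. We must justify that \(\psi_t(\tau|_{I_S})\) coincides with \(\psi_t(\tau)\) after reindexing, i.e., that the lift is local to each transition and uses no segment-level normalization. This follows directly from the definition of \(c_t,\delta_t\) and the lifted vector. Once it is in place, the argument is the standard inclusion--exclusion indicator identity applied to each transition.
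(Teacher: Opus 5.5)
Your proposal is correct and follows essentially the same route as the paper: expand each $M(\tau_S)$ as $\sum_{t\in I_S}\psi_t\psi_t^\top$, interchange the finite sums, and use $\sum_{\varnothing\neq S\subseteq K(t)}(-1)^{|S|+1}=1-(1-1)^{|K(t)|}=1$ for nonempty $K(t)$; the general candidate $\widehat M$ is then just a finite linear combination in $\mathbb S^m$. The only difference is cosmetic: you pad empty intersections with the zero matrix, whereas the paper observes directly that every nonempty $S\subseteq K(t)$ already lies in $\mathcal N$.
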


The gluing theorem formalizes the local-to-global interpretation used by replay libraries: the learner may store windows, prefixes, or partially overlapping fragments rather than a single complete task-level trajectory, and the matrix algebra specifies how to assemble their nonduplicated contributions.

The next result concerns minimal sufficiency within the admissible additive class. This is the relevant comparison class for the method because the matrix-conditioned value interface and segment library use additive first-order and quadratic statistics of the fixed lift, not unrestricted history representations.

\subsection{Minimal sufficiency}

\begin{definition}[Admissible additive descriptor]
\label{def:admissible-descriptor}
A descriptor \(D\) is called admissible if, for some fixed \(p\in\mathbb N\), it maps each finite trajectory segment \(\tau\) to a point \(D(\tau)\in\mathbb R^p\), is additive under valid concatenation, and is sufficient for all functionals in \(\mathfrak F_{\mathrm{aq}}\) in the following factorization sense: for every \(F\in\mathfrak F_{\mathrm{aq}}\), there exists a Borel measurable map \(h_F:\mathbb R^p\to\mathbb R\) such that
$
F(\tau)=h_F(D(\tau))
\qquad
\text{for every finite trajectory segment }\tau .
$
\end{definition}

\begin{theorem}[Minimal sufficiency]
\label{thm:traj-minimal}
The matrix proxy \(M\) is minimally sufficient within the class of admissible additive descriptors: if \(D\) is any admissible descriptor, then there exists a measurable map \(\rho:\mathbb R^p\to\mathbb S^m\) such that
$
M(\tau)=\rho(D(\tau))
\quad
\text{for every }\tau.
$
Hence \(M\) is the coarsest admissible additive descriptor for the class \(\mathfrak F_{\mathrm{aq}}\), up to measurable post-processing.
\end{theorem}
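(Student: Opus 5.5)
The plan is to recover $M(\tau)$ coordinate by coordinate from functionals in $\mathfrak F_{\mathrm{aq}}$, each of which factors through $D$ by admissibility. Additivity of $D$ is not needed here; only the factorization property is used.

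\textbf{Step 1: each entry of $M$ is a functional in $\mathfrak F_{\mathrm{aq}}$.} Fix the standard symmetric basis $\{E_{ij}\}_{1\le i\le j\le m}$ of $\mathbb S^m$, with $E_{ij}=\tfrac12(e_ie_j^\top+e_je_i^\top)$ for $i\ne j$ and $E_{ii}=e_ie_i^\top$. Define
\[
F_{ij}(\tau):=\langle E_{ij},M(\tau)\rangle=M(\tau)_{ij}.
\]
By the converse direction of Theorem~\ref{thm:traj-complete}, every $A\in\mathbb S^m$ gives a functional $\tau\mapsto\langle A,M(\tau)\rangle$ in $\mathfrak F_{\mathrm{aq}}$. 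Hence each $F_{ij}\in\mathfrak F_{\mathrm{aq}}$.

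To make this concrete, split $\psi_t=[\bar\psi_t^\top\ 1]^\top$. The entries then fall into three types:
\begin{itemize}
\item the bottom-right entry is $\sum_t 1$, corresponding to $a=1$;
\item the last-column entries are $\sum_t(\bar\psi_t)_i$, corresponding to $b=e_i$;
\item the upper-left entries are $\sum_t(\bar\psi_t)_i(\bar\psi_t)_j$, corresponding to $C=E_{ij}$.
\end{itemize}
This directly confirms membership in $\mathfrak F_{\mathrm{aq}}$.

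\textbf{Step 2: factor through $D$ and assemble $\rho$.} Admissibility of $D$ gives a Borel map $h_{ij}:=h_{F_{ij}}:\mathbb R^p\to\mathbb R$ with $M(\tau)_{ij}=h_{ij}(D(\tau))$ for every finite segment $\tau$. Define
\[
\rho(y):=\sum_{i\le j}h_{ij}(y)\,\bigl(e_ie_j^\top+e_je_i^\top\bigr)\big/(1+\mathbbm 1_{i=j}),
\]
which is the symmetric matrix whose $(i,j)$ and $(j,i)$ entries both equal $h_{ij}(y)$. Then $\rho$ maps $\mathbb R^p$ into $\mathbb S^m$. It is Borel measurable because each coordinate is Borel and a map into a finite-dimensional space is measurable exactly when its coordinates are. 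By construction, $\rho(D(\tau))=M(\tau)$ for every $\tau$.

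\textbf{Step 3: interpret "coarsest".} Since $M=\rho\circ D$, every admissible $D$ refines $M$: if $D(\tau)=D(\tau')$, then $M(\tau)=M(\tau')$. It remains to show that $M$ is itself admissible.
\begin{itemize}
\item $M$ lands in $\mathbb S^m\cong\mathbb R^{m(m+1)/2}$.
\item $M$ is additive under valid concatenation by Theorem~\ref{thm:traj-composition}.
\item $M$ is sufficient for $\mathfrak F_{\mathrm{aq}}$ with $h_F=\langle A_F,\cdot\rangle$, which is linear and hence Borel, with $A_F$ supplied by Theorem~\ref{thm:traj-complete}.
\end{itemize}
So $M$ is the coarsest element of the class, up to measurable post-processing.

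\textbf{Main obstacle.} The difficulty is only a matter of care. Definition~\ref{def:admissible-descriptor} supplies the maps $h_F$ separately for each $F$, and $\rho$ must be well defined on all of $\mathbb R^p$, not just on the image of $D$. Using finitely many fixed functionals $F_{ij}$ and taking their Borel maps on all of $\mathbb R^p$ resolves this: no choice over infinitely many $F$ is needed, and measurability is automatic.
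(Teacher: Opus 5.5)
Your proposal is correct and follows essentially the same route as the paper. Both arguments recover \(M(\tau)\) from finitely many coordinate functionals \(\tau\mapsto\langle B,M(\tau)\rangle\), which lie in \(\mathfrak F_{\mathrm{aq}}\) by the converse half of Theorem~\ref{thm:traj-complete}. Both then assemble the resulting Borel maps \(h\) into \(\rho\) and check that \(M\) is itself admissible.

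The only difference is cosmetic. You use the entrywise symmetric basis \(E_{ij}\) and read off matrix entries directly. The paper uses an abstract Frobenius-orthonormal basis \(B_1,\dots,B_s\) and recovers \(M\) through orthonormality.
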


The final result adds the environment-specific layer. Composition and gluing are closed at the level of descriptor algebra, but realizability depends on the actual environment.

\subsection{Realizability and obstruction}

\begin{definition}[Reachable matrix set]
\label{def:reachable-matrix-set}
For an environment \(e\) and horizon \(T\), define the reachable matrix set
$
\mathcal R_e^{(T)}
:=
\{M(\tau): \tau \text{ is realizable in } e \text{ with horizon } T\}.
$
\end{definition}

\begin{definition}[Obstruction score]
\label{def:obstruction}
For a candidate matrix \(\widehat M\in\mathbb S^m\), define
$
\operatorname{Obs}_e^{(T)}(\widehat M)
:=
\inf_{M\in\mathcal R_e^{(T)}} \|\widehat M-M\|_F .
$
\end{definition}

\begin{theorem}[Obstruction]
\label{thm:traj-obstruction}
Let \(\widehat M\) be a matrix obtained by composition or gluing of local matrices, and let \(T\) be the candidate horizon encoded by its constant-channel block, \(\widehat M_{\mathbf 1\mathbf 1}=T\). If
$
\operatorname{Obs}_e^{(T)}(\widehat M)>0,
$
then \(\widehat M\) does not correspond to any realizable trajectory in environment \(e\) at horizon \(T\). Therefore positive obstruction is a certificate of inconsistency with the environment.
\end{theorem}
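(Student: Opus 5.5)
The plan is a contrapositive argument read directly off Definitions~\ref{def:reachable-matrix-set} and~\ref{def:obstruction}. Suppose, for contradiction, that \(\widehat M\) corresponds to a realizable trajectory in \(e\) at horizon \(T\). That is, there is a segment \(\tau\) realizable in \(e\) with horizon \(T\) and \(M(\tau)=\widehat M\).

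\textbf{Step 1: the horizon label is consistent.} By Definition~\ref{def:traj-matrix-final}, the constant channel of each lift \(\psi_t(\tau)\) equals \(1\). Hence \(M(\tau)_{\mathbf 1\mathbf 1}=\sum_{t=0}^{T'-1}1=T'\), where \(T'\) is the length of \(\tau\). Theorems~\ref{thm:traj-composition} and~\ref{thm:traj-gluing} are linear, so this entry is additive along composition and inclusion--exclusion. The label \(\widehat M_{\mathbf 1\mathbf 1}=T\) is therefore the only horizon any realization of \(\widehat M\) could have, and \(T'=T\). So \(M(\tau)\in\mathcal R_e^{(T)}\).

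\textbf{Step 2: zero obstruction.} Since \(M(\tau)\) lies in the reachable set, it is one of the points over which the infimum in Definition~\ref{def:obstruction} is taken. Thus
\[
0\le \operatorname{Obs}_e^{(T)}(\widehat M)\le \|\widehat M-M(\tau)\|_F=0 .
\]
This contradicts the hypothesis \(\operatorname{Obs}_e^{(T)}(\widehat M)>0\), which proves the claim.

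The only point needing care is Step 1. One must fix what ``corresponds to a realizable trajectory'' means, namely equality \(\widehat M=M(\tau)\) of matrix proxies, and check that the horizon is read off the constant-channel block and not assumed separately. Once the constant-channel identity is noted, this is routine.

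It is also worth remarking what the theorem does not say. The converse can fail, since obstruction zero only places \(\widehat M\) in the closure of \(\mathcal R_e^{(T)}\). So the theorem is a one-sided certificate, as stated.
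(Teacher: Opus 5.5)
Your proposal is correct and follows essentially the same route as the paper: assume $\widehat M=M(\tau)$ for some $\tau$ realizable in $e$ with horizon $T$, use the constant-channel identity $M(\tau)_{\mathbf 1\mathbf 1}=T$ to tie the label to the horizon, conclude $M(\tau)\in\mathcal R_e^{(T)}$, and squeeze the infimum to zero for a contradiction. The paper also treats a non-integer label and an empty reachable set as separate cases, but your contradiction hypothesis already excludes both, so nothing is missing.
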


No converse is claimed without additional closure assumptions on \(\mathcal R_e^{(T)}\). The theorem separates two levels of validity: algebraic validity in matrix space and realizability in the actual sequential decision environment.

\section{Value as a Low-Order Signal on Matrix Space}
\label{sec:matrix-value}

The structural theorems above explain why \(M(\tau)\) is compositional and sufficient for a clear low-order information class. The control interface additionally requires a relationship between the matrix descriptor and value, connecting the representation to the Bellman view of sequential decision making \citep{bellman1966dynamic,puterman2014markov,sutton1998reinforcement}.
Theorem~\ref{thm:traj-complete} and Appendix~\ref{app:symbolic-subgraph-patterns} show that \(M\) exactly retains the lift-induced low-order structural quantities, including reversal parity, path endpoint displacement, cycle closure, and oriented circulation. We do not claim that \(M\) preserves the full topology of the grounded transition graph; the preserved content is the matrix-visible structural information induced by the common lift. The remaining question for control is whether those preserved directions are sufficient for Bellman recursion and local value variation.

\begin{assumption}[Matrix sufficiency for optimal value]
\label{ass:matrix-conditioned-value}
For each environment \(e\) in the aligned family, let
\(Q_e^\star(o,h,a)\) denote the optimal action-value function when the
current observation is \(o\), the past rollout prefix is \(h\), and the
candidate action is \(a\). There exists a measurable function
$
\bar Q_e^\star:\mathcal O\times\mathbb S_+^m\times\mathcal A\to\mathbb R
$
such that, for every reachable rollout prefix
\(h_t=\tau_{0:t-1}\) and every action \(a\),
$
Q_e^\star(o_t,h_t,a)
=
\bar Q_e^\star\!\left(o_t,M(h_t),a\right).
$
Equivalently, any two reachable prefixes with the same current
observation and the same accumulated matrix state induce the same
optimal action values.
\end{assumption}

Assumption~\ref{ass:matrix-conditioned-value} is not an existence claim
for an arbitrary auxiliary function. It is a sufficiency hypothesis for
the optimal value: after conditioning on the current observation, the
history dependence that is relevant to optimal control is assumed to
enter through the accumulated matrix state \(M(h_t)\). This need not hold
for arbitrary partially observed or nonstationary environments. It
specifies the regime in which the matrix descriptor is intended to act
as a control-relevant state abstraction, analogous in role to
goal-conditioned value parameterizations, successor-feature
decompositions, and bisimulation-based state abstractions
\citep{schaul2015universal,barreto2017successor,barreto2018transfer,
ferns2004metrics,zhang2020learning,gelada2019deepmdp}.

\begin{assumption}[Lifted Bellman compatibility]
\label{ass:matrix-bellman-compatibility}
For each environment \(e\), the lifted pair \((o,M)\) admits a
well-defined one-step transition kernel. That is, for every reachable
\((o,M)\) and action \(a\), there exists a kernel
$
P_e(d r,d o'\mid o,M,a)
$
such that the conditional law of the next reward and observation,
given any rollout prefix \(h_t\) satisfying
\((o_t,M(h_t))=(o,M)\), is \(P_e(\cdot\mid o,M,a)\). With the one-step
lift denoted by \(\psi(o,a,r,o')\) and
$
M^+ = M+\psi(o,a,r,o')\psi(o,a,r,o')^\top,
$
the matrix-conditioned optimal value satisfies
$
\bar Q_e^\star(o,M,a)
=
\mathbb E_{(r,o')\sim P_e(\cdot\mid o,M,a)}
\left[
r+\gamma \sup_{a'\in\mathcal A}
\bar Q_e^\star(o',M^+,a')
\right],
$
for a discount factor \(\gamma\in[0,1)\), whenever the right-hand side is
well defined.
\end{assumption}

Assumption~\ref{ass:matrix-bellman-compatibility} is the lifted-state Markov condition needed to write Bellman recursion on \((o,M)\). It is a strong modeling assumption, but it is explicit: the claim is not that \(M\) is universally sufficient, but that when the matrix state is a sufficient statistic for the value-relevant history, dynamic programming can be carried out in the lifted matrix state space.

\begin{proposition}[Exact Bellman reduction to the lifted matrix state]
\label{prop:matrix-bellman-reduction}
Define the lifted Bellman optimality operator on measurable
\(f:\mathcal O\times\mathbb S_+^m\times\mathcal A\to\mathbb R\) by
$
(\mathcal T_M f)(o,M,a)
:=
\mathbb E\!\left[
r+\gamma\max_{a'\in\mathcal A}
f\!\bigl(o',M+\psi(o,a,r,o')\psi(o,a,r,o')^\top,a'\bigr)
\mid o,M,a
\right].
$
Under Assumptions~\ref{ass:matrix-conditioned-value}
and~\ref{ass:matrix-bellman-compatibility},
$
\bar Q^\star=\mathcal T_M \bar Q^\star,
$
and for every environment \(e\) and rollout prefix \(\tau_{0:t-1}\),
$
Q_e^\star(o_t,\tau_{0:t-1},a)
=
\bar Q^\star(o_t,M(\tau_{0:t-1}),a).
$
Therefore the Bellman recursion closes exactly on the lifted state
\((o,M)\).
\end{proposition}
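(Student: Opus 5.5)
The proof is essentially bookkeeping on the two assumptions, so I would structure it as two short steps preceded by one preliminary identity. The preliminary identity is the incremental form of the matrix state. By Definition~\ref{def:traj-matrix-final}, the prefix \(\tau_{0:t}\) is the valid concatenation of \(\tau_{0:t-1}\) with the one-step segment \((o_t,a_t,r_t,o_{t+1})\). Theorem~\ref{thm:traj-composition} therefore gives
\(M(\tau_{0:t})=M(\tau_{0:t-1})+\psi(o_t,a_t,r_t,o_{t+1})\psi(o_t,a_t,r_t,o_{t+1})^\top\).
Here \(\psi_t\) depends only on the single transition, since \(c_t,\delta_t,\xi(a_t),r_t\) are all one-step quantities. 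So the successor matrix \(M^+\) appearing in Assumption~\ref{ass:matrix-bellman-compatibility} and in \(\mathcal T_M\) is exactly the matrix of the extended prefix \(h_{t+1}\).

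The second claim, \(Q_e^\star(o_t,\tau_{0:t-1},a)=\bar Q^\star(o_t,M(\tau_{0:t-1}),a)\), is then a direct restatement of Assumption~\ref{ass:matrix-conditioned-value} for reachable prefixes. I would note this first, reading \(\bar Q^\star\) as \(\bar Q_e^\star\) for the environment at hand. If one wants the statement for arbitrary prefixes, I would restrict to reachable ones, because unreachable prefixes are outside the domain on which \(Q_e^\star\) is defined.

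For the fixed-point identity \(\bar Q^\star=\mathcal T_M\bar Q^\star\), I would fix a reachable lifted pair \((o,M)\) and an action \(a\). The steps are as follows.
\begin{enumerate}
\item Pick any reachable prefix \(h_t\) with \((o_t,M(h_t))=(o,M)\).
\item The conditional expectation inside \(\mathcal T_M\) is, by Assumption~\ref{ass:matrix-bellman-compatibility}, an integral against the kernel \(P_e(\cdot\mid o,M,a)\). This kernel does not depend on which such \(h_t\) was chosen, so \((\mathcal T_M\bar Q^\star)(o,M,a)\) is well defined as a function of \((o,M,a)\) alone.
\item Substituting \(M^+\) from the preliminary identity, the integrand \(r+\gamma\max_{a'}\bar Q^\star(o',M^+,a')\) coincides with the integrand in the lifted Bellman equation of Assumption~\ref{ass:matrix-bellman-compatibility}.
\item Hence \((\mathcal T_M\bar Q^\star)(o,M,a)=\bar Q_e^\star(o,M,a)\).
\end{enumerate}
As a consistency check, I would also verify that the same identity follows from the history-level Bellman equation \(Q_e^\star(o_t,h_t,a)=\mathbb E[r_t+\gamma\sup_{a'}Q_e^\star(o_{t+1},h_{t+1},a')\mid h_t,a]\), applying Assumption~\ref{ass:matrix-conditioned-value} at both \(h_t\) and \(h_{t+1}\). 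This shows the two assumptions are mutually coherent rather than merely stipulated.

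The main obstacle is technical rather than conceptual, and it has three parts. First, \(\mathcal T_M\) uses \(\max\) while the assumption uses \(\sup\); I would treat these as equal whenever the supremum is attained, and otherwise read \(\max\) as \(\sup\). Second, the expectation must be well defined; I would invoke the ``whenever the right-hand side is well defined'' clause, or assume bounded rewards so that \(\bar Q^\star\) is bounded. Third, the identity holds only on reachable \((o,M)\). I would state it there and define \(\bar Q^\star\) off the reachable set arbitrarily, for example as the fixed point of \(\mathcal T_M\), which is a \(\gamma\)-contraction in sup norm on bounded functions. This makes the equality \(\bar Q^\star=\mathcal T_M\bar Q^\star\) hold globally.

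Measurability of \(\mathcal T_M\bar Q^\star\) follows because the map \((o,M,a,r,o')\mapsto M+\psi\psi^\top\) is measurable and \(P_e\) is a kernel. I expect only a one-line remark on this, the \(\max\)/\(\sup\) issue and the reachability restriction to be needed in the final write-up.
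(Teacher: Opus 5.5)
Your proposal is correct and matches the paper's proof: both read the fixed-point identity directly off Assumption~\ref{ass:matrix-bellman-compatibility}, since \((\mathcal T_M\bar Q^\star)(o,M,a)\) is literally the right-hand side of the lifted Bellman equation there, and both treat the prefix identity as a restatement of Assumption~\ref{ass:matrix-conditioned-value}. Your extra care goes beyond the paper without changing the argument: you restrict to reachable \((o,M)\), where the paper simply asserts the identity pointwise, and you address \(\max\) versus \(\sup\) and add the history-level consistency check.
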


This proposition is included to fix the lifted Bellman notation used below. It follows directly from Assumptions~\ref{ass:matrix-conditioned-value} and~\ref{ass:matrix-bellman-compatibility}: under these assumptions, Bellman recursion can be written on \((o,M)\) instead of the full history.

\begin{assumption}[Ambient \(C^1\) matrix-value representative]
\label{ass:matrix-value-smoothness}
For every fixed \((o,a)\), there exist an open set
\(\mathcal U_{o,a}\subseteq \mathbb S^m\) containing the reachable matrix states used for evaluation and a function
$
\widehat Q^\star_{o,a}:\mathcal U_{o,a}\to\mathbb R
$
such that
$
\widehat Q^\star_{o,a}(M)=\bar Q^\star(o,M,a)
$
for every such reachable \(M\), and \(\widehat Q^\star_{o,a}\) is Fr\'echet differentiable on \(\mathcal U_{o,a}\) with derivative locally bounded in the operator norm induced by the Frobenius norm.
\end{assumption}

\begin{remark}[Interpretation of the smoothness assumption]
\label{rem:matrix-smoothness-interpretation}
The realizable matrix-state set need not be open in \(\mathbb S_+^m\), so
Fr\'echet differentiability of \(M\mapsto \bar Q^\star(o,M,a)\) is not
naturally stated on the constrained domain itself. Assumption~\ref{ass:matrix-value-smoothness}
therefore postulates a differentiable ambient representative in matrix
space. It is not a claim that elementary edits of the grounded transition
graph are infinitesimal: a single added edge may induce the finite update
\(M\mapsto M+\psi\psi^\top\). The first-order statements below apply only
to local comparisons whose induced matrix perturbation is small in
Frobenius norm.
\end{remark}

\begin{theorem}[First-order Bellman-value linearization in matrix space]
\label{thm:matrix-bellman-linearization}
Fix \((o,a)\) and a reference matrix \(M\). Let \(H_0,H_1\in\mathbb S^m\) be admissible compositional perturbations such that \(M+H_0\), \(M+H_1\), and \(M+H_0+H_1\) lie in the relevant domain, and let \(\|H_0\|_F+\|H_1\|_F\to 0\). Under Assumptions~\ref{ass:matrix-conditioned-value}, \ref{ass:matrix-bellman-compatibility}, and \ref{ass:matrix-value-smoothness}, there exists \(W_{o,M,a}\in\mathbb S^m\), the Riesz representative of \(D\widehat Q^\star_{o,a}(M)\) under the Frobenius inner product, such that
$
\bar Q^\star(o,M+H,a)
=
\bar Q^\star(o,M,a)
+
\langle W_{o,M,a},H\rangle
+
r_M(H),
\qquad
r_M(H)=o(\|H\|_F).
$
Consequently,
$
\bar Q^\star(o,M+H_0+H_1,a)-\bar Q^\star(o,M,a)
=
\bigl[\bar Q^\star(o,M+H_0,a)-\bar Q^\star(o,M,a)\bigr]
+
\bigl[\bar Q^\star(o,M+H_1,a)-\bar Q^\star(o,M,a)\bigr]
+
o(\|H_0\|_F+\|H_1\|_F).
$
In particular, when \(0\) lies in the ambient differentiability neighborhood and the expression is centered at \(M=0\),
$
\bar Q^\star(o,H_0+H_1,a)
\approx
\bar Q^\star(o,H_0,a)+\bar Q^\star(o,H_1,a)-\bar Q^\star(o,0,a).
$
Thus the Bellman value is locally a low-order signal on compositional matrix increments.
\end{theorem}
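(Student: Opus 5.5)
The plan is to treat the statement as a direct consequence of Fréchet differentiability of the ambient representative $\widehat Q^\star_{o,a}$ from Assumption~\ref{ass:matrix-value-smoothness}. Assumptions~\ref{ass:matrix-conditioned-value} and~\ref{ass:matrix-bellman-compatibility} (through Proposition~\ref{prop:matrix-bellman-reduction}) serve only to make $\bar Q^\star(o,\cdot,a)$ well defined on reachable matrix states and identified with the history-dependent optimal value. Throughout, we read $\bar Q^\star(o,M',a)$ for $M'$ in the relevant domain as $\widehat Q^\star_{o,a}(M')$. This is the meaning of ``relevant domain'' we commit to: either $M'$ is reachable, so the two agree, or the identity is understood at the level of the ambient representative. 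With this convention the statement becomes a purely analytic fact about a differentiable function on the open set $\mathcal U_{o,a}\subseteq\mathbb S^m$.

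\textbf{Step 1 (Riesz representative).} Since $\widehat Q^\star_{o,a}$ is Fréchet differentiable at $M\in\mathcal U_{o,a}$, its derivative $D\widehat Q^\star_{o,a}(M)$ is a bounded linear functional on the finite-dimensional Hilbert space $(\mathbb S^m,\langle\cdot,\cdot\rangle_F)$. The Riesz theorem then gives a unique $W_{o,M,a}\in\mathbb S^m$ with $D\widehat Q^\star_{o,a}(M)[H]=\langle W_{o,M,a},H\rangle$. Working in $\mathbb S^m$ rather than $\mathbb R^{m\times m}$ ensures $W$ is symmetric; equivalently, one can symmetrize any representative. The definition of the Fréchet derivative gives $r_M(H):=\widehat Q^\star_{o,a}(M+H)-\widehat Q^\star_{o,a}(M)-\langle W,H\rangle=o(\|H\|_F)$, which is the first display. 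The openness of $\mathcal U_{o,a}$ ensures $M+H$ stays in the domain for small $H$.

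\textbf{Step 2 (additivity).} Apply Step 1 three times, with $H=H_0$, $H=H_1$, and $H=H_0+H_1$. Subtract the first two expansions from the third. The linear terms cancel exactly because $\langle W,H_0+H_1\rangle=\langle W,H_0\rangle+\langle W,H_1\rangle$. This leaves the remainder $r_M(H_0+H_1)-r_M(H_0)-r_M(H_1)$. The main point needing care is bounding this uniformly as $\epsilon:=\|H_0\|_F+\|H_1\|_F\to0$. Write $r_M(H)=\|H\|_F\,\eta(H)$ with $\eta(H)\to0$ as $H\to0$. Each of $\|H_0\|_F$, $\|H_1\|_F$, and $\|H_0+H_1\|_F$ is at most $\epsilon$, and each argument tends to $0$. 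Hence each remainder is at most $\epsilon\cdot\sup_{\|H\|_F\le\epsilon}|\eta(H)|=o(\epsilon)$, which gives the second display. This is where the paper's choice of the sum of norms as the scale is convenient. The local boundedness of the derivative is not needed for this pointwise statement; it would only matter for a mean-value version with uniform constants.

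\textbf{Step 3 (centered form).} Specialize to $M=0$, assuming $0\in\mathcal U_{o,a}$, and rearrange to obtain $\bar Q^\star(o,H_0+H_1,a)=\bar Q^\star(o,H_0,a)+\bar Q^\star(o,H_1,a)-\bar Q^\star(o,0,a)+o(\epsilon)$. This is the stated approximation. We note explicitly that ``$\approx$'' means equality up to $o(\|H_0\|_F+\|H_1\|_F)$, in line with Remark~\ref{rem:matrix-smoothness-interpretation}: finite rank-one updates $\psi\psi^\top$ are covered only when they are small in Frobenius norm. The expected obstacle is not analytic but interpretive, namely ensuring that the perturbed points used in the identity lie where $\bar Q^\star$ and $\widehat Q^\star_{o,a}$ coincide, or else stating the result for $\widehat Q^\star_{o,a}$ as we do above.
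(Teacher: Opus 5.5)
Your proposal is correct and follows essentially the same route as the paper: a Fr\'echet expansion of the ambient representative $\widehat Q^\star_{o,a}$ at $M$, the Riesz representative $W_{o,M,a}$ in $(\mathbb S^m,\langle\cdot,\cdot\rangle_F)$, exact cancellation of the linear terms across $H_0$, $H_1$, $H_0+H_1$, a uniform $o(\|H_0\|_F+\|H_1\|_F)$ bound on the three remainders via the triangle inequality, and then specialization to $M=0$. Your $\eta(H)$ formulation of the remainder bound is a repackaging of the paper's $\varepsilon$--$\delta$ estimate, and your handling of the ``relevant domain'' (identifying $\bar Q^\star$ with $\widehat Q^\star_{o,a}$ there) matches how the paper treats it.
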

Theorem~\ref{thm:matrix-bellman-linearization} gives the local value rule used by MSRL: small compositional matrix increments have approximately additive effects on the matrix-conditioned value. The statement is local. It does not claim exact additivity of optimal value, and it does not imply that adding an arbitrary discrete segment produces a small value change. The approximation applies when the induced perturbation is small in Frobenius norm.

The exactly affine case is a useful sanity check, but the method does not rely on exact linearity; the operative claim is the local first-order approximation in Theorem~\ref{thm:matrix-bellman-linearization}.

\begin{corollary}[Centered compositional value law]
\label{cor:centered-matrix-value}
Define the centered matrix-conditioned value
$
\widetilde Q^\star(o,M,a)
:=
\bar Q^\star(o,M,a)-\bar Q^\star(o,0,a).
$
Under the assumptions of Theorem~\ref{thm:matrix-bellman-linearization},
$
\widetilde Q^\star(o,H_0+H_1,a)
=
\widetilde Q^\star(o,H_0,a)
+
\widetilde Q^\star(o,H_1,a)
+
o(\|H_0\|_F+\|H_1\|_F).
$
Hence the centered value law is approximately additive on compositional matrix increments.
\end{corollary}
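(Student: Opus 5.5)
The corollary follows from the centered form of Theorem~\ref{thm:matrix-bellman-linearization}, so I will only rewrite that statement in centered variables. I will apply the theorem with reference matrix $M=0$. This requires $0$ to lie in the ambient neighborhood $\mathcal U_{o,a}$ of Assumption~\ref{ass:matrix-value-smoothness}, and requires $H_0$, $H_1$ and $H_0+H_1$ to lie in the relevant domain. These are exactly the conditions under which the last display of the theorem was stated, and I read them as part of "the assumptions of Theorem~\ref{thm:matrix-bellman-linearization}".

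With $M=0$, the second display of the theorem reads
\[
\bar Q^\star(o,H_0+H_1,a)-\bar Q^\star(o,0,a)
=\bigl[\bar Q^\star(o,H_0,a)-\bar Q^\star(o,0,a)\bigr]+\bigl[\bar Q^\star(o,H_1,a)-\bar Q^\star(o,0,a)\bigr]+o(\|H_0\|_F+\|H_1\|_F).
\]
By the definition of $\widetilde Q^\star$, each bracket equals the corresponding centered value. The claimed identity follows immediately.

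For completeness I will also verify the theorem's consequence directly, since it is the only real content. Write $\bar Q^\star(o,H,a)=\widehat Q^\star_{o,a}(H)$ on the reachable states used for evaluation. The first-order expansion at $0$ gives
\[
\widetilde Q^\star(o,H,a)=\langle W_{o,0,a},H\rangle+r_0(H),
\]
with $r_0(H)=o(\|H\|_F)$. Apply this to $H_0$, to $H_1$, and to $H_0+H_1$. Linearity of the Frobenius pairing cancels the three linear terms. The remaining error is $r_0(H_0+H_1)-r_0(H_0)-r_0(H_1)$.

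The only point needing care is showing this error is $o(\|H_0\|_F+\|H_1\|_F)$. Set $s:=\|H_0\|_F+\|H_1\|_F$. Then $\|H_0+H_1\|_F\le s$, $\|H_0\|_F\le s$ and $\|H_1\|_F\le s$. For each $\varepsilon>0$, Fréchet differentiability gives a $\delta$ with $|r_0(H)|\le\varepsilon\|H\|_F$ whenever $\|H\|_F<\delta$. If $s<\delta$, each of the three remainders is therefore at most $\varepsilon s$, and the combined error is at most $3\varepsilon s$. This gives the $o(s)$ bound and completes the proof.
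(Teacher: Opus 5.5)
Your proposal is correct and follows the paper's proof: both apply Theorem~\ref{thm:matrix-bellman-linearization} at the base point $M=0$, assuming $0$ lies in the differentiability neighborhood, and then identify each bracket with the corresponding centered value. Your supplementary remainder estimate repeats the argument in the paper's proof of the theorem; the only difference is that you bound the combined error by $3\varepsilon s$, whereas the paper obtains $2\varepsilon s$, which is harmless.
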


A further consequence is local value stability: matrix-preserving transformations leave the matrix-conditioned value unchanged, while perturbations orthogonal to the local value gradient are invisible to first order. We state this auxiliary result in Appendix~\ref{app:value-consequences}, because the main algorithm only uses the first-order compositional rule above.

At the population level, the matrix-conditioned value law also implies that the full transition history cannot improve the optimal squared-loss value target once \((o,M)\) is known; the formal sufficiency statement is deferred to Appendix~\ref{app:value-consequences}.

Taken together, the lifted Bellman closure and Theorem~\ref{thm:matrix-bellman-linearization} justify using \(M\) as the planning state for MSRL: the Bellman equation is written on \((o,M)\), and local value variation is governed by compositional matrix increments. Auxiliary stability and regression-sufficiency consequences are given in Appendix~\ref{app:value-consequences}.

\section{Matrix-Space Reinforcement Learning for Compositional Generalization}
\label{sec:matrix-rl}

The matrix proxy is not only a structural descriptor but also an operational interface for compositional planning. This section defines the lifted planning state, the segment-based composition mechanism, and the obstruction-filtered candidate futures used for action selection. Formal training details and full pseudocode are deferred to Appendix~\ref{app:matrix-rl-formal}--\ref{app:matrix-rl-pseudocode}.

\paragraph{Lifted planning state.}
For a rollout prefix \(\tau_{0:t-1}\), define
$
Z_t:=M(\tau_{0:t-1})=\sum_{s=0}^{t-1}\psi_s\psi_s^\top .
$
Then
$
Z_{t+1}=Z_t+\psi_t\psi_t^\top .
$
We use the lifted state
$
\widetilde s_t:=(o_t,Z_t),
$
where \(o_t\) supplies the current local decision context and \(Z_t\) stores the accumulated sampled structure available to the agent through past interaction.

\paragraph{Composable segment increments.}
A local trajectory segment \(\sigma\in\mathcal L\) contributes the matrix increment \(M(\sigma)\). By Theorem~\ref{thm:traj-composition}, a compatible segment can be appended to the current lifted state by
$
Z_t \mapsto Z_t+M(\sigma).
$
Thus the segment library is not a separate symbolic planner: it is a collection of matrix increments that can be evaluated by the same value model used for ordinary lifted states.

\paragraph{Segment library.}
The replay memory stores not only transitions but also a library \(\mathcal L\) of local trajectory segments. Each segment \(\sigma\in\mathcal L\) is stored together with its matrix increment \(M(\sigma)\), its head action \(a^{\mathrm{head}}(\sigma)\), an entry context summary, an exit observation \(o^{\mathrm{tail}}(\sigma)\), and optional compatibility or environment tags. Under a shared aligned lift across related environments, this yields a common matrix-space interface for segment reuse.

\paragraph{Admissible composed futures.}
Given the current lifted state \((o_t,Z_t)\) and a candidate action \(a\), define the action-conditioned candidate set
$
\mathcal C(o_t,Z_t,a)
\subseteq
\Bigl\{
\bigl(o^{\mathrm{tail}}(\sigma),\, Z_t+M(\sigma)\bigr):
\sigma\in\mathcal L,\ a^{\mathrm{head}}(\sigma)=a
\Bigr\}.
$
A candidate \((\widehat o,\widehat Z)\in\mathcal C(o_t,Z_t,a)\) is retained only if the segment entry context is compatible with the current local context and its realizability score satisfies
$
g_\omega(o_t,Z_t,\widehat o,\widehat Z)\le \varepsilon_{\mathrm{obs}},
$
where \(g_\omega\) is a learned proxy for obstruction. Thus composed futures remain tied both to the current action and to a well-defined future local context.

\paragraph{Action-conditioned matrix-space planning.}
MSRL can be implemented on top of either a discrete-action critic or an actor--critic method for continuous control. The parameter vector \(\theta\) denotes the target critic parameters, including the matrix-value branch initialized from the source-trained parameters \(\eta_{\mathrm{src}}\). The lookahead score below is not a separate learned network; it is computed from the same critic evaluated on obstruction-filtered composed futures.

For discrete actions, let
$
V_\theta(o,Z):=\max_{a\in\mathcal A}Q_\theta(o,Z,a).
$
For continuous-control tasks, we use the actor-induced value
$
V_\theta(o,Z):=Q_\theta(o,Z,\pi_\theta(o,Z)).
$
For each candidate action \(a\), define
$
U_\theta(o_t,Z_t,a)
:=
\begin{cases}
\displaystyle \max_{(\widehat o,\widehat Z)\in \mathcal C(o_t,Z_t,a)}
V_\theta(\widehat o,\widehat Z), & \mathcal C(o_t,Z_t,a)\neq\varnothing,\\[0.6em]
0, & \mathcal C(o_t,Z_t,a)=\varnothing .
\end{cases}
$
Action selection uses
$
S_\theta(o_t,Z_t,a):=
Q_\theta(o_t,Z_t,a)+\beta\,U_\theta(o_t,Z_t,a),
\qquad
\beta\ge 0.
$
In source-augmented target training, the source-trained matrix-value branch initializes the target critic, while the target observation adapter, action head, policy, and obstruction proxy are trained with target-environment data. The planning term biases the target policy toward admissible segment continuations whose matrix increments are both value-relevant and plausible under the learned obstruction filter.

\section{Experiments}
\label{sec:experiments}

We evaluate three questions: \emph{Q1} whether \(M(\tau)\) exposes interpretable structure in simple source environments; \emph{Q2} whether MSRL is competitive on complex target control; and \emph{Q3} whether source-trained matrix-value knowledge improves target learning under the same target interaction budget. Part I validates \(M(\tau)\) and pretrains \(F_{\eta_{\mathrm{src}}}\) on inspectable source environments; Part II transfers only this matrix-value branch to complex targets, while target policies, action heads, and observation adapters are trained target-side.

\subsection{Part I: interpretable heterogeneous source environments}
\label{subsec:simple-interpretability}

Part I computes trajectory matrices in GraphMotif, MicroGrid, PointRooms, and Dubins-Reacher, whose known structures make the descriptor inspectable. These environments test directed motifs, invalid composition/obstruction, coordinate-aligned geometry, and action-conditioned transport; they also provide source matrix states for pretraining \(F_{\eta_{\mathrm{src}}}\). Figure~\ref{fig:simple-interpretability} answers Q1 by showing that matrix-space diagnostics recover these structures and that the resulting source value predictor is usable before Part II transfer.

\begin{figure*}[t]
    \centering
    \makebox[\linewidth][c]{%
    \begin{minipage}{1\linewidth}
    \centering
    \begin{minipage}{0.49\linewidth}
        \centering
        \includegraphics[width=\linewidth]{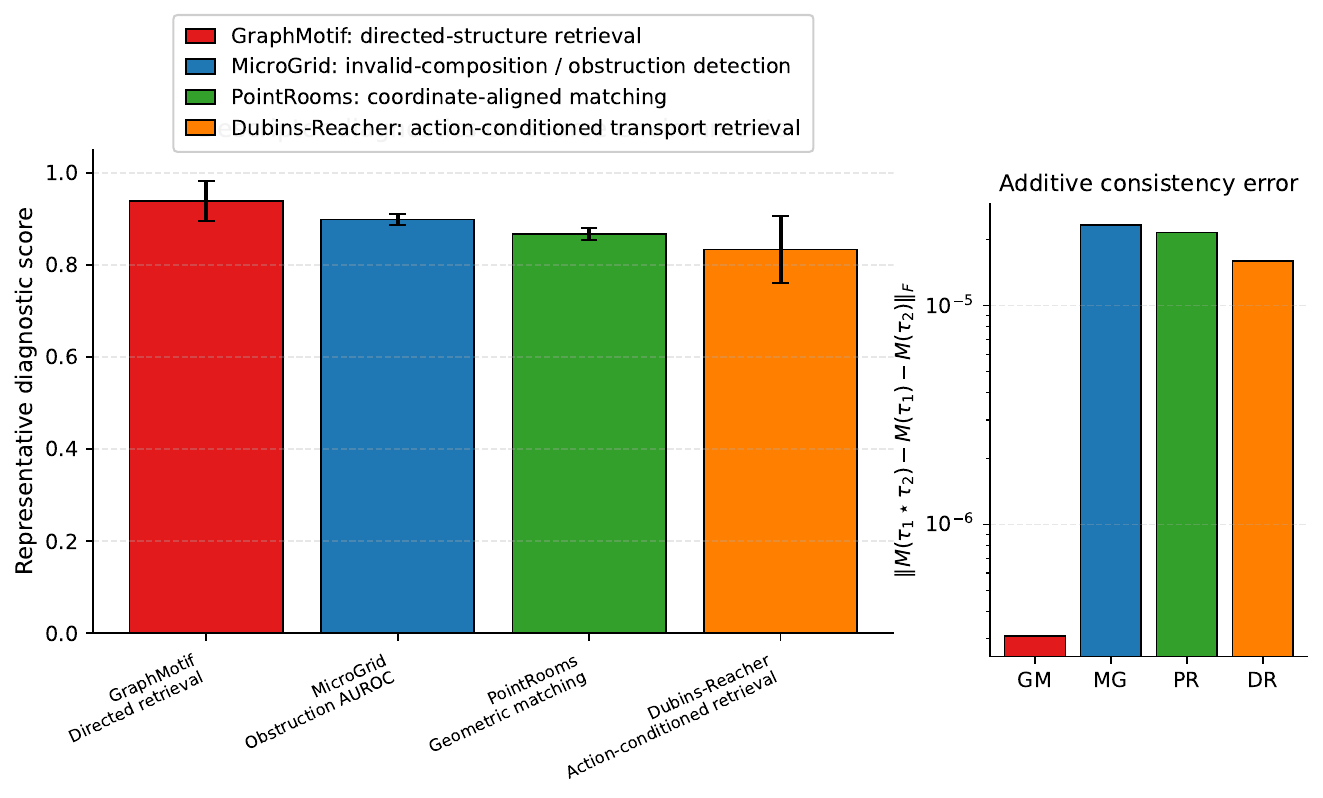}
    \end{minipage}
    \hfill
    \begin{minipage}{0.49\linewidth}
        \centering
        \includegraphics[width=\linewidth]{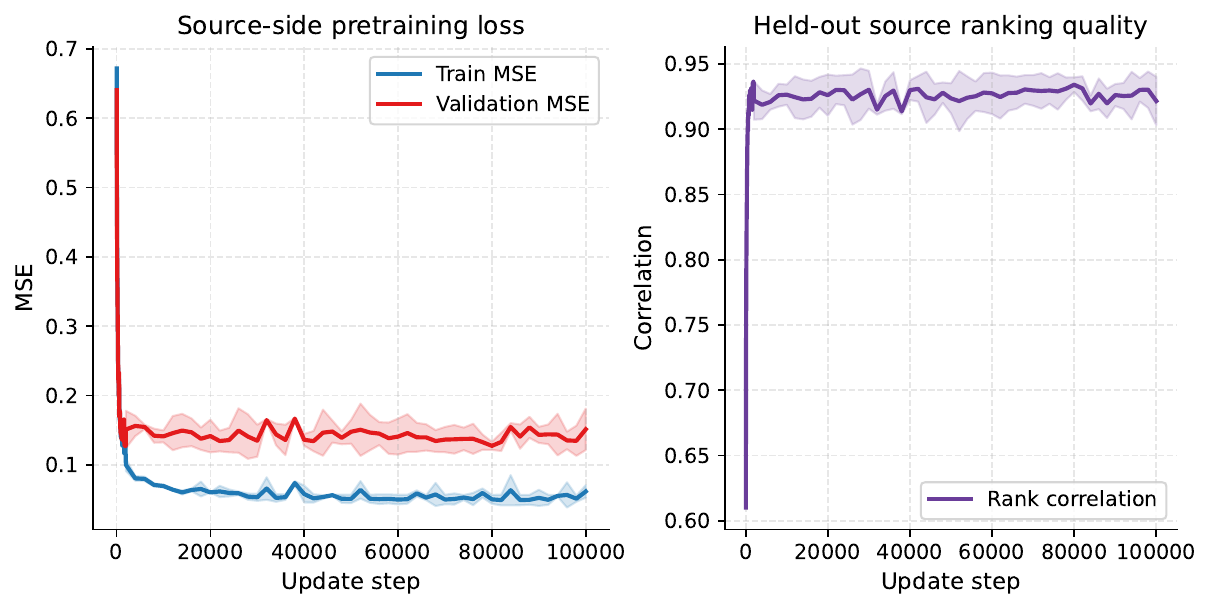}
    \end{minipage}
    \end{minipage}}
    \vspace{-0.4em}
    \caption{Part I source diagnostics and value pretraining. Left: the diagnostic score is a normalized task-specific metric: GraphMotif directed-structure retrieval accuracy, MicroGrid obstruction AUROC, PointRooms coordinate-aligned matching accuracy, or Dubins-Reacher action-conditioned transport retrieval accuracy; higher is better, and scores are not averaged across diagnostics. Additive consistency reports \(\|M(\tau_1\!\star\!\tau_2)-M(\tau_1)-M(\tau_2)\|_F\). Right: MSE is mean-squared error between \(F_{\eta_{\mathrm{src}}}(Z)\) and \(V_{\mathrm{src}}(Z)\); rank correlation is between predicted and target source values on held-out prefixes/segments. Takeaway: \(M(\tau)\) is interpretable and supports source-side \(M\!\to\!V\) pretraining.}
    \label{fig:simple-interpretability}
\end{figure*}

\subsection{Part II: complex control with source-augmented target training}
\label{subsec:complex-transfer}

Part II tests transfer to AntMaze, Reacher/Pusher, Walker2d, Hopper, and Humanoid variants. We pretrain \(F_{\eta_{\mathrm{src}}}(Z)\approx V_{\mathrm{src}}(Z)\) on source matrix states, initialize \(\eta_{\mathrm{tgt}}^{(0)}=\eta_{\mathrm{src}}\), and then train the target critic and policy with target interactions. Thus MSRL transfers a matrix-value initialization, not a source policy, source action model, or zero-shot controller.

We compare with from-scratch RL baselines, source-initialized transfer baselines, and MSRL ablations. Normalized few-shot AUC answers Q2--Q3 under matched target budgets; convergence curves show whether the gain is early learning speed. Details and negative controls are in Appendix~\ref{app:experiments}.

\begin{table}[t]
\centering
\small
\caption{Normalized few-shot AUC on the complex-control target suite. Entries are mean\(\pm\)std over seeds, normalized within each target family. Baselines include from-scratch control, source-initialized transfer, and MSRL ablations; the key comparison is MSRL from scratch versus MSRL with source-pretrained \(F_{\eta_{\mathrm{src}}}\) initialization.}
\label{tab:main-complex-results}
\resizebox{\linewidth}{!}{%
\begin{tabular}{llcccccc}
\toprule\toprule
Method & Initialized component & AntMaze & Reacher/Pusher & Walker2d & Hopper & Humanoid & Avg. AUC \\
\midrule\midrule
PPO & none & 0.36$\pm$0.07 & 0.41$\pm$0.06 & 0.45$\pm$0.05 & 0.53$\pm$0.06 & 0.34$\pm$0.07 & 0.40 \\
SAC & none & 0.43$\pm$0.06 & 0.50$\pm$0.05 & 0.53$\pm$0.05 & 0.51$\pm$0.05 & 0.42$\pm$0.06 & 0.48 \\
TD3 & none & 0.41$\pm$0.06 & 0.48$\pm$0.06 & 0.51$\pm$0.05 & 0.50$\pm$0.05 & 0.40$\pm$0.06 & 0.46 \\
MBPO & none & 0.47$\pm$0.05 & 0.48$\pm$0.05 & 0.57$\pm$0.04 & 0.55$\pm$0.05 & 0.46$\pm$0.06 & 0.52 \\
DreamerV3 & none & 0.50$\pm$0.05 & 0.57$\pm$0.04 & 0.60$\pm$0.04 & 0.58$\pm$0.04 & 0.49$\pm$0.05 & 0.55 \\
TD-MPC & none & 0.49$\pm$0.05 & 0.58$\pm$0.04 & 0.62$\pm$0.04 & 0.60$\pm$0.04 & 0.51$\pm$0.05 & 0.57 \\
\midrule
SF-GPI-FT & successor features & 0.52$\pm$0.05 & 0.55$\pm$0.04 & 0.58$\pm$0.04 & 0.56$\pm$0.04 & 0.47$\pm$0.05 & 0.53 \\
UVFA-FT & universal value encoder/head & 0.51$\pm$0.05 & 0.57$\pm$0.04 & 0.60$\pm$0.04 & 0.58$\pm$0.04 & 0.49$\pm$0.05 & 0.55 \\
DBC/Bisimulation-FT & bisimulation encoder/value head & 0.53$\pm$0.05 & 0.58$\pm$0.04 & 0.61$\pm$0.04 & 0.59$\pm$0.04 & 0.50$\pm$0.05 & 0.56 \\
Traj2Value-RNN-FT & recurrent trajectory-value encoder & 0.55$\pm$0.05 & 0.60$\pm$0.04 & 0.63$\pm$0.04 & 0.61$\pm$0.04 & 0.52$\pm$0.05 & 0.58 \\
Traj2Value-Transformer-FT & transformer trajectory-value encoder & 0.57$\pm$0.05 & 0.62$\pm$0.04 & 0.65$\pm$0.04 & 0.63$\pm$0.04 & 0.54$\pm$0.05 & 0.60 \\
Subgraph-GNN-to-Value-FT & graph value encoder & 0.56$\pm$0.05 & 0.61$\pm$0.04 & 0.64$\pm$0.04 & 0.62$\pm$0.04 & 0.53$\pm$0.05 & 0.59 \\
DreamerV3-PT+FT & latent world model/value & 0.58$\pm$0.04 & 0.63$\pm$0.04 & 0.60$\pm$0.04 & 0.64$\pm$0.04 & 0.55$\pm$0.05 & 0.61 \\
TD-MPC-PT+FT & latent dynamics/value & 0.60$\pm$0.04 & 0.65$\pm$0.04 & 0.68$\pm$0.04 & 0.66$\pm$0.04 & 0.53$\pm$0.05 & 0.63 \\
\midrule
MSRL w/o library & target-trained matrix value & 0.54$\pm$0.05 & 0.60$\pm$0.05 & 0.63$\pm$0.04 & 0.61$\pm$0.04 & 0.52$\pm$0.05 & 0.58 \\
MSRL w/o obstruction & target-trained matrix value & 0.57$\pm$0.05 & 0.62$\pm$0.04 & 0.65$\pm$0.04 & 0.63$\pm$0.04 & 0.54$\pm$0.05 & 0.60 \\
MSRL from scratch & target-trained matrix value & 0.61$\pm$0.04 & 0.67$\pm$0.04 & 0.70$\pm$0.04 & 0.68$\pm$0.04 & 0.59$\pm$0.05 & 0.65 \\
MSRL + pretrained \(F_{\eta_{\mathrm{src}}}\) init & matrix-value branch & \textbf{0.70$\pm$0.04} & \textbf{0.75$\pm$0.03} & \textbf{0.78$\pm$0.03} & \textbf{0.76$\pm$0.03} & \textbf{0.67$\pm$0.04} & \textbf{0.73} \\
\bottomrule\bottomrule
% \label{tab:main-complex-results}
\end{tabular}}
\end{table}

\begin{figure*}[t]
    \centering
    \begin{minipage}[t]{0.32\linewidth}
        \centering
        \includegraphics[width=\linewidth]{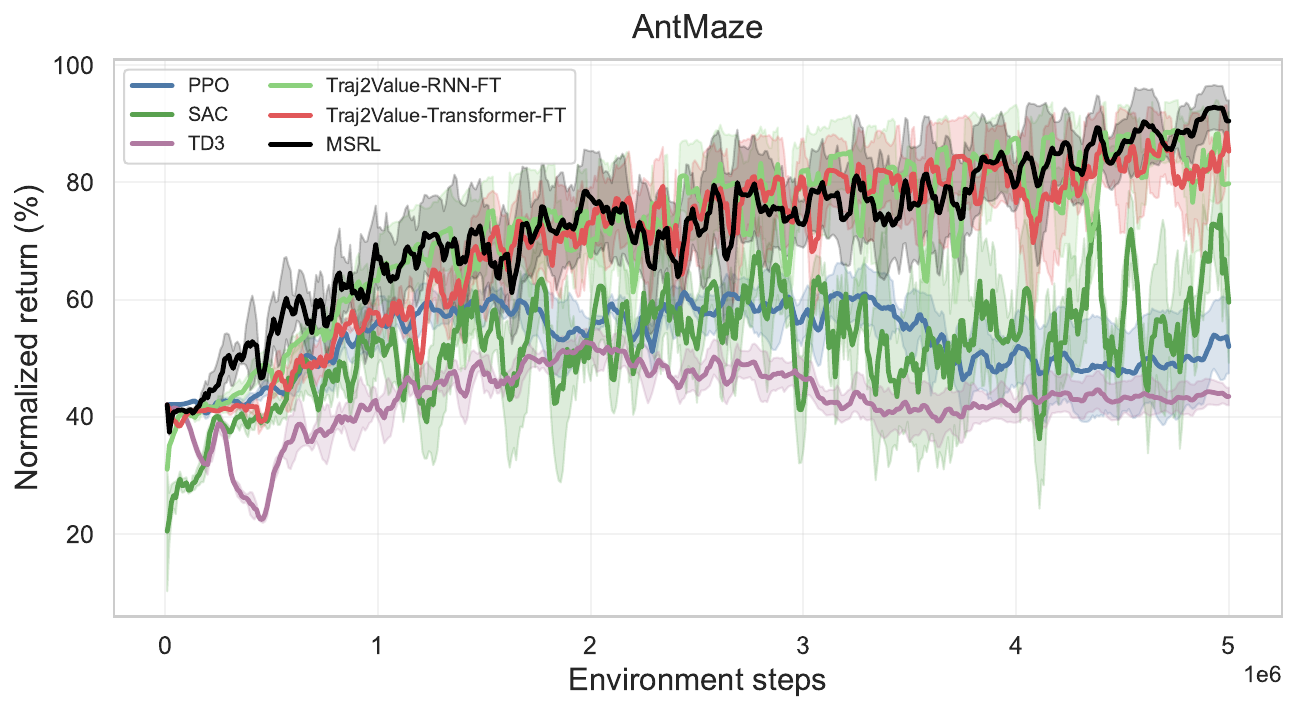}
        \vspace{-0.8em}
        \centerline{\small (a) AntMaze}
    \end{minipage}
    \hfill
    \begin{minipage}[t]{0.32\linewidth}
        \centering
        \includegraphics[width=\linewidth]{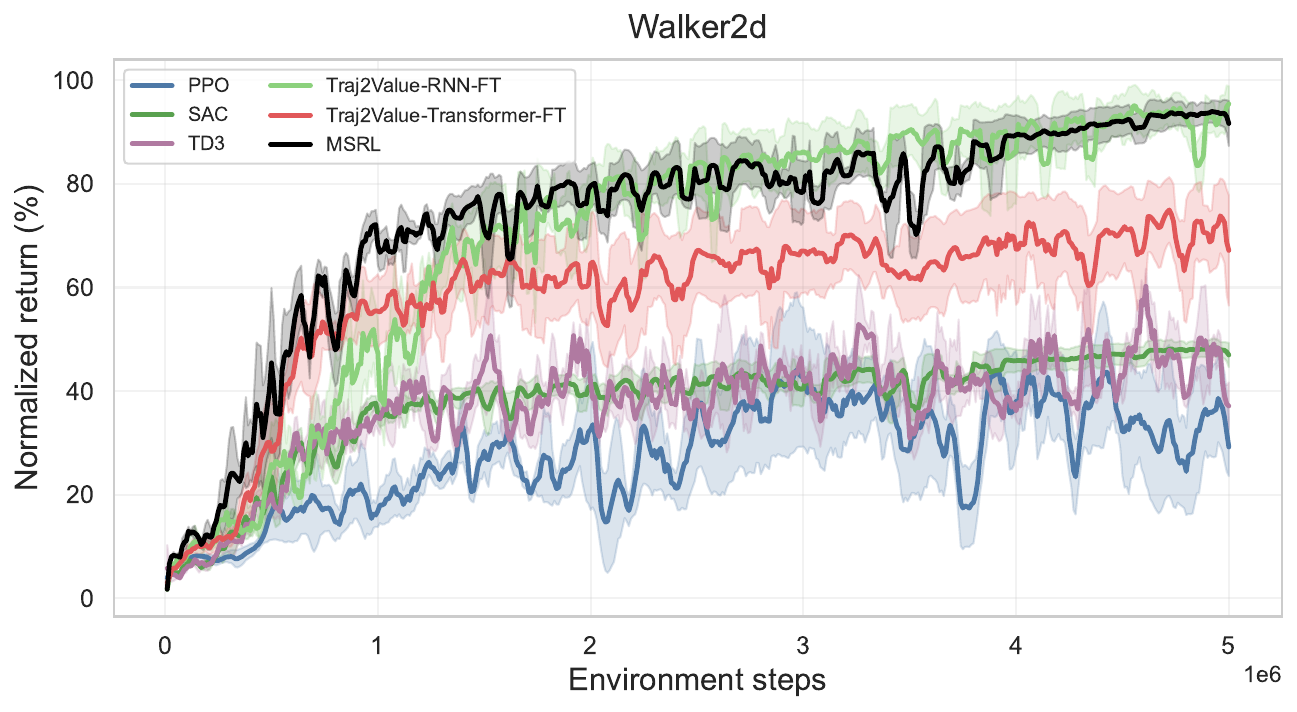}
        \vspace{-0.8em}
        \centerline{\small (b) Walker2d}
    \end{minipage}
    \hfill
    \begin{minipage}[t]{0.32\linewidth}
        \centering
        \includegraphics[width=\linewidth]{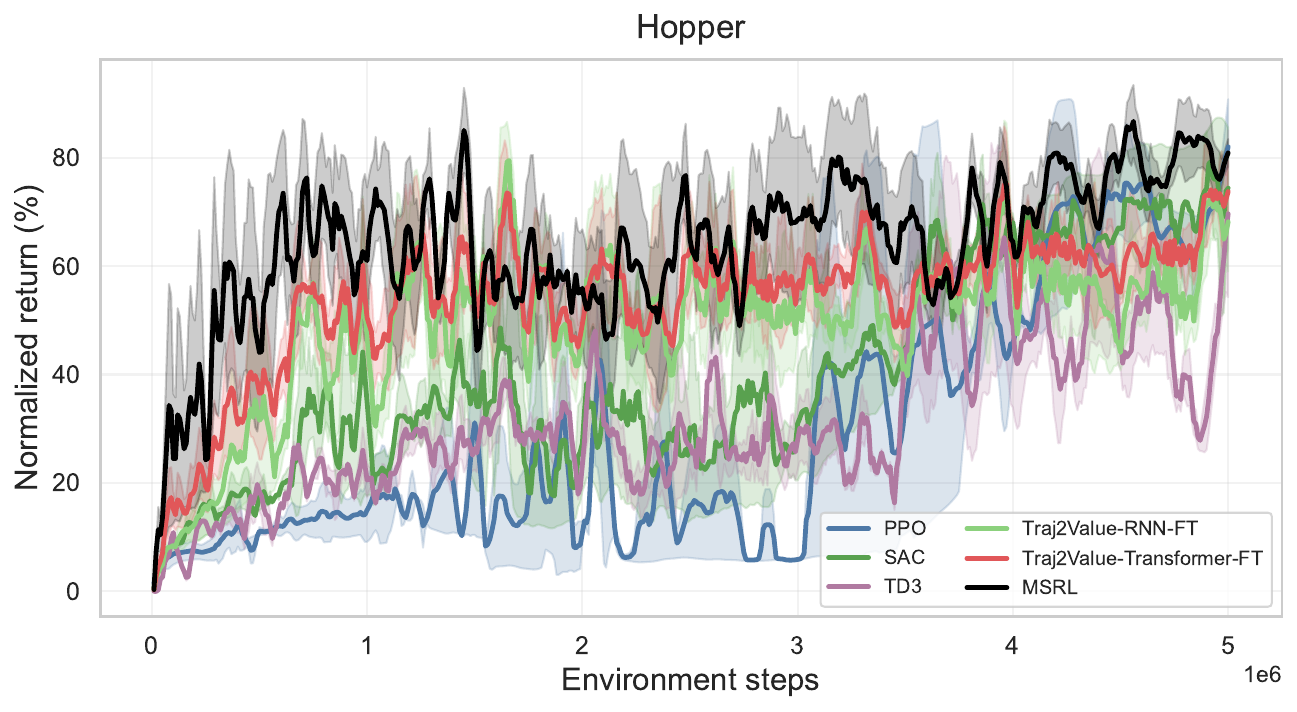}
        \vspace{-0.8em}
        \centerline{\small (c) Hopper}
    \end{minipage}
    \caption{Part II convergence curves. Each panel reports normalized return versus target-environment steps; solid curves are seed means and shaded regions show variability. MSRL transfers only the source-pretrained \(F_{\eta_{\mathrm{src}}}\) initialization and then fine-tunes on target data. Together with Table~\ref{tab:main-complex-results}, the curves show faster early target learning and strong final return under the same interaction budget.}
    \label{fig:part2-convergence}
\end{figure*}

Figure~\ref{fig:simple-interpretability} answers Q1 by validating interpretable source-side structure and additivity. Table~\ref{tab:main-complex-results} answers Q2: MSRL from scratch improves over the strongest target-only baseline in average AUC (0.65 vs.\ 0.57). Table~\ref{tab:main-complex-results} and Figure~\ref{fig:part2-convergence} answer Q3: source-pretrained matrix-value initialization further raises AUC to 0.73 and accelerates target adaptation, so the gain is matrix-value transfer rather than zero-shot source-policy transfer.

\section{Limitations and Conclusion}\label{sec:conclusion}

\textbf{Limitations.}
The method is not universal. It requires an aligned transition lift across source and target settings, and \(M(\tau)\) preserves only the low-order additive affine-quadratic information induced by that lift. As a commutative sum of one-step outer products, \(M(\tau)\) does not encode arbitrary chronological order beyond what is visible in the lifted transition vectors. The matrix-conditioned Bellman interface also depends on sufficiency and smoothness assumptions, so the approximate additive value rule is a local modeling rationale, not an exact dynamic-programming identity for arbitrary matrix sums. Finally, the obstruction certificate is exact only for the true reachable matrix set; in experiments, it is implemented by a learned surrogate filter.

Within these limitations, we proposed a matrix-centered view of compositional generalization in sequential decision making. Trajectory segments become reusable positive semidefinite matrix increments; concatenation and overlap assembly become matrix operations; and source-trained matrix-value knowledge initializes target-side learning under an aligned lift. Experiments support this interface through descriptor diagnostics and finite-budget source-augmented target training.

{
\small
\nocite{*}
\bibliographystyle{unsrtnat}
\bibliography{ref}
}

\appendix
\newpage

\providecommand{\AppendixPlaceholderFigure}[1]{%
\begin{center}
\fbox{\begin{minipage}{0.88\linewidth}
\centering\vspace{0.9em}\emph{#1}\vspace{0.9em}
\end{minipage}}
\end{center}}

\section{Matrix Block Structure}
\label{app:matrix-blocks}

This appendix records the block form of the matrix proxy. A trajectory segment is treated as an observable sampler of local compositional structure, and \(M(\tau)\) stores a low-order matrix proxy for that sampled structure.

\subsection{Step Lift and Block Notation}

Write
\[
\psi_t=
\begin{bmatrix}
c_t\\
\delta_t\\
\chi_t^c\\
\chi_t^\delta\\
r_t\\
1
\end{bmatrix},
\qquad
\chi_t^c:=\xi(a_t)\otimes c_t,
\qquad
\chi_t^\delta:=\xi(a_t)\otimes \delta_t .
\]
Then
\[
M(\tau)=\sum_{t=0}^{T-1}\psi_t\psi_t^\top
=
\begin{bmatrix}
M_{cc} & M_{c\delta} & M_{c,\chi^c} & M_{c,\chi^\delta} & M_{cr} & M_{c1}\\
M_{\delta c} & M_{\delta\delta} & M_{\delta,\chi^c} & M_{\delta,\chi^\delta} & M_{\delta r} & M_{\delta 1}\\
M_{\chi^c,c} & M_{\chi^c,\delta} & M_{\chi^c,\chi^c} & M_{\chi^c,\chi^\delta} & M_{\chi^c r} & M_{\chi^c 1}\\
M_{\chi^\delta,c} & M_{\chi^\delta,\delta} & M_{\chi^\delta,\chi^c} & M_{\chi^\delta,\chi^\delta} & M_{\chi^\delta r} & M_{\chi^\delta 1}\\
M_{rc} & M_{r\delta} & M_{r,\chi^c} & M_{r,\chi^\delta} & M_{rr} & M_{r1}\\
M_{1c} & M_{1\delta} & M_{1,\chi^c} & M_{1,\chi^\delta} & M_{1r} & M_{11}
\end{bmatrix}.
\]
The block labels correspond to local position, directed transport, action-conditioned position, action-conditioned transport, reward, and the constant channel.

\subsection{Local Position Moment}

The block
\[
M_{cc}=\sum_t c_t c_t^\top
\]
records the distribution of local transition contexts sampled by the segment. In the graph-matrix interpretation, this block is the node-like second moment of the sampled local structure.

\subsection{Directed Transport Energy}

The block
\[
M_{\delta\delta}=\sum_t \delta_t\delta_t^\top
\]
records the anisotropy and magnitude of directed displacement. It distinguishes horizontal, vertical, isotropic, and curved transport patterns at low order.

\subsection{Incidence and Direction-Sensitive Coupling}

The mixed block
\[
M_{c\delta}=\sum_t c_t\delta_t^\top
\]
records the dependence between local context and directed transport. This block is direction-sensitive: it changes under edge reversal and detects oriented incidence patterns such as directed paths and cycles. It is not, by itself, a full chronological-order encoding, because the descriptor is a commutative sum over one-step lifted outer products.

\subsection{Action-Conditioned Blocks}

If \(\xi\) is one-hot, then
\[
M_{\chi^c,\chi^c}
=
\sum_t (\xi(a_t)\otimes c_t)(\xi(a_t)\otimes c_t)^\top
\]
records the local contexts associated with each action, while
\[
M_{\chi^\delta,\chi^\delta}
=
\sum_t (\xi(a_t)\otimes \delta_t)(\xi(a_t)\otimes \delta_t)^\top
\]
records action-conditioned directed transport. The mixed block
\[
M_{\chi^c,\chi^\delta}
=
\sum_t (\xi(a_t)\otimes c_t)(\xi(a_t)\otimes \delta_t)^\top
\]
captures action-conditioned incidence.

\subsection{Reward and Constant Blocks}

The reward-coupling blocks
\[
M_{cr},\quad M_{\delta r},\quad M_{\chi^c r},\quad M_{\chi^\delta r}
\]
record the alignment of reward with local position, directed transport, and action-conditioned transport. The block
\[
M_{rr}=\sum_t r_t^2
\]
records second-order reward magnitude. The constant channel gives
\[
M_{11}=T,
\qquad
M_{c1}=\sum_t c_t,
\qquad
M_{\delta 1}=\sum_t \delta_t,
\]
\[
M_{\chi^c1}=\sum_t \chi_t^c,
\qquad
M_{\chi^\delta1}=\sum_t \chi_t^\delta,
\qquad
M_{r1}=\sum_t r_t.
\]
Thus segment length, average local bias, net transport, action-conditioned first-order totals, and cumulative reward remain linearly recoverable from the same descriptor.

\section{Structural Interpretation}
\label{app:structural-discussion}

This appendix records the intended reading of the structural theorems in Section~\ref{sec:traj-matrix-theory}. All statements are relative to the common step lift fixed in Section~\ref{sec:traj-matrix}. The descriptor preserves the low-order additive information induced by that lift; it does not preserve every history-level property.

\subsection{Gauge Covariance}

Theorem~\ref{thm:traj-well-defined} is a descriptor-level statement. It establishes that \(M(\tau)\) is a fixed-dimensional positive semidefinite matrix whenever the lifted quantities are finite. The gauge statement is covariance under a common global change of observable and action coordinates. The intrinsic descriptor is therefore the congruence class of \(M(\tau)\), not a particular coordinate realization.

\subsection{Low-Order Completeness}

Theorem~\ref{thm:traj-complete} is intentionally stated for \(\mathfrak F_{\mathrm{aq}}\). The class contains exactly the additive first-order and quadratic statistics induced by the fixed lift. Because the lift includes a constant channel, the induced class includes segment length, cumulative reward, and first-order local-structure summaries in addition to second-order moments.

\subsection{Segment Composition}

Theorem~\ref{thm:traj-composition} states that valid concatenation of observed segments becomes addition in matrix space. A stored segment contributes the increment \(M(\sigma)\), and the accumulated state is updated by adding this increment.

\subsection{Local-to-Global Gluing}

Theorem~\ref{thm:traj-gluing} gives an inclusion--exclusion formula for recovering a global descriptor from overlapping local restrictions. The same formula can also form a candidate glued matrix from local descriptors. That candidate is a representation-level object; realizability in a particular environment is a separate question.

\subsection{Minimal Sufficiency}

Theorem~\ref{thm:traj-minimal} is a statement inside the class of fixed-dimensional additive descriptors that preserve all functionals in \(\mathfrak F_{\mathrm{aq}}\). It does not assert global minimality over arbitrary history representations. It says that, within the stated admissible class, the matrix proxy is already the coarsest sufficient object up to measurable post-processing.

\subsection{Realizability Obstruction}
\label{app:obstruction-discussion}

Theorem~\ref{thm:traj-obstruction} separates algebraic validity in matrix space from realizability in the environment. A candidate matrix may be well defined as a sum or an inclusion--exclusion combination while still falling outside the reachable set \(\mathcal R_e^{(T)}\). Positive obstruction certifies that no actual rollout of the matching horizon in the environment produces that matrix.

\subsection{Matrix-Value Approximation}

Theorem~\ref{thm:matrix-bellman-linearization} connects the representation to control. Under a matrix-conditioned value law and local smoothness in matrix coordinates, value differences induced by small compositional increments are first-order additive. Corollary~\ref{cor:centered-matrix-value} gives the centered form of the same approximation.

\section{Symbolic Subgraph Patterns}
\label{app:symbolic-subgraph-patterns}
\label{app:matrix-examples}

This appendix gives symbolic examples of what the matrix proxy records for different local subgraph patterns. No numerical coordinates are fixed. Instead, we describe the algebraic identities that the blocks of \(M\) must satisfy under reversal symmetry, directed cycles, sequential paths, and action-labeled subgraphs. These examples should be read as structural interpretations of the block formulas in Appendix~\ref{app:matrix-blocks}, not as additional assumptions used in the main theorems.

\subsection{Edge-Indexed Subgraph Descriptor}

Let \(\mathcal G=(\mathcal V,\mathcal E)\) be a finite directed multigraph sampled by a trajectory segment or by a local subgraph extraction procedure. Each directed edge \(e\in\mathcal E\) has a tail vertex \(u_e\), a head vertex \(v_e\), an action label \(a_e\), and a reward \(r_e\). Write
\[
x_e^-:=\phi(u_e),
\qquad
x_e^+:=\phi(v_e),
\qquad
\eta_e:=\xi(a_e),
\]
and define the edge midpoint and directed displacement by
\[
c_e:=\frac{x_e^-+x_e^+}{2},
\qquad
\delta_e:=x_e^+-x_e^-.
\]
The edge lift is
\[
\psi_e=
\begin{bmatrix}
c_e\\
\delta_e\\
\eta_e\otimes c_e\\
\eta_e\otimes \delta_e\\
r_e\\
1
\end{bmatrix}.
\]
The symbolic subgraph matrix is
\[
M(\mathcal G):=\sum_{e\in\mathcal E}\psi_e\psi_e^\top.
\]
When \(\mathcal G\) is exactly the directed multiset of transitions in a trajectory segment \(\tau\), this definition coincides with \(M(\tau)\).

For the rest of this appendix, write
\[
\chi_e^c:=\eta_e\otimes c_e,
\qquad
\chi_e^\delta:=\eta_e\otimes\delta_e .
\]
The blocks of \(M(\mathcal G)\) are therefore
\[
M_{cc}=\sum_{e\in\mathcal E}c_ec_e^\top,
\qquad
M_{c\delta}=\sum_{e\in\mathcal E}c_e\delta_e^\top,
\qquad
M_{\delta\delta}=\sum_{e\in\mathcal E}\delta_e\delta_e^\top,
\]
\[
M_{c1}=\sum_{e\in\mathcal E}c_e,
\qquad
M_{\delta 1}=\sum_{e\in\mathcal E}\delta_e,
\qquad
M_{11}=|\mathcal E|,
\]
\[
M_{cr}=\sum_{e\in\mathcal E}r_ec_e,
\qquad
M_{\delta r}=\sum_{e\in\mathcal E}r_e\delta_e,
\qquad
M_{rr}=\sum_{e\in\mathcal E}r_e^2,
\qquad
M_{r1}=\sum_{e\in\mathcal E}r_e,
\]
and the action-conditioned blocks are
\[
M_{\chi^c,\chi^c}
=
\sum_{e\in\mathcal E}
(\eta_e\otimes c_e)(\eta_e\otimes c_e)^\top,
\]
\[
M_{\chi^\delta,\chi^\delta}
=
\sum_{e\in\mathcal E}
(\eta_e\otimes \delta_e)(\eta_e\otimes \delta_e)^\top,
\]
\[
M_{\chi^c,\chi^\delta}
=
\sum_{e\in\mathcal E}
(\eta_e\otimes c_e)(\eta_e\otimes \delta_e)^\top.
\]
Using the Kronecker identity
\[
(p\otimes u)(q\otimes v)^\top=(pq^\top)\otimes(uv^\top),
\]
these may also be written as
\[
M_{\chi^c,\chi^c}
=
\sum_{e\in\mathcal E}
(\eta_e\eta_e^\top)\otimes(c_ec_e^\top),
\]
\[
M_{\chi^\delta,\chi^\delta}
=
\sum_{e\in\mathcal E}
(\eta_e\eta_e^\top)\otimes(\delta_e\delta_e^\top),
\]
and
\[
M_{\chi^c,\chi^\delta}
=
\sum_{e\in\mathcal E}
(\eta_e\eta_e^\top)\otimes(c_e\delta_e^\top).
\]
Thus each symbol in the matrix has a direct interpretation as an additive edge statistic.

\subsection{Reversal Parity and Symmetric Subgraphs}

A directed edge reversal changes the sign of displacement but preserves the midpoint. More precisely, if \(\bar e\) is the reverse of \(e\), so that
\[
u_{\bar e}=v_e,
\qquad
v_{\bar e}=u_e,
\]
then
\[
c_{\bar e}=c_e,
\qquad
\delta_{\bar e}=-\delta_e.
\]
If, in addition, the reversed edge carries the same action feature and the same reward,
\[
\eta_{\bar e}=\eta_e,
\qquad
r_{\bar e}=r_e,
\]
then
\[
\chi_{\bar e}^c=\chi_e^c,
\qquad
\chi_{\bar e}^\delta=-\chi_e^\delta.
\]
Thus the channels
\[
c,\quad \chi^c,\quad r,\quad 1
\]
are even under reversal, while the channels
\[
\delta,\quad \chi^\delta
\]
are odd under reversal.

\begin{lemma}[Block parity under reversal symmetry]
\label{lem:reversal-parity}
Suppose \(\mathcal G\) is reversal-symmetric in the following sense: there is a fixed-point-free involution \(e\mapsto\bar e\) on \(\mathcal E\) such that
\[
u_{\bar e}=v_e,
\qquad
v_{\bar e}=u_e,
\qquad
\eta_{\bar e}=\eta_e,
\qquad
r_{\bar e}=r_e.
\]
Then every block coupling an even channel with an odd channel is zero. In particular,
\[
M_{c\delta}=0,
\qquad
M_{c,\chi^\delta}=0,
\qquad
M_{\chi^c,\delta}=0,
\qquad
M_{\chi^c,\chi^\delta}=0,
\]
\[
M_{r\delta}=0,
\qquad
M_{r,\chi^\delta}=0,
\qquad
M_{1\delta}=0,
\qquad
M_{1,\chi^\delta}=0,
\]
and the transposed blocks are zero as well.
\end{lemma}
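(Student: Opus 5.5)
The proof pairs each edge with its reverse and shows that the contributions to an even--odd block cancel. The fixed-point-free involution \(e\mapsto\bar e\) splits \(\mathcal E\) into disjoint two-element orbits \(\{e,\bar e\}\). Any block of \(M(\mathcal G)\) is a sum over edges, so it is a sum over orbits of \(\psi$-block contributions from \(e\) and \(\bar e\). We therefore only need to show that, for each even--odd block, the contributions of \(e\) and \(\bar e\) cancel.

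\emph{Parity of the channels.} Using the hypotheses \(u_{\bar e}=v_e\) and \(v_{\bar e}=u_e\), the midpoint \(c_e=(\phi(u_e)+\phi(v_e))/2\) is unchanged under reversal, so \(c_{\bar e}=c_e\). The displacement changes sign, \(\delta_{\bar e}=-\delta_e\). Since \(\eta_{\bar e}=\eta_e\), bilinearity of the Kronecker product gives
\[
\chi^c_{\bar e}=\eta_e\otimes c_e=\chi^c_e,
\qquad
\chi^\delta_{\bar e}=\eta_e\otimes(-\delta_e)=-\chi^\delta_e .
\]
The reward satisfies \(r_{\bar e}=r_e\) by hypothesis, and the constant channel is trivially even. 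Hence every even channel \(p\in\{c,\chi^c,r,1\}\) satisfies \(p_{\bar e}=p_e\), and every odd channel \(s\in\{\delta,\chi^\delta\}\) satisfies \(s_{\bar e}=-s_e\).

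\emph{Cancellation over orbits.} For an even channel \(p\) and an odd channel \(s\), the corresponding block is \(M_{ps}=\sum_{e\in\mathcal E}p_e s_e^\top\). Grouping the sum by orbits gives
\[
M_{ps}=\sum_{\{e,\bar e\}}\bigl(p_es_e^\top+p_{\bar e}s_{\bar e}^\top\bigr)=\sum_{\{e,\bar e\}}\bigl(p_es_e^\top-p_es_e^\top\bigr)=0 .
\]
The grouping is legitimate because the orbits partition the finite set \(\mathcal E\); this is where the fixed-point-free assumption matters. A fixed edge would need \(\delta_e=-\delta_e\), i.e.\ \(\delta_e=0\). Such an edge would still contribute zero, but the orbit argument is cleanest without it.

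The transposed blocks \(M_{sp}=M_{ps}^\top\) vanish as well, since \(M\) is symmetric. Specializing \(p\) and \(s\) yields each listed identity, such as \(M_{c\delta}\), \(M_{c,\chi^\delta}\), \(M_{\chi^c,\delta}\), \(M_{\chi^c,\chi^\delta}\), \(M_{r\delta}\), \(M_{r,\chi^\delta}\), \(M_{1\delta}\) and \(M_{1,\chi^\delta}\). There is no real obstacle here; the only point needing care is verifying the Kronecker parity of \(\chi^\delta\), which follows from linearity in the second factor.
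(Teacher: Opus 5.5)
Your proof is correct and takes essentially the same approach as the paper. The paper likewise partitions \(\mathcal E\) into reversal pairs (via a representative set \(\mathcal R\)), notes that \(c,\chi^c,r,1\) are even and \(\delta,\chi^\delta\) are odd under \(e\mapsto\bar e\), cancels \(Y_eZ_e^\top+Y_{\bar e}Z_{\bar e}^\top\) pair by pair, and handles the transposed blocks through \(M_{ZY}=M_{YZ}^\top\).
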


\begin{proof}
Choose one representative from each reversal pair and call the representative set \(\mathcal R\), so that
\[
\mathcal E=\mathcal R\cup\{\bar e:e\in\mathcal R\}
\]
as a disjoint union. Let \(Y_e\) be any even channel, meaning one of
\[
c_e,\quad \chi_e^c,\quad r_e,\quad 1,
\]
and let \(Z_e\) be any odd channel, meaning one of
\[
\delta_e,\quad \chi_e^\delta.
\]
By the reversal assumptions,
\[
Y_{\bar e}=Y_e,
\qquad
Z_{\bar e}=-Z_e.
\]
Therefore the contribution of a reversal pair to the mixed block \(M_{YZ}\) is
\[
Y_eZ_e^\top+Y_{\bar e}Z_{\bar e}^\top
=
Y_eZ_e^\top+Y_e(-Z_e)^\top
=
Y_eZ_e^\top-Y_eZ_e^\top
=
0.
\]
Summing over all representative pairs gives \(M_{YZ}=0\). The same argument gives \(M_{ZY}=0\), or equivalently \(M_{ZY}=M_{YZ}^\top=0\). Applying this parity argument to all even--odd channel pairs gives the displayed identities.
\end{proof}

The surviving blocks in a reversal-symmetric subgraph are exactly the even--even and odd--odd blocks. For example,
\[
M_{cc}
=
2\sum_{e\in\mathcal R}c_ec_e^\top,
\qquad
M_{\delta\delta}
=
2\sum_{e\in\mathcal R}\delta_e\delta_e^\top,
\]
\[
M_{\chi^c,\chi^c}
=
2\sum_{e\in\mathcal R}
(\eta_e\otimes c_e)(\eta_e\otimes c_e)^\top,
\]
\[
M_{\chi^\delta,\chi^\delta}
=
2\sum_{e\in\mathcal R}
(\eta_e\otimes \delta_e)(\eta_e\otimes \delta_e)^\top,
\]
\[
M_{cr}
=
2\sum_{e\in\mathcal R}r_ec_e,
\qquad
M_{rr}
=
2\sum_{e\in\mathcal R}r_e^2,
\qquad
M_{11}=2|\mathcal R|.
\]
Thus a reversal-symmetric subgraph has no first-order oriented incidence in the even--odd blocks, but it can still have nonzero local position mass, transport energy, action-conditioned transport energy, reward mass, and horizon.

If the reversed edge has a different action feature or a different reward, the full parity cancellation above need not hold for all action- or reward-coupled blocks. The purely geometric cancellation
\[
M_{c\delta}=0,
\qquad
M_{\delta 1}=0
\]
still follows from \(c_{\bar e}=c_e\) and \(\delta_{\bar e}=-\delta_e\), but blocks such as \(M_{\chi^c,\chi^\delta}\), \(M_{\chi^\delta 1}\), or \(M_{\delta r}\) may remain nonzero unless the corresponding labels also match across reversal pairs.

\subsection{Directed Edge Reversal}

The previous subsection considered a subgraph containing both directions of every edge. We now compare a directed subgraph with its fully reversed version.

Let \(\mathcal G^{\mathrm{rev}}\) be obtained from \(\mathcal G\) by reversing every directed edge while keeping the same action feature and reward attached to the reversed edge. For each edge,
\[
c_e^{\mathrm{rev}}=c_e,
\qquad
\delta_e^{\mathrm{rev}}=-\delta_e,
\qquad
\eta_e^{\mathrm{rev}}=\eta_e,
\qquad
r_e^{\mathrm{rev}}=r_e.
\]
Therefore the blocks containing an even number of displacement-type channels are unchanged:
\[
M_{cc}(\mathcal G^{\mathrm{rev}})=M_{cc}(\mathcal G),
\qquad
M_{\delta\delta}(\mathcal G^{\mathrm{rev}})=M_{\delta\delta}(\mathcal G),
\]
\[
M_{\chi^c,\chi^c}(\mathcal G^{\mathrm{rev}})
=
M_{\chi^c,\chi^c}(\mathcal G),
\qquad
M_{\chi^\delta,\chi^\delta}(\mathcal G^{\mathrm{rev}})
=
M_{\chi^\delta,\chi^\delta}(\mathcal G),
\]
\[
M_{cr}(\mathcal G^{\mathrm{rev}})=M_{cr}(\mathcal G),
\qquad
M_{rr}(\mathcal G^{\mathrm{rev}})=M_{rr}(\mathcal G),
\qquad
M_{11}(\mathcal G^{\mathrm{rev}})=M_{11}(\mathcal G).
\]
The blocks containing exactly one displacement-type channel change sign:
\[
M_{c\delta}(\mathcal G^{\mathrm{rev}})
=
-M_{c\delta}(\mathcal G),
\qquad
M_{\delta 1}(\mathcal G^{\mathrm{rev}})
=
-M_{\delta 1}(\mathcal G),
\]
\[
M_{\delta r}(\mathcal G^{\mathrm{rev}})
=
-M_{\delta r}(\mathcal G),
\qquad
M_{\chi^c,\chi^\delta}(\mathcal G^{\mathrm{rev}})
=
-M_{\chi^c,\chi^\delta}(\mathcal G),
\]
\[
M_{\chi^\delta 1}(\mathcal G^{\mathrm{rev}})
=
-M_{\chi^\delta 1}(\mathcal G),
\qquad
M_{\chi^\delta r}(\mathcal G^{\mathrm{rev}})
=
-M_{\chi^\delta r}(\mathcal G).
\]
Thus \(M\) distinguishes a directed subgraph from its edge-reversed version precisely through the odd-parity blocks. This is a direction-sensitive property, not a claim that \(M\) stores arbitrary chronological order.

\subsection{Sequential Paths}

Let \(P\) be a directed sequential path
\[
v_0\to v_1\to\cdots\to v_L.
\]
Write
\[
x_i:=\phi(v_i),
\qquad
\delta_i:=x_i-x_{i-1},
\qquad
c_i:=\frac{x_{i-1}+x_i}{2},
\qquad
1\le i\le L.
\]
For this path, the geometric blocks satisfy
\[
M_{11}(P)=L,
\]
\[
M_{\delta 1}(P)
=
\sum_{i=1}^L\delta_i
=
\sum_{i=1}^L(x_i-x_{i-1})
=
x_L-x_0,
\]
and
\[
M_{c1}(P)
=
\sum_{i=1}^Lc_i
=
\frac12x_0+\sum_{i=1}^{L-1}x_i+\frac12x_L.
\]
The local mass and transport energy are
\[
M_{cc}(P)
=
\sum_{i=1}^L
\frac{(x_{i-1}+x_i)(x_{i-1}+x_i)^\top}{4},
\]
\[
M_{\delta\delta}(P)
=
\sum_{i=1}^L
(x_i-x_{i-1})(x_i-x_{i-1})^\top.
\]
The incidence block is
\[
M_{c\delta}(P)
=
\sum_{i=1}^L
\frac{x_{i-1}+x_i}{2}(x_i-x_{i-1})^\top.
\]
Its symmetric part telescopes. Indeed, for each edge,
\[
c_i\delta_i^\top+\delta_i c_i^\top
=
x_ix_i^\top-x_{i-1}x_{i-1}^\top.
\]
Therefore
\[
M_{c\delta}(P)+M_{\delta c}(P)
=
\sum_{i=1}^L
\left(c_i\delta_i^\top+\delta_i c_i^\top\right)
=
x_Lx_L^\top-x_0x_0^\top.
\]
Equivalently,
\[
\operatorname{Sym}(M_{c\delta}(P))
:=
\frac12\left(M_{c\delta}(P)+M_{c\delta}(P)^\top\right)
=
\frac12\left(x_Lx_L^\top-x_0x_0^\top\right).
\]
The skew-symmetric part records oriented turning or circulation along the path:
\[
M_{c\delta}(P)-M_{\delta c}(P)
=
\sum_{i=1}^L
\left(c_i\delta_i^\top-\delta_i c_i^\top\right)
=
\sum_{i=1}^L
\left(x_{i-1}x_i^\top-x_ix_{i-1}^\top\right).
\]
Equivalently,
\[
\operatorname{Skew}(M_{c\delta}(P))
:=
\frac12\left(M_{c\delta}(P)-M_{c\delta}(P)^\top\right)
=
\frac12
\sum_{i=1}^L
\left(x_{i-1}x_i^\top-x_ix_{i-1}^\top\right).
\]

The reversed path
\[
P^{\mathrm{rev}}:v_L\to v_{L-1}\to\cdots\to v_0
\]
has the same midpoint sequence as an unordered edge set but all displacements are negated. Therefore
\[
M_{cc}(P^{\mathrm{rev}})=M_{cc}(P),
\qquad
M_{\delta\delta}(P^{\mathrm{rev}})=M_{\delta\delta}(P),
\qquad
M_{11}(P^{\mathrm{rev}})=M_{11}(P),
\]
while
\[
M_{c\delta}(P^{\mathrm{rev}})
=
-M_{c\delta}(P),
\qquad
M_{\delta 1}(P^{\mathrm{rev}})
=
-M_{\delta 1}(P).
\]
Thus \(M\) distinguishes a path from its edge-reversed path through the sign of directed incidence and net transport. However, if two trajectories traverse exactly the same directed edges with exactly the same lifted one-step vectors but in a different temporal order, then their matrices are identical, because \(M\) is a sum over one-step outer products.

\subsection{Directed Cycles}

A directed cycle is a sequential path with
\[
v_L=v_0,
\qquad
x_L=x_0.
\]
The path identities above immediately give
\[
M_{\delta 1}(C)=x_L-x_0=0.
\]
Moreover,
\[
M_{c\delta}(C)+M_{\delta c}(C)
=
x_Lx_L^\top-x_0x_0^\top
=
0.
\]
Since \(M_{\delta c}(C)=M_{c\delta}(C)^\top\), this is equivalent to
\[
M_{c\delta}(C)^\top=-M_{c\delta}(C).
\]
Thus, for a closed directed cycle, the geometric incidence block \(M_{c\delta}\) is skew-symmetric. Its skew-symmetric component records oriented circulation. If the same cycle is traversed in the reverse direction, then
\[
M_{c\delta}(C^{\mathrm{rev}})
=
-M_{c\delta}(C),
\]
while
\[
M_{cc}(C^{\mathrm{rev}})=M_{cc}(C),
\qquad
M_{\delta\delta}(C^{\mathrm{rev}})=M_{\delta\delta}(C),
\qquad
M_{11}(C^{\mathrm{rev}})=M_{11}(C).
\]
If both orientations of the same cycle are included with matched actions and rewards, then the odd-parity blocks cancel as in Lemma~\ref{lem:reversal-parity}, and in particular
\[
M_{c\delta}=0.
\]

The cycle identity \(M_{\delta 1}=0\) is purely geometric. The action-conditioned first-order transport
\[
M_{\chi^\delta 1}(C)
=
\sum_{i=1}^L \xi(a_i)\otimes \delta_i
\]
need not vanish unless the action features factor out of the sum. For example, if all edges in the cycle have the same action feature \(\eta\), then
\[
M_{\chi^\delta 1}(C)
=
\eta\otimes \sum_{i=1}^L\delta_i
=
0.
\]
If the action labels vary around the cycle, then \(M_{\chi^\delta 1}(C)\) records action-conditioned circulation and may be nonzero even though \(M_{\delta 1}(C)=0\).

\subsection{Action-Labeled Subgraphs}

Assume in this subsection that \(\xi\) is one-hot. For each action \(a\), write \(e_a\) for its one-hot vector and define the action-specific edge set
\[
\mathcal E_a:=\{e\in\mathcal E:a_e=a\}.
\]
Partition the tensorized blocks according to action labels. Then the \((a,b)\)-block of \(M_{\chi^c,\chi^c}\) is
\[
\left[M_{\chi^c,\chi^c}\right]_{ab}
=
\sum_{e\in\mathcal E}
(e_a^\top \xi(a_e))(\xi(a_e)^\top e_b)c_ec_e^\top.
\]
Since \(\xi(a_e)\) is one-hot, this equals
\[
\left[M_{\chi^c,\chi^c}\right]_{ab}
=
\begin{cases}
\displaystyle
\sum_{e\in\mathcal E_a}c_ec_e^\top,
& a=b,\\[0.8em]
0,
& a\neq b.
\end{cases}
\]
Similarly,
\[
\left[M_{\chi^\delta,\chi^\delta}\right]_{ab}
=
\begin{cases}
\displaystyle
\sum_{e\in\mathcal E_a}\delta_e\delta_e^\top,
& a=b,\\[0.8em]
0,
& a\neq b,
\end{cases}
\]
and
\[
\left[M_{\chi^c,\chi^\delta}\right]_{ab}
=
\begin{cases}
\displaystyle
\sum_{e\in\mathcal E_a}c_e\delta_e^\top,
& a=b,\\[0.8em]
0,
& a\neq b.
\end{cases}
\]
The mixed non-action/action blocks decompose as
\[
\left[M_{c,\chi^c}\right]_a
=
\sum_{e\in\mathcal E_a}c_ec_e^\top,
\qquad
\left[M_{c,\chi^\delta}\right]_a
=
\sum_{e\in\mathcal E_a}c_e\delta_e^\top,
\]
and
\[
\left[M_{\delta,\chi^c}\right]_a
=
\sum_{e\in\mathcal E_a}\delta_ec_e^\top,
\qquad
\left[M_{\delta,\chi^\delta}\right]_a
=
\sum_{e\in\mathcal E_a}\delta_e\delta_e^\top.
\]
Thus two subgraphs can have the same unconditioned geometric blocks
\[
M_{cc},\qquad M_{c\delta},\qquad M_{\delta\delta},
\]
while having different action-conditioned blocks if the same geometry is assigned to different action labels. This is the sense in which \(M\) separates pure directed transport from action-conditioned directed transport.

\subsection{What the Symbolic Patterns Show}

The preceding identities imply the following structural reading. A reversal-symmetric subgraph has vanishing even--odd blocks, so the matrix records local mass and transport energy but no net oriented incidence. A directed path has
\[
M_{\delta 1}=x_L-x_0
\]
and
\[
M_{c\delta}+M_{\delta c}=x_Lx_L^\top-x_0x_0^\top,
\]
so its endpoint displacement and endpoint second-moment difference are linearly visible from the geometric blocks. A directed cycle has
\[
M_{\delta 1}=0,
\qquad
M_{c\delta}+M_{\delta c}=0,
\]
so its incidence block is skew-symmetric and records oriented circulation. An action-labeled subgraph decomposes the same geometric statistics by action label through the tensorized channels.

These facts explain why the descriptor is naturally interpreted as a low-order additive graph signal of sampled directed substructure. The descriptor is not a complete encoding of chronological order. If two trajectories produce the same multiset of lifted one-step vectors, then
\[
\sum_t\psi_t\psi_t^\top
\]
is identical for both. The information preserved by \(M\) is exactly the additive affine-quadratic information associated with the chosen lift, as formalized by Theorem~\ref{thm:traj-complete}.

\subsection{One-Dimensional Motion Segment}

Consider a one-dimensional segment with \(d=1\), no action channel, and rewards suppressed. A step has midpoint \(c_t\) and displacement \(\delta_t\), so the descriptor reduces to blocks involving
\[
\sum_t c_t^2,
\qquad
\sum_t c_t\delta_t,
\qquad
\sum_t \delta_t^2,
\qquad
\sum_t c_t,
\qquad
\sum_t \delta_t,
\qquad
T.
\]
The block \(\sum_t c_t\delta_t\) records the orientation of transport relative to location.

\subsection{Order-Reversal Pair}

Two segments may visit the same local contexts with the same displacement magnitudes but traverse them in opposite order. In that case, the local-position block and transport-energy block can be similar, while the incidence block \(M_{c\delta}\) changes. This is the basic mechanism by which the descriptor separates order-sensitive sampled structure from unordered occupancy.

\subsection{Action-Labeled Segment Pair}

If two segments share the same geometric displacement but use different action labels, the tensorized blocks \(M_{\chi^c,\chi^c}\), \(M_{\chi^\delta,\chi^\delta}\), and \(M_{\chi^c,\chi^\delta}\) differ. The descriptor therefore separates pure transport from action-conditioned transport.

\subsection{Overlapping Segment Assembly}

For an interval cover \(\{I_k\}_{k=1}^K\), local descriptors over the restrictions \(\tau|_{I_k}\) double count overlap regions. Inclusion--exclusion removes the double counting and reconstructs the global matrix exactly for a genuine rollout.

\subsection{Unrealizable Candidate Composition}

A candidate matrix can be assembled from locally valid pieces while violating a global environment constraint, such as a locked door, one-way passage, or resource requirement. In that case the candidate may be algebraically valid in \(\mathbb S^m\), while its distance to \(\mathcal R_e^{(T)}\) remains positive.

\section{Matrix-Space Reinforcement Learning Details}
\label{app:matrix-rl-formal}

This appendix gives the implementation-level form of the matrix-space planning interface.

\subsection{Lifted Replay State}

At time \(t\), the algorithm stores
\[
\widetilde s_t=(o_t,Z_t),
\qquad
Z_t=M(\tau_{0:t-1}).
\]
After observing \((o_t,a_t,r_t,o_{t+1})\), the lifted state is updated by
\[
Z_{t+1}=Z_t+\psi_t\psi_t^\top .
\]
The observation \(o_t\) supplies the current local decision context, and \(Z_t\) supplies accumulated sampled structure.

\subsection{Segment Library}

The segment library \(\mathcal L\) stores local trajectory segments \(\sigma\), their matrix increments \(\Delta(\sigma)=M(\sigma)\), a head action, entry and exit context summaries, and optional environment tags. Library entries can be drawn from replay, from short successful rollouts, or from mined subsequences with high reuse frequency.

\subsection{Candidate Future Set}

For a lifted state \((o,Z)\) and action \(a\), the candidate set contains segment continuations whose head action matches \(a\) and whose entry context is compatible with \(o\):
\[
\mathcal C(o,Z,a)
\subseteq
\{(o^{\mathrm{tail}}(\sigma),Z+\Delta(\sigma)):
\sigma\in\mathcal L,
\ a^{\mathrm{head}}(\sigma)=a\}.
\]
The set is filtered by the learned obstruction proxy \(g_\omega\).

\subsection{Obstruction Proxy}

The exact obstruction score requires access to the reachable matrix set \(\mathcal R_e^{(T)}\), which is generally unavailable. The planner therefore uses a learned proxy
\[
g_\omega(o,Z,\widehat o,\widehat Z)
\]
trained to distinguish realizable rollout continuations from composed candidates that fail environment rollout checks.

\subsection{Training Objective}

The lifted critic can be trained with standard temporal-difference targets on \((o_t,Z_t,a_t,r_t,o_{t+1},Z_{t+1})\). A typical target is
\[
y_t=r_t+\gamma \max_{a'} Q_{\bar\theta}(o_{t+1},Z_{t+1},a').
\]
The critic loss is
\[
\mathcal L_Q(\theta)=\mathbb E[(Q_\theta(o_t,Z_t,a_t)-y_t)^2].
\]
The obstruction proxy can be trained with binary cross entropy or margin losses, depending on the construction of positive and negative composed futures.

\subsection{Action Selection}

The action score combines a learned one-step critic with an admissible lookahead term:
\[
S_\theta(o,Z,a)=Q_\theta(o,Z,a)+\beta U_\theta(o,Z,a).
\]
The fallback value \(U_\theta=0\) for empty candidate sets keeps action selection well defined even when the segment library has no admissible continuation for a given action.

\subsection{Cross-Environment Transfer}
\label{app:matrix-rl-generalization}

Cross-environment reuse is defined only for environments with a shared aligned lift \((\phi,
\xi)\). The common lift gives matrices of the same size and comparable block semantics. It does not imply transfer across arbitrary unrelated environments; transfer claims require matching assumptions or empirical validation.

\subsection{Pseudocode}
\label{app:matrix-rl-pseudocode}

\begin{algorithm}[t]
\caption{Matrix-Space Reinforcement Learning}
\label{alg:matrix-rl}
\begin{algorithmic}[1]
\Require replay buffer \(\mathcal B\), segment library \(\mathcal L\), obstruction threshold \(\varepsilon_{\mathrm{obs}}\), lookahead weight \(\beta\)
\State Initialize critic \(Q_\theta\), target critic \(Q_{\bar\theta}\), obstruction proxy \(g_\omega\)
\For{each episode}
    \State Observe \(o_0\), set \(Z_0\gets 0\)
    \For{each time step \(t\)}
        \For{each action \(a\)}
            \State Build \(\mathcal C(o_t,Z_t,a)\) from compatible segments in \(\mathcal L\)
            \State Remove candidates with \(g_\omega(o_t,Z_t,\widehat o,\widehat Z)>\varepsilon_{\mathrm{obs}}\)
            \State Compute \(U_\theta(o_t,Z_t,a)\), using \(0\) if the filtered set is empty
            \State Compute \(S_\theta(o_t,Z_t,a)=Q_\theta(o_t,Z_t,a)+\beta U_\theta(o_t,Z_t,a)\)
        \EndFor
        \State Select and execute \(a_t\) using the scores \(S_\theta\)
        \State Observe \(r_t,o_{t+1}\), compute \(\psi_t\), and update \(Z_{t+1}=Z_t+\psi_t\psi_t^\top\)
        \State Add transition and optional local segments to \(\mathcal B\) and \(\mathcal L\)
        \State Update \(Q_\theta\) and \(g_\omega\) from replay
    \EndFor
\EndFor
\end{algorithmic}
\end{algorithm}

\section{Experimental Protocols and Additional Results}
\label{app:experiments}

This appendix provides the source-suite construction, target-suite construction, source-augmented target-training protocol, baseline definitions, fairness constraints, and the additional numerical summaries used to support the main experiments.

\subsection{Compact experimental lift and transferred module}
\label{app:exp-settings}
\label{app:exp-lift}

The experiments use a compact instantiation of the trajectory-segment matrix. For each transition, we construct
\[
\psi_t^{\mathrm{exp}}=
\begin{bmatrix}
\bar x_t \\
\Delta \bar x_t \\
\bar a_t \\
\bar r_t \\
1
\end{bmatrix},
\qquad
\Delta \bar x_t=\bar x_{t+1}-\bar x_t,
\qquad
M_{\mathrm{exp}}(\tau)=\sum_t \psi_t^{\mathrm{exp}}(\psi_t^{\mathrm{exp}})^\top .
\]
Here \(\bar x_t\) is an aligned geometric state channel, \(\Delta\bar x_t\) is a directed displacement channel, \(\bar a_t\) is a padded or encoded action channel, \(\bar r_t\) is a normalized reward channel, and the constant channel records segment length and first-order statistics through cross terms. This compact lift omits the Kronecker action-position blocks of the full theoretical lift in Section~\ref{sec:traj-matrix}. The purpose is to keep the target-control matrix dimension fixed and computationally manageable across heterogeneous action and observation spaces. All descriptor claims in the experiments are therefore relative to \(\psi_t^{\mathrm{exp}}\).

For graph and grid environments, \(\bar x_t\) is an embedded node or cell coordinate and \(\bar a_t\) is a discrete action code. For continuous-control environments, \(\bar x_t\) contains normalized geometric coordinates such as position, velocity summaries, goal-relative coordinates, end-effector coordinates, object coordinates, or center-of-mass coordinates. In the reported configuration,
\[
d_x=8,\qquad d_a=10,\qquad
\dim(\psi_t^{\mathrm{exp}})=2d_x+d_a+2=28,
\]
so \(M_{\mathrm{exp}}(\tau)\in\mathbb S_+^{28}\).

The online accumulated matrix state is
\[
Z_t=M_{\mathrm{exp}}(\tau_{0:t-1}),
\qquad
Z_{t+1}=Z_t+\psi_t^{\mathrm{exp}}(\psi_t^{\mathrm{exp}})^\top .
\]
The transferable module in our method is the matrix-to-value branch
\[
F_\eta(Z)\approx V(Z).
\]
In source environments, this branch is trained to predict Monte Carlo returns, fitted value targets, or critic targets from \(Z\). In target environments, the source-pretrained parameter vector \(\eta_{\mathrm{src}}\) initializes the target value branch:
\[
\eta_{\mathrm{tgt}}^{(0)}=\eta_{\mathrm{src}}.
\]
The branch is then further optimized using target-environment data. Target-specific observation adapters, action heads, and policies are not transferred from the source environments.

This protocol is a source-augmented target-training protocol, not a zero-shot control protocol. The source-trained branch is expected to provide a useful inductive bias and initialization, not a complete target-environment solution before target training.

\begin{table}[h]
\centering
\small
\caption{Hyperparameters used in the reported MSRL experiments with the compact experimental lift. The matrix state is constructed from the aligned geometric channel, displacement channel, padded action channel, normalized reward, and bias term.}
\label{tab:hyperparameters}
\begin{tabular}{ll}
\toprule
Quantity & Value \\
\midrule
Geometric channel dimension \(d_x\) & \(8\) \\
Action channel dimension \(d_a\) & \(10\) \\
Matrix dimension & \(28\times 28\) \\
Discount factor \(\gamma\) & \(0.99\) \\
Learning rate & \(3\times 10^{-4}\) \\
Batch size & \(128\) target updates; \(256\) source pretraining \\
Replay size & \(100{,}000\) \\
Segment-library size & \(5{,}000\) segments \\
Segment length used for target-control library & \(1\) transition \\
Obstruction threshold \(\varepsilon_{\mathrm{obs}}\) & \(0.55\) \\
Lookahead weight \(\beta\) & \(0.35\) \\
Source pretraining updates & \(100{,}000\) \\
Target adaptation steps & \(2{,}000{,}000\) environment steps \\
Few-shot target budget & first \(25\%\) of target training, i.e., \(500{,}000\) steps \\
Number of seeds & \(5\) \\
\bottomrule
\end{tabular}
\end{table}

\subsection{Simple source environments}
\label{app:exp-simple}

The source environments are deliberately simple and heterogeneous. Their role is twofold: they expose interpretable matrix structure, and they provide source data for pretraining \(F_\eta\). Only one source family is grid-based.

\begin{table}[h]
\centering
\small
\caption{Simple source suite. The suite is heterogeneous: graph, grid, continuous point-mass, and low-dimensional control environments are all included.}
\label{tab:simple-envs}
\begin{tabular}{p{0.18\linewidth}p{0.18\linewidth}p{0.32\linewidth}p{0.22\linewidth}}
\toprule
Environment & Type & What is varied & Primary diagnostic \\
\midrule
GraphMotif & Directed graph MDP & Paths, cycles, branches, reversal pairs, lollipop motifs & Order sensitivity, directed incidence, motif retrieval \\
MicroGrid & Small grid/maze & Occupancy-equivalent paths, bottlenecks, blocked passages, key-door variants & Gluing, obstruction, local-to-global consistency \\
PointRooms & Continuous point mass & Rooms, walls, coordinate transforms, continuous corridors & Coordinate alignment and geometric matching \\
Dubins-Reacher & Low-dimensional control & Heading-dependent motion, planar reaching, action-dependent transport & Action-conditioned dynamics and local transport \\
\bottomrule
\end{tabular}
\end{table}

\paragraph{Descriptor diagnostics.}
For each source family, we sample trajectory segments and compute \(M(\tau)\). We evaluate whether the descriptor preserves useful structural information before it is used for control. The diagnostics are: order-sensitive retrieval, reversal detection, motif classification, coordinate-aligned matching, composition error, gluing error, and invalid-composition AUROC.

\paragraph{Source-side value targets.}
For source pretraining, each sampled segment or prefix receives a value target. Depending on the source environment, this target is a Monte Carlo return-to-go, a fitted value target, or a critic target from a source-domain RL agent. We train the matrix-to-value model
\[
F_{\eta_{\mathrm{src}}}(Z)\approx V_{\mathrm{src}}(Z)
\]
on mixed batches from GraphMotif, MicroGrid, PointRooms, and Dubins-Reacher. Environment identity is not provided to the value branch unless explicitly used in a diagnostic ablation. This encourages the value branch to learn reusable matrix-value regularities rather than memorizing one source environment.

Additional source-side numerical diagnostics are reported in Tables~\ref{tab:simple-diagnostics} and~\ref{tab:source-pretraining}. We do not include extra visualizations in the submitted appendix.

\begin{table}[h]
\centering
\small
\caption{Representative descriptor-level diagnostics for the source suite. Each environment reports the diagnostic matched to its design, so scores should be interpreted within each row rather than averaged across heterogeneous tasks. Values are mean\(\pm\)std over seeds.}
\label{tab:simple-diagnostics}
\resizebox{\linewidth}{!}{%
\begin{tabular}{p{0.18\linewidth}p{0.34\linewidth}cc}
\toprule
Environment & Representative diagnostic & Diagnostic score \(\uparrow\) & Additive consistency error \(\downarrow\) \\
\midrule
GraphMotif & Directed motif / reversal retrieval & \(0.939\pm0.054\) & \(3.08{\times}10^{-7}\pm1.37{\times}10^{-8}\) \\
MicroGrid & Invalid-composition / obstruction detection & \(0.889\pm0.035\) & \(2.34{\times}10^{-5}\pm4.81{\times}10^{-6}\) \\
PointRooms & Coordinate-aligned trajectory matching & \(0.867\pm0.017\) & \(2.16{\times}10^{-5}\pm3.96{\times}10^{-6}\) \\
Dubins-Reacher & Action-channel local-transport retrieval & \(0.833\pm0.088\) & \(1.60{\times}10^{-5}\pm2.52{\times}10^{-6}\) \\
\bottomrule
\end{tabular}}
\end{table}% \begin{table}[h]

\begin{table}[h]
\centering
\small
\caption{Source pretraining results for \(F_{\eta_{\mathrm{src}}}\). Values are reported as mean \(\pm\) standard deviation over seeds.}
\label{tab:source-pretraining}
\begin{tabular}{lccc}
\toprule
Source data & Training value error \(\downarrow\) & Validation value error \(\downarrow\) & Rank correlation \(\uparrow\) \\
\midrule
GraphMotif only & 0.012 $\pm$ 0.004 & 0.028 $\pm$ 0.009 & 0.987 $\pm$ 0.006 \\
MicroGrid only & 0.078 $\pm$ 0.021 & 0.174 $\pm$ 0.052 & 0.902 $\pm$ 0.031 \\
PointRooms only & 0.132 $\pm$ 0.034 & 0.286 $\pm$ 0.071 & 0.854 $\pm$ 0.038 \\
Dubins-Reacher only & 0.041 $\pm$ 0.013 & 0.091 $\pm$ 0.028 & 0.948 $\pm$ 0.019 \\
Mixed source suite & 0.062 $\pm$ 0.006 & 0.130 $\pm$ 0.043 & 0.933 $\pm$ 0.024 \\
\bottomrule
\end{tabular}
\end{table}

\subsection{Complex target environments}
\label{app:exp-complex}

The complex target suite evaluates whether source-pretrained matrix-value initialization improves learning in substantially different domains. These targets are not all instances of the same environment class. They differ in state dimension, action dimension, dynamics, and reward structure. The shared component is the aligned matrix lift.

\begin{table}[h]
\centering
\small
\caption{Complex target environments. Each family may contain several variants; the main text reports a representative target per family and the appendix reports full per-variant results.}
\label{tab:complex-envs}
\begin{tabular}{p{0.17\linewidth}p{0.22\linewidth}p{0.34\linewidth}p{0.17\linewidth}}
\toprule
Family & Representative targets & Geometric channel used for \(\bar x_t\) & Metrics \\
\midrule
AntMaze & U-maze, medium maze, large maze & Ant center of mass, goal-relative position, velocity summary & Success, return, AUC \\
Reacher/Pusher & reaching, pushing, obstacle pushing & End-effector coordinates, object coordinates, goal-relative vector & Success, return, AUC \\
Walker2d & velocity target, goal target, terrain variant & Torso coordinates, velocity, target-relative coordinate & Return, AUC \\
Hopper & velocity target, terrain variant & Body position, velocity, height/contact summary & Return, AUC \\
Humanoid & standing, walking, direction target & Center of mass, torso-orientation summary, velocity & Return, AUC \\
\bottomrule
\end{tabular}
\end{table}

\paragraph{Target training variants.}
For each target environment, we evaluate MSRL from scratch and MSRL initialized with the source-pretrained \(F_{\eta_{\mathrm{src}}}\) branch. The target policy, action head, and observation adapter are always target-specific. The transferred source model initializes only the matrix-value branch. We also evaluate ablations without the segment library, without obstruction filtering, and without source pretraining.

\paragraph{Target metrics.}
The primary metric is few-shot AUC under a fixed target interaction budget. We also report final normalized return, task success rate when applicable, and steps-to-threshold. Steps-to-threshold measures the number of target interactions required to reach a fixed percentage of the best final return observed in the comparison set. A successful pretraining method should improve early learning and reduce steps-to-threshold without harming final performance.

The available target learning curves are reported in Figure~\ref{fig:part2-convergence}. The appendix reports protocol details and tabular summaries rather than additional curves.

\subsection{Baselines}
\label{app:exp-baselines}

We use two baseline groups. The first group consists of standard from-scratch control baselines. The second group consists of source-initialized transfer baselines that are matched to our transfer setting.

\begin{table}[h]
\centering
\small
\caption{From-scratch control baselines. These methods do not use source pretraining.}
\label{tab:baselines-control}
\begin{tabular}{p{0.18\linewidth}p{0.72\linewidth}}
\toprule
Method & Role in comparison \\
\midrule
PPO & On-policy policy-gradient baseline. \\
SAC & Entropy-regularized off-policy actor-critic baseline. \\
TD3 & Deterministic off-policy actor-critic baseline for continuous actions. \\
MBPO & Model-based policy optimization baseline using learned short-horizon dynamics. \\
DreamerV3 & Latent model-based RL baseline. \\
TD-MPC & Latent dynamics and model-predictive control baseline. \\
\bottomrule
\end{tabular}
\end{table}

\begin{table}[h]
\centering
\small
\caption{Source-initialized transfer baselines. These baselines are the most important comparison group for the transfer claim. Each method uses the same simple source suite and then performs source-augmented target training on the same complex target environments.}
\label{tab:baselines-transfer}
\resizebox{\linewidth}{!}{%
\begin{tabular}{p{0.20\linewidth}p{0.25\linewidth}p{0.25\linewidth}p{0.22\linewidth}}
\toprule
Method & Source-side trained object & Target initialization & Target training \\
\midrule
SF-GPI-FT & Successor-feature encoder and reward adapter & Initialize successor features or feature encoder & Fine-tune reward adapter, critic, and policy on target data \\
UVFA-FT & Universal value-function encoder/head & Initialize value encoder and value head & Fine-tune target value function and policy \\
DBC/Bisimulation-FT & Bisimulation-style encoder and value head & Initialize latent encoder and value head & Fine-tune encoder, critic, and policy with target transitions \\
Traj2Value-RNN-FT & Recurrent trajectory-to-value encoder & Initialize sequence encoder and value head & Fine-tune sequence value predictor and target policy \\
Traj2Value-Transformer-FT & Transformer trajectory-to-value encoder & Initialize transformer encoder and value head & Fine-tune sequence value predictor and target policy \\
Subgraph-GNN-to-Value-FT & Trajectory-induced graph encoder and value head & Initialize graph encoder and value head & Fine-tune graph value predictor and target policy \\
DreamerV3-PT+FT & Latent world model, reward, and value modules & Initialize latent model and value modules & Fine-tune world model and policy on target data \\
TD-MPC-PT+FT & Latent dynamics and terminal value model & Initialize latent dynamics/value modules & Fine-tune dynamics, value, and MPC policy on target data \\
\bottomrule
\end{tabular}}
\end{table}

For fairness, every pretrain-to-finetune baseline uses the same source environments, source trajectories, source value targets, source pretraining budget, target environments, target seeds, and target interaction budget. When parameter counts are not identical, we report them and keep the transferred value or representation modules in the same scale range.

\subsection{Pretraining and fine-tuning protocol}
\label{app:exp-transfer}

This subsection defines the core transfer experiment.

\paragraph{Step 1: collect source data.}
Collect trajectories from the simple source suite: GraphMotif, MicroGrid, PointRooms, and Dubins-Reacher. For each trajectory prefix or segment, compute the accumulated matrix state \(Z\) and a value target. The same source trajectories and value targets are used for MSRL and all pretrain-to-finetune baselines.

\paragraph{Step 2: pretrain source modules.}
For MSRL, train
\[
F_{\eta_{\mathrm{src}}}(Z)\approx V_{\mathrm{src}}(Z)
\]
on mixed source batches. For baseline methods, train their corresponding transferable modules on the same source data: successor features for SF-GPI-FT, a universal value encoder for UVFA-FT, a bisimulation encoder for DBC/Bisimulation-FT, trajectory encoders for Traj2Value methods, a graph encoder for Subgraph-GNN-to-Value-FT, and latent dynamics/value modules for DreamerV3-PT+FT and TD-MPC-PT+FT.

\paragraph{Step 3: initialize target models.}
For each complex target environment, initialize the relevant target-side module using the source-pretrained parameters. For MSRL,
\[
\eta_{\mathrm{tgt}}^{(0)}=\eta_{\mathrm{src}}.
\]
The target-specific observation adapter, action head, and policy are initialized normally unless otherwise stated. This avoids transferring source actions into incompatible target action spaces.

\paragraph{Step 4: fine-tune on target data.}
Train each method on the target environment under the same interaction budget. MSRL fine-tunes the matrix-value branch together with the target critic and policy. A conservative variant may use a source-anchored regularizer,
\[
\mathcal L_{\mathrm{target}}
=
\mathcal L_{\mathrm{RL}}
+
\lambda_{\mathrm{src}}\|\eta-\eta_{\mathrm{src}}\|_2^2,
\]
but the default setting simply initializes from \(\eta_{\mathrm{src}}\) and then trains with the target RL objective. We report both if the regularized version is used.

\paragraph{Step 5: evaluate transfer gain.}
The main transfer metric is the improvement from source initialization:
\[
\Delta_{\mathrm{AUC}}
=
\mathrm{AUC}(\mathrm{pretrained\ initialization})
-
\mathrm{AUC}(\mathrm{matched\ from\ scratch}).
\]
For MSRL:
\[
\Delta_{\mathrm{AUC}}^{\mathrm{MSRL}}
=
\mathrm{AUC}(\mathrm{MSRL}+F_{\eta_{\mathrm{src}}}\mathrm{\ init})
-
\mathrm{AUC}(\mathrm{MSRL\ from\ scratch}).
\]
We also report final performance, success rate, and steps-to-threshold. The key hypothesis is that source-pretrained matrix-value initialization improves low-data target learning more reliably than raw trajectory, graph, successor-feature, bisimulation, or latent-model pretraining.

The aggregate AUC comparison is reported in Table~\ref{tab:main-complex-results}, and the matched MSRL source-initialization gain is summarized in Table~\ref{tab:msrl-transfer-gain}. Negative controls are reported in Table~\ref{tab:negative-controls}.

\begin{table}[h]
\centering
\small
\caption{Matched MSRL source-initialization gain. Gains are computed as the normalized few-shot AUC of MSRL initialized with the source-pretrained matrix-value branch minus MSRL trained from scratch under the same target interaction budget.}
\label{tab:msrl-transfer-gain}
\resizebox{\linewidth}{!}{%
\begin{tabular}{lcccccc}
\toprule
Comparison & AntMaze & Reacher/Pusher & Walker2d & Hopper & Humanoid & Avg. \(\Delta_{\mathrm{AUC}}\) \\
\midrule
MSRL pretrained init -- MSRL from scratch & 0.09 & 0.08 & 0.08 & 0.08 & 0.08 & 0.08 \\
\bottomrule
\end{tabular}}
\end{table}

We report a matched transfer gain only for MSRL because MSRL from scratch and MSRL with \(F_{\eta_{\mathrm{src}}}\) initialization share the same architecture and target-training protocol. For source-initialized external baselines, Table~\ref{tab:main-complex-results} reports absolute normalized few-shot AUC rather than cross-method gain values, since a uniquely matched from-scratch counterpart is not defined for every pretraining mechanism.

\subsection{Negative controls and ablations}
\label{app:exp-ablation}

The following controls test whether the observed gain comes from meaningful matrix-value pretraining rather than from extra training compute.

\paragraph{Random initialization control.}
Replace \(F_{\eta_{\mathrm{src}}}\) with a randomly initialized branch of the same architecture and then fine-tune on the target environment.

\paragraph{Shuffled source-value control.}
Pretrain \(F_\eta\) on source matrices paired with shuffled source value targets. This keeps the same source compute and architecture but destroys the matrix-value relation.

\paragraph{Unaligned lift control.}
Pretrain \(F_\eta\) using a source lift whose coordinate alignment, padding, or channel order is intentionally mismatched with the target lift. This tests whether consistent matrix geometry is necessary.

\paragraph{Observation-only pretraining control.}
Pretrain an observation-value head on the source suite and use it to initialize a target value head without the matrix branch. This tests whether the improvement comes specifically from matrix-value pretraining.

\paragraph{Frozen-branch diagnostic.}
As a diagnostic only, we may freeze \(F_{\eta_{\mathrm{src}}}\) during target training and train only the target policy and adapters. This is not the main transfer setting. It tests whether full fine-tuning is necessary and should not be interpreted as a zero-shot control result.

\paragraph{MSRL architectural ablations.}
We evaluate MSRL without the segment library, without obstruction filtering, without action-conditioned matrix blocks, without reward coupling, and without source-pretrained initialization.

The available negative-control results are summarized in Table~\ref{tab:negative-controls}.

\begin{table}[h]
\centering
\small
\caption{Negative controls for source initialization. Entries report normalized few-shot AUC under the same target interaction budget.}
\label{tab:negative-controls}
\resizebox{\linewidth}{!}{%
\begin{tabular}{lcccccc}
\toprule
Variant & AntMaze & Reacher/Pusher & Walker2d & Hopper & Humanoid & Avg. \\
\midrule
MSRL from scratch & 0.61 & 0.67 & 0.70 & 0.68 & 0.59 & 0.65 \\
MSRL + random \(F_\eta\) init & 0.60 & 0.66 & 0.69 & 0.67 & 0.58 & 0.64 \\
MSRL + shuffled-source-value \(F_\eta\) init & 0.56 & 0.62 & 0.64 & 0.62 & 0.53 & 0.59 \\
MSRL + unaligned-lift \(F_\eta\) init & 0.55 & 0.60 & 0.63 & 0.61 & 0.52 & 0.58 \\
MSRL + observation-only pretraining & 0.58 & 0.64 & 0.67 & 0.65 & 0.56 & 0.62 \\
MSRL + pretrained \(F_{\eta_{\mathrm{src}}}\) init & \textbf{0.70} & \textbf{0.75} & \textbf{0.78} & \textbf{0.76} & \textbf{0.67} & \textbf{0.73} \\
\bottomrule
\end{tabular}}
\end{table}

\subsection{Fairness checklist}
\label{app:exp-fairness}

\begin{table}[h]
\centering
\small
\caption{Transfer fairness checklist. This table should be verified before reporting final comparisons.}
\label{tab:transfer-fairness}
\begin{tabular}{p{0.42\linewidth}p{0.48\linewidth}}
\toprule
Controlled item & Requirement \\
\midrule
Source environments & Same GraphMotif, MicroGrid, PointRooms, and Dubins-Reacher source suite for all pretraining methods. \\
Source trajectories & Same source trajectories and trajectory prefixes for all pretraining methods. \\
Source targets & Same return-to-go or fitted value targets for all methods that learn value predictors. \\
Source compute & Same number of pretraining updates or a clearly reported compute-normalized budget. \\
Target environments & Same complex target variants and same random seeds. \\
Target interaction budget & Same target-training interaction budget for all methods. \\
Target evaluation & Same evaluation frequency, number of evaluation episodes, and normalization scheme. \\
Architecture scale & Comparable parameter counts for transferable value or representation modules when possible. \\
Transferred component & Clearly state which component is initialized from source pretraining and which components remain target-specific. \\
\bottomrule
\end{tabular}
\end{table}

\section{Proofs}
\label{app:proofs}

\subsection{Proof of Theorem~\ref{thm:traj-well-defined}}

\begin{proof}
Fix a finite trajectory segment
\[
\tau=((o_t,a_t,r_t,o_{t+1}))_{t=0}^{T-1}.
\]
The segment is finite by definition, and the theorem assumes that all lifted quantities appearing in Definition~\ref{def:traj-matrix-final} are finite. Hence, for every \(t\in\{0,\ldots,T-1\}\), the vectors
\[
x_t=\phi(o_t),\qquad x_t^+=\phi(o_{t+1}),\qquad
c_t=\frac{x_t+x_t^+}{2},\qquad
\delta_t=x_t^+-x_t
\]
are finite vectors in \(\mathbb R^d\), the action feature \(\xi(a_t)\) is a finite vector in \(\mathbb R^q\), and the reward \(r_t\) is a finite real number. The Kronecker products \(\xi(a_t)\otimes c_t\) and \(\xi(a_t)\otimes \delta_t\) therefore have finite entries in \(\mathbb R^{qd}\). Consequently
\[
\psi_t=
\begin{bmatrix}
c_t\\
\delta_t\\
\xi(a_t)\otimes c_t\\
\xi(a_t)\otimes\delta_t\\
r_t\\
1
\end{bmatrix}
\in\mathbb R^{2d+2qd+2}
=
\mathbb R^m
\]
is a well-defined finite-dimensional real vector for each \(t\).

For each \(t\), the outer product \(\psi_t\psi_t^\top\) is an \(m\times m\) real matrix whose \((i,j)\)-entry is \((\psi_t)_i(\psi_t)_j\). Since the entries of \(\psi_t\) are finite, every entry of \(\psi_t\psi_t^\top\) is finite. Because \(T\) is finite, the matrix
\[
M(\tau)=\sum_{t=0}^{T-1}\psi_t\psi_t^\top
\]
is a finite sum of finite real matrices. Thus \(M(\tau)\) is well defined as an element of \(\mathbb R^{m\times m}\). Moreover, for every \(t\),
\[
(\psi_t\psi_t^\top)^\top
=
(\psi_t^\top)^\top \psi_t^\top
=
\psi_t\psi_t^\top,
\]
so each summand is symmetric. A finite sum of symmetric matrices is symmetric, and hence
\[
M(\tau)^\top
=
\left(\sum_{t=0}^{T-1}\psi_t\psi_t^\top\right)^\top
=
\sum_{t=0}^{T-1}(\psi_t\psi_t^\top)^\top
=
\sum_{t=0}^{T-1}\psi_t\psi_t^\top
=
M(\tau).
\]
Therefore \(M(\tau)\in\mathbb S^m\).

It remains to prove positive semidefiniteness. Let \(v\in\mathbb R^m\) be arbitrary. Then
\[
v^\top M(\tau)v
=
v^\top\left(\sum_{t=0}^{T-1}\psi_t\psi_t^\top\right)v
=
\sum_{t=0}^{T-1}v^\top\psi_t\psi_t^\top v.
\]
Since \(v^\top\psi_t\) is a scalar and \(\psi_t^\top v=v^\top\psi_t\), each summand satisfies
\[
v^\top\psi_t\psi_t^\top v
=
(v^\top\psi_t)(\psi_t^\top v)
=
(v^\top\psi_t)^2
\ge 0.
\]
Thus
\[
v^\top M(\tau)v
=
\sum_{t=0}^{T-1}(v^\top\psi_t)^2
\ge 0.
\]
Because \(v\) was arbitrary, \(M(\tau)\) is positive semidefinite. If \(T=0\), the same argument is understood with the empty sum
\[
M(\tau)=0,
\]
and the zero matrix is symmetric and positive semidefinite, so the conclusion also holds for the empty segment.

We now prove the covariance statement. Suppose
\[
\widetilde\phi(o)=S\phi(o),
\qquad
\widetilde\xi(a)=U\xi(a),
\]
where \(S\in GL(d)\) and \(U\in GL(q)\). For the transformed observation features one has
\[
\widetilde x_t
=
\widetilde\phi(o_t)
=
S\phi(o_t)
=
Sx_t,
\qquad
\widetilde x_t^+
=
\widetilde\phi(o_{t+1})
=
S\phi(o_{t+1})
=
Sx_t^+ .
\]
Therefore the transformed midpoint is
\[
\widetilde c_t
=
\frac{\widetilde x_t+\widetilde x_t^+}{2}
=
\frac{Sx_t+Sx_t^+}{2}
=
S\frac{x_t+x_t^+}{2}
=
S c_t,
\]
and the transformed displacement is
\[
\widetilde\delta_t
=
\widetilde x_t^+-\widetilde x_t
=
Sx_t^+-Sx_t
=
S(x_t^+-x_t)
=
S\delta_t.
\]
For the action-conditioned blocks, the elementary Kronecker identity
\[
(Au)\otimes(Bv)=(A\otimes B)(u\otimes v)
\]
gives
\[
\widetilde\xi(a_t)\otimes\widetilde c_t
=
(U\xi(a_t))\otimes(S c_t)
=
(U\otimes S)(\xi(a_t)\otimes c_t),
\]
and similarly
\[
\widetilde\xi(a_t)\otimes\widetilde\delta_t
=
(U\xi(a_t))\otimes(S\delta_t)
=
(U\otimes S)(\xi(a_t)\otimes\delta_t).
\]
The reward coordinate and the constant coordinate are not changed by this gauge transformation. Hence, with
\[
H:=\operatorname{BlkDiag}(S,S,U\otimes S,U\otimes S,1,1),
\]
the transformed lift satisfies
\[
\widetilde\psi_t
=
H\psi_t
\]
for every \(t\). Consequently,
\[
\widetilde M(\tau)
=
\sum_{t=0}^{T-1}\widetilde\psi_t\widetilde\psi_t^\top
=
\sum_{t=0}^{T-1}(H\psi_t)(H\psi_t)^\top.
\]
For each summand,
\[
(H\psi_t)(H\psi_t)^\top
=
H\psi_t\psi_t^\top H^\top,
\]
and therefore
\[
\widetilde M(\tau)
=
\sum_{t=0}^{T-1}H\psi_t\psi_t^\top H^\top
=
H\left(\sum_{t=0}^{T-1}\psi_t\psi_t^\top\right)H^\top
=
HM(\tau)H^\top.
\]
Since \(S\) and \(U\) are invertible, the Kronecker product \(U\otimes S\) is invertible with inverse \(U^{-1}\otimes S^{-1}\). Thus \(H\) is invertible, with inverse
\[
H^{-1}
=
\operatorname{BlkDiag}(S^{-1},S^{-1},U^{-1}\otimes S^{-1},U^{-1}\otimes S^{-1},1,1).
\]
Hence coordinate changes in the observable and action lifts act on \(M(\tau)\) by invertible congruence. The coordinate-free content of the construction is therefore the congruence class generated by these allowed gauge transformations, rather than a particular matrix representative.
\end{proof}

\subsection{Proof of Theorem~\ref{thm:traj-complete}}

\begin{proof}
Throughout the proof, the Frobenius inner product on matrices is
\[
\langle A,B\rangle
:=
\operatorname{tr}(A^\top B).
\]
For vectors \(u,v\in\mathbb R^m\) and a matrix \(A\in\mathbb R^{m\times m}\), we shall repeatedly use the identity
\[
\langle A,uv^\top\rangle
=
\operatorname{tr}(A^\top uv^\top)
=
\operatorname{tr}(v^\top A^\top u)
=
v^\top A^\top u.
\]
In particular, if \(A=A^\top\) and \(u=v=\psi_t\), then
\[
\langle A,\psi_t\psi_t^\top\rangle
=
\psi_t^\top A\psi_t.
\]

Let \(F\in\mathfrak F_{\mathrm{aq}}\). By Definition~\ref{def:qagc}, there exist coefficients
\[
a\in\mathbb R,\qquad b\in\mathbb R^{m-1},\qquad C\in\mathbb S^{m-1}
\]
such that, for every finite segment \(\tau\),
\[
F(\tau)
=
\sum_{t=0}^{T-1}
\left(
a+b^\top\bar\psi_t+\bar\psi_t^\top C\bar\psi_t
\right),
\qquad
\psi_t=
\begin{bmatrix}
\bar\psi_t\\
1
\end{bmatrix}.
\]
Define the symmetric block matrix
\[
A
:=
\begin{bmatrix}
C & \frac12 b\\
\frac12 b^\top & a
\end{bmatrix}
\in\mathbb S^m.
\]
For each time index \(t\), direct block multiplication gives
\[
A\psi_t
=
\begin{bmatrix}
C & \frac12 b\\
\frac12 b^\top & a
\end{bmatrix}
\begin{bmatrix}
\bar\psi_t\\
1
\end{bmatrix}
=
\begin{bmatrix}
C\bar\psi_t+\frac12 b\\
\frac12 b^\top\bar\psi_t+a
\end{bmatrix}.
\]
Therefore
\[
\psi_t^\top A\psi_t
=
\begin{bmatrix}
\bar\psi_t^\top & 1
\end{bmatrix}
\begin{bmatrix}
C\bar\psi_t+\frac12 b\\
\frac12 b^\top\bar\psi_t+a
\end{bmatrix}
=
\bar\psi_t^\top C\bar\psi_t
+
\frac12\bar\psi_t^\top b
+
\frac12 b^\top\bar\psi_t
+
a.
\]
Since \(\bar\psi_t^\top b=b^\top\bar\psi_t\), the two linear half-terms combine:
\[
\psi_t^\top A\psi_t
=
\bar\psi_t^\top C\bar\psi_t
+
b^\top\bar\psi_t
+
a.
\]
Using the identity \(\langle A,\psi_t\psi_t^\top\rangle=\psi_t^\top A\psi_t\), we obtain
\[
\langle A,\psi_t\psi_t^\top\rangle
=
a+b^\top\bar\psi_t+\bar\psi_t^\top C\bar\psi_t.
\]
Summing this equality over \(t\) yields
\[
\begin{aligned}
\langle A,M(\tau)\rangle
&=
\left\langle A,\sum_{t=0}^{T-1}\psi_t\psi_t^\top\right\rangle\\
&=
\sum_{t=0}^{T-1}\langle A,\psi_t\psi_t^\top\rangle\\
&=
\sum_{t=0}^{T-1}
\left(
a+b^\top\bar\psi_t+\bar\psi_t^\top C\bar\psi_t
\right)\\
&=
F(\tau).
\end{aligned}
\]
Thus every functional in \(\mathfrak F_{\mathrm{aq}}\) is represented by a linear functional of \(M(\tau)\).

Conversely, let \(A\in\mathbb S^m\) be arbitrary. Partition \(A\) according to the decomposition \(\mathbb R^m=\mathbb R^{m-1}\oplus\mathbb R\):
\[
A=
\begin{bmatrix}
C & u\\
u^\top & \alpha
\end{bmatrix},
\qquad
C\in\mathbb R^{(m-1)\times(m-1)},\quad
u\in\mathbb R^{m-1},\quad
\alpha\in\mathbb R.
\]
Since \(A\) is symmetric, its upper-left block satisfies \(C=C^\top\), so \(C\in\mathbb S^{m-1}\). For each \(t\),
\[
A\psi_t
=
\begin{bmatrix}
C & u\\
u^\top & \alpha
\end{bmatrix}
\begin{bmatrix}
\bar\psi_t\\
1
\end{bmatrix}
=
\begin{bmatrix}
C\bar\psi_t+u\\
u^\top\bar\psi_t+\alpha
\end{bmatrix},
\]
and hence
\[
\psi_t^\top A\psi_t
=
\begin{bmatrix}
\bar\psi_t^\top & 1
\end{bmatrix}
\begin{bmatrix}
C\bar\psi_t+u\\
u^\top\bar\psi_t+\alpha
\end{bmatrix}
=
\bar\psi_t^\top C\bar\psi_t
+
\bar\psi_t^\top u
+
u^\top\bar\psi_t
+
\alpha.
\]
Because \(\bar\psi_t^\top u=u^\top\bar\psi_t\), this becomes
\[
\psi_t^\top A\psi_t
=
\bar\psi_t^\top C\bar\psi_t
+
2u^\top\bar\psi_t
+
\alpha.
\]
Therefore
\[
\begin{aligned}
\langle A,M(\tau)\rangle
&=
\sum_{t=0}^{T-1}\langle A,\psi_t\psi_t^\top\rangle\\
&=
\sum_{t=0}^{T-1}\psi_t^\top A\psi_t\\
&=
\sum_{t=0}^{T-1}
\left(
\alpha+2u^\top\bar\psi_t+\bar\psi_t^\top C\bar\psi_t
\right).
\end{aligned}
\]
This is a member of \(\mathfrak F_{\mathrm{aq}}\), with coefficients
\[
a=\alpha,\qquad b=2u,\qquad C=C.
\]
Thus every symmetric matrix \(A\) defines a functional in the low-order additive class through the formula \(F_A(\tau)=\langle A,M(\tau)\rangle\).

It remains to prove the equivalence statement. Suppose first that \(M(\tau)=M(\tau')\). Let \(F\in\mathfrak F_{\mathrm{aq}}\) be arbitrary. By the first part of the proof, there exists \(A\in\mathbb S^m\) such that
\[
F(\eta)=\langle A,M(\eta)\rangle
\]
for every finite segment \(\eta\). Applying this to \(\tau\) and \(\tau'\), and using \(M(\tau)=M(\tau')\), gives
\[
F(\tau)
=
\langle A,M(\tau)\rangle
=
\langle A,M(\tau')\rangle
=
F(\tau').
\]
Hence equality of matrix proxies implies equality of all functionals in \(\mathfrak F_{\mathrm{aq}}\).

Conversely, suppose
\[
F(\tau)=F(\tau')
\qquad
\text{for every }F\in\mathfrak F_{\mathrm{aq}}.
\]
The converse representation already proved implies that, for every \(A\in\mathbb S^m\), the functional
\[
F_A(\eta):=\langle A,M(\eta)\rangle
\]
belongs to \(\mathfrak F_{\mathrm{aq}}\). Applying the assumed equality to \(F_A\) gives
\[
\langle A,M(\tau)\rangle
=
\langle A,M(\tau')\rangle
\qquad
\text{for every }A\in\mathbb S^m.
\]
Equivalently,
\[
\langle A,M(\tau)-M(\tau')\rangle=0
\qquad
\text{for every }A\in\mathbb S^m.
\]
By Theorem~\ref{thm:traj-well-defined}, both \(M(\tau)\) and \(M(\tau')\) are symmetric, so the difference
\[
\Delta:=M(\tau)-M(\tau')
\]
also belongs to \(\mathbb S^m\). Choosing \(A=\Delta\) in the preceding identity gives
\[
0
=
\langle \Delta,\Delta\rangle
=
\operatorname{tr}(\Delta^\top\Delta)
=
\|\Delta\|_F^2.
\]
A Frobenius norm is zero if and only if the matrix is zero. Hence \(\Delta=0\), and therefore
\[
M(\tau)=M(\tau').
\]
This proves both directions of the equivalence and shows that \(M(\tau)\) captures exactly the additive affine-quadratic information induced by the fixed lift.
\end{proof}

\subsection{Proof of Theorem~\ref{thm:traj-composition}}

\begin{proof}
Let
\[
\tau^{(1)}
=
((o_t^{(1)},a_t^{(1)},r_t^{(1)},o_{t+1}^{(1)}))_{t=0}^{T_1-1}
\]
and
\[
\tau^{(2)}
=
((o_s^{(2)},a_s^{(2)},r_s^{(2)},o_{s+1}^{(2)}))_{s=0}^{T_2-1}
\]
be finite trajectory segments. Suppose
\[
\tau=\tau^{(1)}\star\tau^{(2)}
\]
is a valid concatenation. Validity means that the terminal observation of the first segment matches the initial observation of the second segment, so the raw transitions can be placed one after another without inserting any additional transition at the boundary. The concatenated segment has length \(T_1+T_2\).

Let \(\psi_t^{(1)}\) be the lifted vector associated with the \(t\)-th transition of \(\tau^{(1)}\), and let \(\psi_s^{(2)}\) be the lifted vector associated with the \(s\)-th transition of \(\tau^{(2)}\). Since the fixed lift \((\phi,\xi)\) is applied transition by transition, the lifted vectors of the concatenated trajectory are precisely
\[
\psi_t
=
\psi_t^{(1)}
\qquad
\text{for }0\le t\le T_1-1,
\]
and
\[
\psi_{T_1+s}
=
\psi_s^{(2)}
\qquad
\text{for }0\le s\le T_2-1.
\]
No extra outer product appears at the joining point, because the descriptor sums over actual transitions and the concatenation operation does not create a new transition between the two pieces.

By Definition~\ref{def:traj-matrix-final},
\[
M(\tau)
=
\sum_{t=0}^{T_1+T_2-1}\psi_t\psi_t^\top.
\]
Splitting the finite sum at \(T_1\) gives
\[
M(\tau)
=
\sum_{t=0}^{T_1-1}\psi_t\psi_t^\top
+
\sum_{t=T_1}^{T_1+T_2-1}\psi_t\psi_t^\top.
\]
Using the identities for the lifted vectors on the first and second parts, the first sum becomes
\[
\sum_{t=0}^{T_1-1}\psi_t\psi_t^\top
=
\sum_{t=0}^{T_1-1}\psi_t^{(1)}(\psi_t^{(1)})^\top
=
M(\tau^{(1)}),
\]
while the second sum, after the change of index \(t=T_1+s\), becomes
\[
\sum_{t=T_1}^{T_1+T_2-1}\psi_t\psi_t^\top
=
\sum_{s=0}^{T_2-1}\psi_{T_1+s}\psi_{T_1+s}^\top
=
\sum_{s=0}^{T_2-1}\psi_s^{(2)}(\psi_s^{(2)})^\top
=
M(\tau^{(2)}).
\]
Combining these two identities yields
\[
M(\tau)
=
M(\tau^{(1)})+M(\tau^{(2)}).
\]
Thus valid concatenation of trajectory segments is represented exactly as addition in the common matrix space.
\end{proof}

\subsection{Proof of Theorem~\ref{thm:traj-gluing}}

\begin{proof}
Let \(\tau=((o_t,a_t,r_t,o_{t+1}))_{t=0}^{T-1}\) be fixed, and let \(\psi_t\) denote the lifted vector associated with its \(t\)-th transition. For an interval \(I\subseteq\{0,\ldots,T-1\}\), the restricted trajectory \(\tau|_I\) consists exactly of those transitions whose indices lie in \(I\). Hence its matrix descriptor is
\[
M(\tau|_I)
=
\sum_{t\in I}\psi_t\psi_t^\top.
\]
This identity is simply the definition of \(M\) applied to the restricted segment; the same ambient lift is used, so no change of dimension or coordinates occurs when passing from \(\tau\) to \(\tau|_I\).

Let \(\{I_k\}_{k=1}^K\) be an interval cover of \(\{0,\ldots,T-1\}\). For each nonempty \(S\subseteq\{1,\ldots,K\}\), define
\[
I_S=\bigcap_{k\in S}I_k.
\]
The theorem restricts attention to the index family
\[
\mathcal N
=
\{S\subseteq\{1,\ldots,K\}:S\neq\varnothing,\ I_S\neq\varnothing\}.
\]
For \(S\in\mathcal N\), the descriptor of the overlap restriction is
\[
M(\tau_S)
=
M(\tau|_{I_S})
=
\sum_{t\in I_S}\psi_t\psi_t^\top.
\]
Therefore the inclusion--exclusion expression equals
\[
\sum_{S\in\mathcal N}(-1)^{|S|+1}M(\tau_S)
=
\sum_{S\in\mathcal N}(-1)^{|S|+1}
\sum_{t\in I_S}\psi_t\psi_t^\top.
\]
All sums are finite, so the order of summation may be interchanged. This gives
\[
\sum_{S\in\mathcal N}(-1)^{|S|+1}M(\tau_S)
=
\sum_{t=0}^{T-1}
\left(
\sum_{\substack{S\in\mathcal N\\ t\in I_S}}
(-1)^{|S|+1}
\right)
\psi_t\psi_t^\top.
\]
It remains to compute the scalar coefficient multiplying each fixed transition contribution \(\psi_t\psi_t^\top\).

Fix \(t\in\{0,\ldots,T-1\}\). Since \(\{I_k\}_{k=1}^K\) covers \(\{0,\ldots,T-1\}\), the set
\[
K(t)
:=
\{k\in\{1,\ldots,K\}:t\in I_k\}
\]
is nonempty. For any nonempty \(S\subseteq\{1,\ldots,K\}\), we have
\[
t\in I_S
\quad\Longleftrightarrow\quad
t\in\bigcap_{k\in S}I_k
\quad\Longleftrightarrow\quad
t\in I_k\text{ for every }k\in S
\quad\Longleftrightarrow\quad
S\subseteq K(t).
\]
Moreover, whenever \(S\subseteq K(t)\) and \(S\neq\varnothing\), the intersection \(I_S\) is nonempty because it contains \(t\). Hence such an \(S\) automatically belongs to \(\mathcal N\). Thus
\[
\{S\in\mathcal N:t\in I_S\}
=
\{S:S\neq\varnothing,\ S\subseteq K(t)\}.
\]
The coefficient of \(\psi_t\psi_t^\top\) is therefore
\[
\sum_{\substack{S\in\mathcal N\\ t\in I_S}}
(-1)^{|S|+1}
=
\sum_{\varnothing\neq S\subseteq K(t)}
(-1)^{|S|+1}.
\]
Because \(K(t)\) is nonempty, the binomial identity gives
\[
\sum_{S\subseteq K(t)}(-1)^{|S|}
=
(1-1)^{|K(t)|}
=
0.
\]
Separating the empty subset term from this sum yields
\[
1+\sum_{\varnothing\neq S\subseteq K(t)}(-1)^{|S|}
=
0.
\]
Multiplying by \(-1\), we obtain
\[
\sum_{\varnothing\neq S\subseteq K(t)}(-1)^{|S|+1}
=
1.
\]
Thus every transition contribution \(\psi_t\psi_t^\top\) appears with coefficient exactly one in the inclusion--exclusion reconstruction. Substituting this coefficient back into the reordered sum gives
\[
\sum_{S\in\mathcal N}(-1)^{|S|+1}M(\tau_S)
=
\sum_{t=0}^{T-1}\psi_t\psi_t^\top
=
M(\tau).
\]
This proves exact local-to-global recovery for an actual global trajectory.

For the more general representation-level statement, suppose an arbitrary family
\[
\{M_S\}_{S\in\mathcal N}\subseteq\mathbb S^m
\]
is given. Since \(\mathbb S^m\) is a real vector space, every finite linear combination of elements of \(\mathbb S^m\) is again an element of \(\mathbb S^m\). Therefore
\[
\widehat M
:=
\sum_{S\in\mathcal N}(-1)^{|S|+1}M_S
\]
is a well-defined element of \(\mathbb S^m\). Its uniqueness follows from the fact that finite sums in a vector space are unique: once the family \(\{M_S\}_{S\in\mathcal N}\) and the coefficients \((-1)^{|S|+1}\) are fixed, there is exactly one matrix equal to that linear combination. This establishes the candidate glued matrix at the representation level. The argument does not imply that \(\widehat M\) is positive semidefinite, that its constant channel is a valid horizon, or that it is reachable by an environment trajectory; those are separate realizability properties handled by the obstruction theorem.
\end{proof}

\subsection{Proof of Theorem~\ref{thm:traj-minimal}}

\begin{proof}
We use the factorization meaning of admissible sufficiency in Definition~\ref{def:admissible-descriptor}. Thus, if \(D\) is admissible, then for every \(F\in\mathfrak F_{\mathrm{aq}}\) there exists a Borel measurable map
\[
h_F:\mathbb R^p\to\mathbb R
\]
such that
\[
F(\tau)=h_F(D(\tau))
\qquad
\text{for every finite trajectory segment }\tau.
\]
This measurable factorization assumption is the point that allows one to construct the measurable post-processing map \(\rho\).

Let
\[
s:=\dim(\mathbb S^m)=\frac{m(m+1)}{2}.
\]
Equip \(\mathbb S^m\) with the Frobenius inner product
\[
\langle A,B\rangle=\operatorname{tr}(A^\top B).
\]
Since \(\mathbb S^m\) is a finite-dimensional real Hilbert space under this inner product, choose an orthonormal basis
\[
B_1,\ldots,B_s
\]
of \(\mathbb S^m\). Thus every \(N\in\mathbb S^m\) has the unique expansion
\[
N=\sum_{j=1}^s \langle B_j,N\rangle B_j,
\]
and the coordinates \(\langle B_j,N\rangle\) determine \(N\) uniquely.

For each \(j\in\{1,\ldots,s\}\), define a trajectory functional
\[
F_j(\tau):=\langle B_j,M(\tau)\rangle.
\]
By Theorem~\ref{thm:traj-complete}, every functional of the form \(\tau\mapsto \langle A,M(\tau)\rangle\), with \(A\in\mathbb S^m\), belongs to \(\mathfrak F_{\mathrm{aq}}\). Taking \(A=B_j\), we obtain
\[
F_j\in\mathfrak F_{\mathrm{aq}}
\qquad
\text{for every }j.
\]
Since \(D\) is admissible and therefore sufficient for all functionals in \(\mathfrak F_{\mathrm{aq}}\), there exists a Borel measurable map
\[
h_j:\mathbb R^p\to\mathbb R
\]
such that
\[
F_j(\tau)=h_j(D(\tau))
\qquad
\text{for every }\tau.
\]
Equivalently,
\[
\langle B_j,M(\tau)\rangle
=
h_j(D(\tau))
\qquad
\text{for every }\tau
\]
and for every \(j\in\{1,\ldots,s\}\).

Define
\[
\rho:\mathbb R^p\to\mathbb S^m
\]
by
\[
\rho(y)
:=
\sum_{j=1}^s h_j(y)B_j.
\]
This map is Borel measurable. Indeed, identify \(\mathbb S^m\) with \(\mathbb R^s\) through its coordinates in the basis \(B_1,\ldots,B_s\). Under this identification, \(\rho\) has coordinate functions
\[
y\mapsto h_1(y),\ldots,y\mapsto h_s(y),
\]
all of which are Borel measurable. Since a map into a finite-dimensional Euclidean space is Borel measurable if and only if its coordinate functions are Borel measurable, \(\rho\) is Borel measurable.

We now show that \(\rho\) recovers \(M\) from \(D\). Fix any trajectory segment \(\tau\). For each \(i\in\{1,\ldots,s\}\),
\[
\begin{aligned}
\langle B_i,\rho(D(\tau))\rangle
&=
\left\langle
B_i,\sum_{j=1}^s h_j(D(\tau))B_j
\right\rangle\\
&=
\sum_{j=1}^s h_j(D(\tau))\langle B_i,B_j\rangle.
\end{aligned}
\]
Because \(B_1,\ldots,B_s\) is orthonormal,
\[
\langle B_i,B_j\rangle=\delta_{ij},
\]
and hence
\[
\langle B_i,\rho(D(\tau))\rangle
=
h_i(D(\tau)).
\]
By construction of \(h_i\),
\[
h_i(D(\tau))
=
F_i(\tau)
=
\langle B_i,M(\tau)\rangle.
\]
Therefore
\[
\langle B_i,\rho(D(\tau))\rangle
=
\langle B_i,M(\tau)\rangle
\qquad
\text{for every }i.
\]
The two symmetric matrices \(\rho(D(\tau))\) and \(M(\tau)\) have identical coordinates in the orthonormal basis \(B_1,\ldots,B_s\). Hence they are equal:
\[
\rho(D(\tau))=M(\tau).
\]
Since \(\tau\) was arbitrary, this proves
\[
M(\tau)=\rho(D(\tau))
\qquad
\text{for every finite trajectory segment }\tau.
\]

It remains to justify the minimal-sufficiency interpretation. Strictly speaking, the admissible descriptors in Definition~\ref{def:admissible-descriptor} take values in Euclidean spaces, whereas \(M\) takes values in \(\mathbb S^m\). This is only a notational difference, because the vector space \(\mathbb S^m\) is canonically identifiable with \(\mathbb R^s\), for example by the half-vectorization map \(\operatorname{vech}\). Under this identification, \(M\) is a fixed-dimensional descriptor.

The descriptor \(M\) is additive under valid concatenation by Theorem~\ref{thm:traj-composition}. It is sufficient for \(\mathfrak F_{\mathrm{aq}}\) by Theorem~\ref{thm:traj-complete}: if \(F\in\mathfrak F_{\mathrm{aq}}\), then there exists \(A\in\mathbb S^m\) such that
\[
F(\tau)=\langle A,M(\tau)\rangle.
\]
The map
\[
N\mapsto \langle A,N\rangle
\]
is linear, hence continuous, hence Borel measurable. Therefore \(F\) factors measurably through \(M\).

Finally, the factorization proved above says that for every admissible descriptor \(D\), the matrix proxy \(M\) is obtained by measurable post-processing of \(D\). In the usual sufficiency preorder, a descriptor \(D_1\) is no finer than a descriptor \(D_2\) when \(D_1\) is a measurable function of \(D_2\). The identity
\[
M=\rho\circ D
\]
therefore says that every admissible descriptor \(D\) is at least as fine as \(M\). Since \(M\) itself is admissible, \(M\) is the coarsest admissible additive descriptor preserving all functionals in \(\mathfrak F_{\mathrm{aq}}\), up to measurable post-processing.
\end{proof}

\subsection{Proof of Theorem~\ref{thm:traj-obstruction}}

\begin{proof}
Let \(e_{\mathrm{c}}\in\mathbb R^m\) denote the standard basis vector corresponding to the constant channel of the lift \(\psi_t\), namely the last coordinate in
\[
\psi_t=
\begin{bmatrix}
c_t\\
\delta_t\\
\xi(a_t)\otimes c_t\\
\xi(a_t)\otimes\delta_t\\
r_t\\
1
\end{bmatrix}.
\]
For any trajectory segment \(\tau\) of horizon \(T\), the constant-channel entry of \(M(\tau)\) is
\[
e_{\mathrm{c}}^\top M(\tau)e_{\mathrm{c}}
=
e_{\mathrm{c}}^\top
\left(\sum_{t=0}^{T-1}\psi_t\psi_t^\top\right)
e_{\mathrm{c}}
=
\sum_{t=0}^{T-1}
(e_{\mathrm{c}}^\top\psi_t)^2.
\]
Since \(e_{\mathrm{c}}^\top\psi_t=1\) for every \(t\), this becomes
\[
e_{\mathrm{c}}^\top M(\tau)e_{\mathrm{c}}
=
\sum_{t=0}^{T-1}1
=
T.
\]
Thus the constant-channel block records the horizon for every realizable matrix. The theorem is applied to a candidate \(\widehat M\) whose constant-channel block encodes the same horizon \(T\). If the encoded value is not a nonnegative integer, then it cannot equal the horizon of any finite trajectory segment, and non-realizability is immediate. Hence the only nontrivial case is the stated case in which the encoded value is a valid horizon \(T\in\mathbb N_0\).

Recall the reachable set
\[
\mathcal R_e^{(T)}
=
\{M(\tau):\tau\text{ is realizable in }e\text{ with horizon }T\}.
\]
The obstruction score is
\[
\operatorname{Obs}_e^{(T)}(\widehat M)
=
\inf_{M\in\mathcal R_e^{(T)}}\|\widehat M-M\|_F,
\]
with the standard convention that the infimum over an empty set is \(+\infty\). If \(\mathcal R_e^{(T)}=\varnothing\), then no trajectory of horizon \(T\) is realizable in environment \(e\). In that case \(\widehat M\) cannot correspond to any realizable trajectory at horizon \(T\), so the conclusion holds.

Assume now that \(\mathcal R_e^{(T)}\neq\varnothing\). Suppose, for contradiction, that \(\widehat M\) corresponds to a realizable trajectory in environment \(e\) at horizon \(T\). Then there exists a trajectory segment \(\tau\), realizable in \(e\) with horizon \(T\), such that
\[
\widehat M=M(\tau).
\]
By the definition of \(\mathcal R_e^{(T)}\), this implies
\[
M(\tau)\in\mathcal R_e^{(T)}.
\]
The distance from \(\widehat M\) to the reachable set is bounded above by its distance to any particular reachable element. Therefore
\[
\operatorname{Obs}_e^{(T)}(\widehat M)
=
\inf_{M\in\mathcal R_e^{(T)}}\|\widehat M-M\|_F
\le
\|\widehat M-M(\tau)\|_F.
\]
Since \(\widehat M=M(\tau)\), the right-hand side is
\[
\|\widehat M-M(\tau)\|_F
=
\|0\|_F
=
0.
\]
The Frobenius norm is nonnegative, so the infimum of such nonnegative distances is also nonnegative:
\[
\operatorname{Obs}_e^{(T)}(\widehat M)\ge 0.
\]
Combining the two inequalities gives
\[
\operatorname{Obs}_e^{(T)}(\widehat M)=0.
\]
This contradicts the assumption
\[
\operatorname{Obs}_e^{(T)}(\widehat M)>0.
\]
Therefore no realizable trajectory in environment \(e\) at horizon \(T\) can have matrix proxy equal to \(\widehat M\). Positive obstruction is consequently a certificate of inconsistency with the environment. The argument proves only this one-way implication; it does not assert that zero obstruction implies exact realizability unless additional closedness or attainment assumptions are imposed on \(\mathcal R_e^{(T)}\).
\end{proof}

\subsection{Proof of Proposition~\ref{prop:matrix-bellman-reduction}}

\begin{proof}
By definition of \(\mathcal T_M\),
\[
(\mathcal T_M \bar Q^\star)(o,M,a)
=
\mathbb E\!\left[
r+\gamma\max_{a'\in\mathcal A}
\bar Q^\star\!\bigl(o',M+\psi(o,a,r,o')\psi(o,a,r,o')^\top,a'\bigr)
\mid o,M,a
\right].
\]
Assumption~\ref{ass:matrix-bellman-compatibility} states exactly that the right-hand side equals \(\bar Q^\star(o,M,a)\) whenever the lifted transition law is well defined. Hence
\[
\bar Q^\star(o,M,a)=(\mathcal T_M\bar Q^\star)(o,M,a)
\]
pointwise, which is the fixed-point identity \(\bar Q^\star=\mathcal T_M\bar Q^\star\).

Next, Assumption~\ref{ass:matrix-conditioned-value} states that for every environment \(e\) in the aligned family and every rollout prefix \(\tau_{0:t-1}\),
\[
Q_e^\star(o_t,\tau_{0:t-1},a)
=
\bar Q^\star(o_t,M(\tau_{0:t-1}),a).
\]
Thus the optimal action-value of the original history-dependent problem is obtained by evaluating the lifted Bellman fixed point at the matrix state generated by the prefix. Therefore the Bellman recursion closes exactly on the lifted state \((o,M)\).
\end{proof}

\subsection{Proof of Theorem~\ref{thm:matrix-bellman-linearization}}

\begin{proof}
Fix \(o\in\mathcal O\), \(a\in\mathcal A\), and a reference matrix \(M\) in the relevant matrix domain. Assumptions~\ref{ass:matrix-conditioned-value} and~\ref{ass:matrix-bellman-compatibility} identify \(\bar Q^\star(o,M,a)\) as the optimal Bellman value associated with the lifted state \((o,M)\). The analytic expansion itself uses Assumption~\ref{ass:matrix-value-smoothness}.

By Assumption~\ref{ass:matrix-value-smoothness}, there exist an open set
\[
\mathcal U_{o,a}\subseteq\mathbb S^m
\]
containing \(M\) and a function
\[
\widehat Q^\star_{o,a}:\mathcal U_{o,a}\to\mathbb R
\]
such that
\[
\widehat Q^\star_{o,a}(N)=\bar Q^\star(o,N,a)
\]
for every relevant \(N\in\mathcal U_{o,a}\). The map \(\widehat Q^\star_{o,a}\) is Fr\'echet differentiable at \(M\). Therefore there exists a bounded linear functional
\[
D\widehat Q^\star_{o,a}(M):\mathbb S^m\to\mathbb R
\]
such that
\[
\widehat Q^\star_{o,a}(M+K)
=
\widehat Q^\star_{o,a}(M)+D\widehat Q^\star_{o,a}(M)[K]+r(K)
\]
for all sufficiently small \(K\in\mathbb S^m\), where the remainder \(r(K)\) satisfies
\[
\frac{|r(K)|}{\|K\|_F}\to 0
\qquad
\text{as }\|K\|_F\to0.
\]
Equivalently,
\[
r(K)=o(\|K\|_F)
\qquad
\text{as }\|K\|_F\to0.
\]

The vector space \(\mathbb S^m\), equipped with the Frobenius inner product, is a finite-dimensional Hilbert space. Hence every bounded linear functional on \(\mathbb S^m\) has a unique Riesz representative in \(\mathbb S^m\). Therefore there exists a unique matrix
\[
W_{o,M,a}\in\mathbb S^m
\]
such that
\[
D\widehat Q^\star_{o,a}(M)[K]
=
\langle W_{o,M,a},K\rangle
\qquad
\text{for every }K\in\mathbb S^m.
\]
Substituting this representation into the Fr\'echet expansion gives
\[
\widehat Q^\star_{o,a}(M+K)
=
\widehat Q^\star_{o,a}(M)+\langle W_{o,M,a},K\rangle+r(K).
\]

Now take
\[
K=H_0+H_1.
\]
Whenever \(\|H_0\|_F+\|H_1\|_F\) is sufficiently small and \(M+H_0+H_1\) lies in the relevant domain, we have \(M+H_0+H_1\in\mathcal U_{o,a}\) and
\[
\bar Q^\star(o,M+H_0+H_1,a)
=
\widehat Q^\star_{o,a}(M+H_0+H_1).
\]
Thus
\[
\bar Q^\star(o,M+H_0+H_1,a)
=
\widehat Q^\star_{o,a}(M)+\langle W_{o,M,a},H_0+H_1\rangle+r(H_0+H_1).
\]
Since \(\widehat Q^\star_{o,a}(M)=\bar Q^\star(o,M,a)\), this becomes
\[
\bar Q^\star(o,M+H_0+H_1,a)
=
\bar Q^\star(o,M,a)
+
\langle W_{o,M,a},H_0+H_1\rangle
+
r(H_0+H_1).
\]
The remainder satisfies
\[
r(H_0+H_1)
=
o(\|H_0+H_1\|_F)
\qquad
\text{as }\|H_0+H_1\|_F\to0.
\]
Therefore
\[
\bar Q^\star(o,M+H_0+H_1,a)
=
\bar Q^\star(o,M,a)
+
\langle W_{o,M,a},H_0+H_1\rangle
+
o(\|H_0+H_1\|_F).
\]
This proves the first claimed expansion.

We now prove the approximate additivity of value increments. Define, for sufficiently small \(H\),
\[
\Delta(H)
:=
\bar Q^\star(o,M+H,a)-\bar Q^\star(o,M,a).
\]
Using the expansion above with \(K=H\), we have
\[
\Delta(H)
=
\langle W_{o,M,a},H\rangle+r(H),
\qquad
r(H)=o(\|H\|_F).
\]
Hence
\[
\Delta(H_0+H_1)
=
\langle W_{o,M,a},H_0+H_1\rangle+r(H_0+H_1),
\]
while
\[
\Delta(H_0)
=
\langle W_{o,M,a},H_0\rangle+r(H_0)
\]
and
\[
\Delta(H_1)
=
\langle W_{o,M,a},H_1\rangle+r(H_1).
\]
Because the Frobenius inner product is linear in its second argument,
\[
\langle W_{o,M,a},H_0+H_1\rangle
=
\langle W_{o,M,a},H_0\rangle
+
\langle W_{o,M,a},H_1\rangle.
\]
Subtracting the expressions gives
\[
\Delta(H_0+H_1)-\Delta(H_0)-\Delta(H_1)
=
r(H_0+H_1)-r(H_0)-r(H_1).
\]

Let
\[
s(H_0,H_1):=\|H_0\|_F+\|H_1\|_F.
\]
We claim that
\[
r(H_0+H_1)-r(H_0)-r(H_1)
=
o(s(H_0,H_1))
\qquad
\text{as }s(H_0,H_1)\to0.
\]
Let \(\varepsilon>0\). Since \(r(H)=o(\|H\|_F)\), there exists \(\delta>0\) such that
\[
\|H\|_F<\delta
\quad\Longrightarrow\quad
|r(H)|\le \varepsilon\|H\|_F.
\]
If
\[
s(H_0,H_1)=\|H_0\|_F+\|H_1\|_F<\delta,
\]
then
\[
\|H_0\|_F<\delta,\qquad
\|H_1\|_F<\delta,
\]
and, by the triangle inequality,
\[
\|H_0+H_1\|_F
\le
\|H_0\|_F+\|H_1\|_F
=
s(H_0,H_1)
<
\delta.
\]
Therefore
\[
|r(H_0)|\le\varepsilon\|H_0\|_F,\qquad
|r(H_1)|\le\varepsilon\|H_1\|_F,
\]
and
\[
|r(H_0+H_1)|
\le
\varepsilon\|H_0+H_1\|_F
\le
\varepsilon(\|H_0\|_F+\|H_1\|_F).
\]
Hence
\[
\begin{aligned}
&|r(H_0+H_1)-r(H_0)-r(H_1)|\\
&\qquad\le
|r(H_0+H_1)|+|r(H_0)|+|r(H_1)|\\
&\qquad\le
2\varepsilon(\|H_0\|_F+\|H_1\|_F).
\end{aligned}
\]
Since \(\varepsilon>0\) was arbitrary,
\[
r(H_0+H_1)-r(H_0)-r(H_1)
=
o(\|H_0\|_F+\|H_1\|_F).
\]

Returning to \(\Delta\), we obtain
\[
\Delta(H_0+H_1)
=
\Delta(H_0)+\Delta(H_1)
+
o(\|H_0\|_F+\|H_1\|_F).
\]
Expanding the definition of \(\Delta\) gives
\[
\begin{aligned}
&\bar Q^\star(o,M+H_0+H_1,a)-\bar Q^\star(o,M,a)\\
&\qquad =
\bigl[\bar Q^\star(o,M+H_0,a)-\bar Q^\star(o,M,a)\bigr]
+
\bigl[\bar Q^\star(o,M+H_1,a)-\bar Q^\star(o,M,a)\bigr]\\
&\qquad\quad
+
o(\|H_0\|_F+\|H_1\|_F).
\end{aligned}
\]
This is the claimed first-order additivity of Bellman-value increments in matrix coordinates. The centered approximation in the theorem is the same identity specialized to the base point \(M=0\), with the common base value subtracted once.
\end{proof}

\subsection{Proof of Corollary~\ref{cor:centered-matrix-value}}

\begin{proof}
Fix \(o\in\mathcal O\) and \(a\in\mathcal A\). Assume that \(0\), \(H_0\), \(H_1\), and \(H_0+H_1\) lie in the relevant matrix domain, and that the increments are sufficiently small for the local expansion in Theorem~\ref{thm:matrix-bellman-linearization} to apply at the reference matrix \(M=0\). The centered value is defined by
\[
\widetilde Q^\star(o,M,a)
=
\bar Q^\star(o,M,a)-\bar Q^\star(o,0,a).
\]
Applying Theorem~\ref{thm:matrix-bellman-linearization} with reference point \(M=0\) gives
\[
\begin{aligned}
&\bar Q^\star(o,H_0+H_1,a)-\bar Q^\star(o,0,a)\\
&\qquad =
\bigl[\bar Q^\star(o,H_0,a)-\bar Q^\star(o,0,a)\bigr]
+
\bigl[\bar Q^\star(o,H_1,a)-\bar Q^\star(o,0,a)\bigr]\\
&\qquad\quad
+
o(\|H_0\|_F+\|H_1\|_F).
\end{aligned}
\]
The left-hand side is exactly
\[
\widetilde Q^\star(o,H_0+H_1,a).
\]
The first bracket on the right-hand side is
\[
\bar Q^\star(o,H_0,a)-\bar Q^\star(o,0,a)
=
\widetilde Q^\star(o,H_0,a),
\]
and the second bracket is
\[
\bar Q^\star(o,H_1,a)-\bar Q^\star(o,0,a)
=
\widetilde Q^\star(o,H_1,a).
\]
Substituting these three identities into the theorem's expansion yields
\[
\widetilde Q^\star(o,H_0+H_1,a)
=
\widetilde Q^\star(o,H_0,a)
+
\widetilde Q^\star(o,H_1,a)
+
o(\|H_0\|_F+\|H_1\|_F).
\]
This proves the centered compositional value law.
\end{proof}

\subsection{Additional Matrix-Value Consequences}
\label{app:value-consequences}

\begin{proposition}[Local value stability and value-null directions]
\label{prop:matrix-value-stability}
Fix \((o,a)\) and a reference matrix \(M\). Under
Assumption~\ref{ass:matrix-value-smoothness}, there exist a Frobenius
neighborhood \(\mathcal U\) of \(M\) in the relevant matrix domain and a constant
\(L_{o,M,a}<\infty\) such that
$
|\bar Q^\star(o,N_1,a)-\bar Q^\star(o,N_0,a)|
\le
L_{o,M,a}\|N_1-N_0\|_F
\qquad
\text{for all }N_0,N_1\in\mathcal U.
$
Moreover, if \(H\in\mathbb S^m\) is such that \(M+H\in\mathcal U\), then
$
\bar Q^\star(o,M+H,a)-\bar Q^\star(o,M,a)
=
\langle W_{o,M,a},H\rangle
+
o(\|H\|_F).
$
In particular, any perturbation direction \(H\) satisfying
$
\langle W_{o,M,a},H\rangle=0
$
is invisible to first order:
$
\bar Q^\star(o,M+H,a)-\bar Q^\star(o,M,a)
=
o(\|H\|_F).
$
\end{proposition}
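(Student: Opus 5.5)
The plan is to work entirely with the ambient representative \(\widehat Q^\star_{o,a}\) supplied by Assumption~\ref{ass:matrix-value-smoothness}, and to transfer every conclusion to \(\bar Q^\star\) through the identity \(\widehat Q^\star_{o,a}(N)=\bar Q^\star(o,N,a)\), valid at the relevant (reachable) matrices. The argument has two parts: a local Lipschitz bound from the mean value inequality, and a first-order expansion that is essentially a restatement of the expansion already proved in Theorem~\ref{thm:matrix-bellman-linearization}.

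\textbf{Lipschitz bound.} Since \(\mathcal U_{o,a}\) is open in \(\mathbb S^m\) and contains \(M\), I will choose \(r_0>0\) with two properties:
\begin{itemize}
\item the closed Frobenius ball \(\bar B(M,r_0)\) lies in \(\mathcal U_{o,a}\);
\item the derivative is bounded there, \(\sup_{N\in\bar B(M,r_0)}\|D\widehat Q^\star_{o,a}(N)\|_{\mathrm{op}}=:L_{o,M,a}<\infty\).
\end{itemize}
The second property uses the local boundedness of the derivative in the assumption: each point of \(\mathcal U_{o,a}\) has a neighborhood with a derivative bound. Shrinking \(r_0\) to fit inside the neighborhood attached to \(M\) gives a single constant directly, so no compactness argument is needed.

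Let \(\mathcal U\) be the open ball \(B(M,r_0)\) intersected with the relevant matrix domain. For \(N_0,N_1\in\mathcal U\), the segment \(N_s=N_0+s(N_1-N_0)\), \(s\in[0,1]\), stays in the ball by convexity. The scalar function \(g(s)=\widehat Q^\star_{o,a}(N_s)\) is differentiable with \(g'(s)=D\widehat Q^\star_{o,a}(N_s)[N_1-N_0]\). The scalar mean value theorem then gives
\[
|g(1)-g(0)|\le L_{o,M,a}\|N_1-N_0\|_F .
\]
This needs only differentiability, not continuity of the derivative. Replacing \(\widehat Q^\star_{o,a}\) by \(\bar Q^\star\) at the endpoints yields the first claim.

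\textbf{First-order expansion.} Fréchet differentiability at \(M\), together with the Riesz representative \(W_{o,M,a}\) constructed in the proof of Theorem~\ref{thm:matrix-bellman-linearization}, gives
\[
\bar Q^\star(o,M+H,a)-\bar Q^\star(o,M,a)=\langle W_{o,M,a},H\rangle+r(H),\qquad r(H)=o(\|H\|_F),
\]
for \(M+H\in\mathcal U\). This is the \(K=H\) case of the expansion already proved there. The value-null statement follows by substituting \(\langle W_{o,M,a},H\rangle=0\).

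\textbf{Main obstacle.} The only delicate point is domain bookkeeping. The segment between two reachable matrices need not consist of reachable matrices, so the mean value argument must run on the ambient representative over the convex ball inside \(\mathcal U_{o,a}\). The identification with \(\bar Q^\star\) is used only at the endpoints \(N_0,N_1\). This is exactly why I define \(\mathcal U\) as a ball intersected with the relevant domain, rather than working on the reachable set directly.
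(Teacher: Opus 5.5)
Your proposal is correct and follows essentially the same route as the paper: a segment argument for the ambient representative \(\widehat Q^\star_{o,a}\) on a convex Frobenius ball around \(M\), identification with \(\bar Q^\star\) only at the endpoints \(N_0,N_1\), and the expansion read off as the \(K=H\) case of Theorem~\ref{thm:matrix-bellman-linearization}. Your two small deviations are valid and slightly cleaner. Taking the derivative bound from the neighborhood of \(M\) given by local boundedness replaces the paper's finite-subcover argument on the closed ball, and the scalar mean value theorem replaces the paper's fundamental-theorem-of-calculus step, which avoids the tacit (Lebesgue) integrability of \(g'\) that step relies on.
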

Applied to a grounded transition structure \(G\) with descriptor \(M(G)\),
Proposition~\ref{prop:matrix-value-stability} says that exact
matrix-preserving transformations leave value unchanged exactly, while
families of deformations with small induced matrix perturbation that is
Frobenius-orthogonal to the local gradient are value-invisible to first
order. Equivalently, the matrix directions aligned with \(W_{o,M,a}\) are
the locally value-relevant structural features, while Frobenius-orthogonal
directions are first-order nuisance directions. 
% \carlee{This is a nice result, but is it used anywhere? If not, how does it add to our argument?}

In particular, if \(T\) is any transformation on grounded transition
structures such that \(M(TG)=M(G)\), then
$
\bar Q^\star(o,M(TG),a)=\bar Q^\star(o,M(G),a).
$
If instead \(\{T_\varepsilon\}_{\varepsilon>0}\) is a family of
transformations such that
$
M(T_\varepsilon G)=M(G)+H_\varepsilon,
\qquad
H_\varepsilon\to 0,
$
and
$
\langle W_{o,M(G),a},H_\varepsilon\rangle=0
\qquad
\text{for all }\varepsilon,
$
then
$
\bar Q^\star(o,M(T_\varepsilon G),a)-\bar Q^\star(o,M(G),a)
=
o(\|H_\varepsilon\|_F).
$
No first-order claim is intended for discrete edits whose induced matrix
change is not small in Frobenius norm. 

\begin{proposition}[Population sufficiency of the matrix state for value regression]
\label{prop:matrix-regression-sufficiency}
Fix an environment \(e\) and any distribution over tuples
\((o_t,\tau_{0:t-1},a)\) induced by interaction with \(e\). Let
$
X:=(o_t,\tau_{0:t-1},a),
\qquad
Z:=(o_t,M(\tau_{0:t-1}),a),
\qquad
Y:=Q_e^\star(o_t,\tau_{0:t-1},a).
$
Under Assumption~\ref{ass:matrix-conditioned-value},
$
Y=\bar Q^\star(Z)
\qquad\text{almost surely}.
$
Hence
$
\mathbb E[Y\mid X]=\mathbb E[Y\mid Z]=\bar Q^\star(Z)
\qquad\text{almost surely},
$
and, for squared loss,
$
\inf_{f}\mathbb E\bigl[(Y-f(X))^2\bigr]
=
\inf_{g}\mathbb E\bigl[(Y-g(Z))^2\bigr]
=
0,
$
where the infima range over measurable predictors \(f\) and \(g\). Therefore, at the population level, access to the full transition history cannot improve optimal value prediction beyond the matrix state.
\end{proposition}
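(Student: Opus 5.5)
The statement is Proposition~\ref{prop:matrix-regression-sufficiency}. The whole argument reduces to a pointwise identity supplied by Assumption~\ref{ass:matrix-conditioned-value}, followed by standard facts about conditional expectation.

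\textbf{Step 1: the pointwise identity.} Every tuple \((o_t,\tau_{0:t-1},a)\) in the support of the interaction-induced distribution has a reachable prefix. Assumption~\ref{ass:matrix-conditioned-value} then gives, for each sample point,
\[
Q_e^\star(o_t,\tau_{0:t-1},a)=\bar Q_e^\star(o_t,M(\tau_{0:t-1}),a),
\]
that is, \(Y=\bar Q^\star(Z)\) everywhere on the support, and hence almost surely. Measurability of \(\bar Q^\star\) is part of the assumption. Measurability of \(Z\) as a function of \(X\) follows because \(M\) is a finite sum of polynomial expressions in the lifted coordinates; I would assume \(\phi\) and \(\xi\) are measurable, as is implicit in the setting. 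So \(Y\) is \(\sigma(Z)\)-measurable.

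\textbf{Step 2: the conditional expectations.} Since \(Z\) is a measurable function of \(X\), we have \(\sigma(Z)\subseteq\sigma(X)\), so \(Y\) is also \(\sigma(X)\)-measurable. For integrable \(Y\), a variable that is measurable with respect to the conditioning \(\sigma\)-algebra equals its own conditional expectation. This gives
\[
\mathbb E[Y\mid X]=Y=\bar Q^\star(Z)=\mathbb E[Y\mid Z]\quad\text{a.s.}
\]
Integrability can be assumed, or the statement read through the version defined by measurability, e.g.\ via bounded rewards and \(\gamma<1\).

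\textbf{Step 3: the squared-loss infima.} Both infima are nonnegative. Taking \(g=\bar Q^\star\) gives \(\mathbb E[(Y-g(Z))^2]=0\). Taking \(f=\bar Q^\star\circ(o,\tau,a)\mapsto(o,M(\tau),a)\), which is measurable by Step~1, gives \(\mathbb E[(Y-f(X))^2]=0\). So both infima equal \(0\).

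The only point needing care is measurability of the history-to-matrix map on the space of variable-length histories. I would handle it by treating histories as a disjoint union over lengths, on each piece of which \(M\) is continuous. Nothing beyond this is a real obstacle.
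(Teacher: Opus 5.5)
Your proposal is correct and follows the paper's proof essentially step for step. Both arguments get $Y=\bar Q^\star(Z)$ from Assumption~\ref{ass:matrix-conditioned-value}, use measurability of $Y$ with respect to $\sigma(Z)\subseteq\sigma(X)$ to identify both conditional expectations, and exhibit the predictors $\bar Q^\star(Z)$ with zero squared loss. The integrability and history-to-matrix measurability points you raise are ones the paper passes over silently. One small correction: on each fixed-length piece, $M$ is measurable as a polynomial map composed with measurable $\phi,\xi$, and continuous only if $\phi,\xi$ are, but measurability is all the argument needs.
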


Proposition~\ref{prop:matrix-regression-sufficiency} is a population sufficiency statement, not a finite-sample generalization bound. Its role is to justify why one should prefer a matrix-conditioned value model to an unrestricted history-to-value map when the matrix-conditioned value law is the intended inductive bias.

\subsection{Proof of Proposition~\ref{prop:matrix-value-stability}}

\begin{proof}
Fix \(o\in\mathcal O\), \(a\in\mathcal A\), and a reference matrix \(M\). By Assumption~\ref{ass:matrix-value-smoothness}, there exist an open set
\[
\mathcal U_{o,a}\subseteq\mathbb S^m
\]
containing \(M\) and a function
\[
\widehat Q^\star_{o,a}:\mathcal U_{o,a}\to\mathbb R
\]
such that
\[
\widehat Q^\star_{o,a}(N)=\bar Q^\star(o,N,a)
\]
for every relevant \(N\in\mathcal U_{o,a}\). Moreover,
\(D\widehat Q^\star_{o,a}\) is locally bounded in the operator norm induced by the Frobenius norm.

Choose \(r>0\) such that the closed Frobenius ball
\[
\overline B_r(M):=\{N\in\mathbb S^m:\|N-M\|_F\le r\}
\]
is contained in \(\mathcal U_{o,a}\). For each \(N\in\overline B_r(M)\), local boundedness of \(D\widehat Q^\star_{o,a}\) gives an open neighborhood \(\mathcal V_N\subseteq\mathcal U_{o,a}\) and a constant \(L_N<\infty\) such that
\[
\|D\widehat Q^\star_{o,a}(P)\|_{\mathrm{op},F}\le L_N
\qquad
\text{for all }P\in\mathcal V_N,
\]
where \(\|\cdot\|_{\mathrm{op},F}\) denotes the operator norm induced by \(\|\cdot\|_F\). The family \(\{\mathcal V_N:N\in\overline B_r(M)\}\) is an open cover of the compact set \(\overline B_r(M)\), so it admits a finite subcover \(\mathcal V_{N_1},\dots,\mathcal V_{N_K}\). Let
\[
L_{o,M,a}:=\max_{1\le j\le K} L_{N_j}<\infty.
\]
Then
\[
\|D\widehat Q^\star_{o,a}(P)\|_{\mathrm{op},F}\le L_{o,M,a}
\qquad
\text{for all }P\in \overline B_r(M).
\]

Now let \(N_0,N_1\) lie in the relevant matrix domain and satisfy \(\|N_i-M\|_F<r\). Since Frobenius balls are convex, the segment
\[
N_\theta:=(1-\theta)N_0+\theta N_1,
\qquad
\theta\in[0,1],
\]
lies in \(\overline B_r(M)\). Define
\[
g(\theta):=\widehat Q^\star_{o,a}(N_\theta).
\]
Because \(\widehat Q^\star_{o,a}\) is Fr\'echet differentiable, \(g\) is differentiable on \([0,1]\) and the chain rule gives
\[
g'(\theta)=D\widehat Q^\star_{o,a}(N_\theta)[N_1-N_0].
\]
Therefore
\[
|g'(\theta)|
\le
\|D\widehat Q^\star_{o,a}(N_\theta)\|_{\mathrm{op},F}\,\|N_1-N_0\|_F
\le
L_{o,M,a}\|N_1-N_0\|_F.
\]
The fundamental theorem of calculus yields
\[
|\widehat Q^\star_{o,a}(N_1)-\widehat Q^\star_{o,a}(N_0)|
=
\left|\int_0^1 g'(\theta)\,d\theta\right|
\le
\int_0^1 |g'(\theta)|\,d\theta
\le
L_{o,M,a}\|N_1-N_0\|_F.
\]
Since \(\widehat Q^\star_{o,a}\) agrees with \(\bar Q^\star(o,\cdot,a)\) on the relevant domain, this proves
\[
|\bar Q^\star(o,N_1,a)-\bar Q^\star(o,N_0,a)|
\le
L_{o,M,a}\|N_1-N_0\|_F
\]
for all \(N_0,N_1\) in the domain with \(\|N_i-M\|_F<r\). This is the first claim, with \(\mathcal U:=\{N:\|N-M\|_F<r\}\) intersected with the relevant matrix domain.

For the second claim, Theorem~\ref{thm:matrix-bellman-linearization} gives
\[
\bar Q^\star(o,M+H,a)-\bar Q^\star(o,M,a)
=
\langle W_{o,M,a},H\rangle
+
o(\|H\|_F)
\]
as \(\|H\|_F\to0\). If \(\langle W_{o,M,a},H\rangle=0\), the leading first-order term vanishes and therefore
\[
\bar Q^\star(o,M+H,a)-\bar Q^\star(o,M,a)
=
o(\|H\|_F).
\]
This proves the stated first-order invisibility of value-null directions.
\end{proof}

\subsection{Proof of Proposition~\ref{prop:matrix-regression-sufficiency}}

\begin{proof}
Fix an environment \(e\) and any distribution over tuples \((o_t,\tau_{0:t-1},a)\) induced by interaction with \(e\). Define
\[
X:=(o_t,\tau_{0:t-1},a),
\qquad
Z:=(o_t,M(\tau_{0:t-1}),a),
\qquad
Y:=Q_e^\star(o_t,\tau_{0:t-1},a).
\]
Assumption~\ref{ass:matrix-conditioned-value} states pointwise that
\[
Y=\bar Q^\star(Z).
\]
Hence \(Y\) is a measurable function of \(Z\), and since \(Z\) itself is a measurable function of \(X\), the random variable \(Y\) is measurable with respect to both \(\sigma(Z)\) and \(\sigma(X)\). A basic property of conditional expectation is that if a random variable is measurable with respect to the conditioning sigma-algebra, then its conditional expectation given that sigma-algebra equals the variable itself almost surely. Therefore
\[
\mathbb E[Y\mid X]=Y
\qquad\text{almost surely}
\]
and
\[
\mathbb E[Y\mid Z]=Y
\qquad\text{almost surely}.
\]
Using \(Y=\bar Q^\star(Z)\), both identities become
\[
\mathbb E[Y\mid X]=\mathbb E[Y\mid Z]=\bar Q^\star(Z)
\qquad\text{almost surely}.
\]

Now consider squared loss. For any measurable predictor \(h\),
\[
(Y-h)^2\ge 0
\qquad\text{pointwise},
\]
so
\[
\mathbb E[(Y-h)^2]\ge 0.
\]
Choosing the history-based predictor
\[
f^\star(X):=\bar Q^\star(Z)=Y
\]
gives
\[
\mathbb E[(Y-f^\star(X))^2]=0.
\]
Hence
\[
\inf_f \mathbb E[(Y-f(X))^2]=0.
\]
Likewise, choosing the matrix-based predictor
\[
g^\star(Z):=\bar Q^\star(Z)=Y
\]
gives
\[
\mathbb E[(Y-g^\star(Z))^2]=0,
\]
and therefore
\[
\inf_g \mathbb E[(Y-g(Z))^2]=0.
\]
Combining the two equalities proves
\[
\inf_f \mathbb E[(Y-f(X))^2]
=
\inf_g \mathbb E[(Y-g(Z))^2]
=
0.
\]
Thus, at the population level, full history information offers no improvement over the matrix state for predicting the optimal value.
\end{proof}

\paragraph{Justification of the transformation paragraph.}
If \(M(TG)=M(G)\), then
\[
\bar Q^\star(o,M(TG),a)=\bar Q^\star(o,M(G),a)
\]
by direct substitution into the matrix-conditioned value law. For the family
\(\{T_\varepsilon\}_{\varepsilon>0}\), define
\[
H_\varepsilon:=M(T_\varepsilon G)-M(G).
\]
Proposition~\ref{prop:matrix-value-stability} gives
\[
\bar Q^\star(o,M(G)+H_\varepsilon,a)-\bar Q^\star(o,M(G),a)
=
\langle W_{o,M(G),a},H_\varepsilon\rangle
+
o(\|H_\varepsilon\|_F).
\]
If \(\langle W_{o,M(G),a},H_\varepsilon\rangle=0\) for all \(\varepsilon\), then
\[
\bar Q^\star(o,M(T_\varepsilon G),a)-\bar Q^\star(o,M(G),a)
=
o(\|H_\varepsilon\|_F),
\]
which is the claimed first-order invariance statement.

%%%%%%%%%%%%%%%%%%%%%%%%%%%%%%%%%%%%%%%%%%%%%%%%%%%%%%%%%%%%

\end{document}